%% file: DLM-optimal.tex
\documentclass[english]{article}
\usepackage[T1]{fontenc}
\usepackage[latin9]{inputenc}
\usepackage{geometry}
\usepackage[unicode=true,
 bookmarks=false,
 breaklinks=false,
 pdfborder={0 0 1},
 colorlinks=false]{hyperref}
\hypersetup{colorlinks,
linkcolor=red,
anchorcolor=blue,
citecolor=blue,
urlcolor=blue}

\usepackage{bm}
\usepackage{amsmath,amsthm,amssymb,amsfonts}
\usepackage{mathtools}
\usepackage{comment}
\usepackage{natbib}
\usepackage{enumerate}
\usepackage{graphicx}
\usepackage{algorithm}
\usepackage{algpseudocode}
\usepackage{float}
\usepackage{multirow}
\usepackage{dsfont}
\usepackage{tcolorbox}
\usepackage{tabulary}
\usepackage{colortbl}
\usepackage{color}
\usepackage{makecell}
\usepackage{subfigure}

\definecolor{cc}{RGB}{231,117,0}
\definecolor{yx}{RGB}{0, 200, 0}

\allowdisplaybreaks

\newcommand{\defn}{\coloneqq}
\newcommand{\defnrev}{\eqqcolon}

\newcommand{\cS}{\mathcal{S}}
\newcommand{\cA}{\mathcal{A}}
\newcommand{\cB}{\mathcal{B}}
\newcommand{\cC}{\mathcal{C}}

\newcommand{\cI}{\mathcal{I}}
\newcommand{\cJ}{\mathcal{J}}

\newcommand{\cT}{\mathcal{T}}
\newcommand{\cP}{\mathcal{P}}

\newcommand{\bE}{\mathbb{E}}
\newcommand{\bP}{\mathbb{P}}
\newcommand{\bR}{\mathbb{R}}

\newcommand{\bX}{\mathbb{X}}

\newcommand{\KL}{\mathsf{KL}}
\newcommand{\diff}{\,\mathrm{d}}

\newcommand{\numpf}[2]{\overset{(\mathrm{#1})}{#2}}
\newcommand{\ol}{\overline}
\newcommand{\wh}{\widehat}
\newcommand{\wt}{\widetilde}

\newcommand{\data}{\mathsf{data}}

\newcommand{\veps}{\varepsilon}

\newcommand{\setc}{\mathrm{c}}

\newcommand{\E}{\mathbb{E}}
\newcommand{\TC}{\mathsf{TC}}
\newcommand{\DTC}{\mathsf{DTC}}
\newcommand{\tc}{\mathsf{tc}}
\newcommand{\dtc}{\mathsf{dtc}}
\newcommand{\mask}{\mathsf{M}}
\newcommand{\train}{\mathsf{pred}}
\newcommand{\out}{\mathsf{out}}

\title{Adaptation to Intrinsic Dependence in Diffusion Language Models}

\author{
  Yunxiao Zhao\thanks{Department of Industrial and Operations Engineering, University of Michigan, Ann Arbor, USA.
  {Email: \texttt{\{zhyx,cxcai\}@umich.edu}.}
  }
  \and
  Changxiao Cai\footnotemark[1]
}
\date{\today}

\begin{document}

\theoremstyle{plain} 
\newtheorem{lemma}{\bf Lemma} 
\newtheorem{proposition}{\bf Proposition}
\newtheorem{theorem}{\bf Theorem}
\newtheorem{corollary}{\bf Corollary} 
\newtheorem{claim}{\bf Claim}

\theoremstyle{remark}
\newtheorem{assumption}{\bf Assumption} 
\newtheorem{definition}{\bf Definition} 
\newtheorem{condition}{\bf Condition}
\newtheorem{property}{\bf Property} 
\newtheorem{example}{\bf Example}
\newtheorem{fact}{\bf Fact}
\newtheorem{remark}{\bf Remark}

\maketitle 
\setcounter{tocdepth}{2}

\input{abstract}

\medskip

\tableofcontents{}

\input{intro}

\input{related-work}
\input{notation}
\input{problem}

\input{results}
\input{analysis.tex}

\input{experiment.tex}
\input{discussion.tex}

\section*{Acknowledgements}

C.\ Cai is supported in part by the NSF grants DMS-2515333.

\bibliographystyle{apalike}
\bibliography{bibfileDF,bibfileLLM}

\appendix
\input{proof-TC.tex}
\input{proof-TC-f.tex}
\input{proof-DTC.tex}

\input{proof-DTC-g.tex}
\input{proof-aux.tex}

\end{document}

%% file: abstract.tex
\begin{abstract}

Diffusion language models (DLMs) have recently emerged as a promising alternative to autoregressive (AR) approaches, enabling parallel token generation beyond a rigid left-to-right order.
Despite growing empirical success, the theoretical understanding of how unmasking schedules---which specify the order and size of unmasked tokens during sampling---affect generation quality remains limited.
In this work, we introduce a distribution-agnostic unmasking schedule for DLMs that adapts to the (unknown) dependence structure of the target data distribution, without requiring any prior knowledge or hyperparameter tuning.
In contrast to prior deterministic procedures that fix unmasking sizes, our method randomizes the number of tokens revealed at each iteration.
We show that, for two specific parameter choices, the sampling convergence guarantees---measured by Kullback-Leibler (KL) divergence---scale as $\widetilde O(\mathsf{TC}/K)$ and $\widetilde O(\mathsf{DTC}/K)$ respectively. Here, $K$ is the number of iterations, and $\mathsf{TC}$ and $\mathsf{DTC}$ are the total correlation and dual total correlation of the target distribution, capturing the intrinsic dependence structure underlying the data.
Importantly, our guarantees hold in the practically relevant parallel-sampling regime $K<L$ where $L$ is the token sequence length.
These results significantly improve upon prior convergence theories and yield substantial sampling acceleration for low-complexity distributions. 
Overall, our findings unveil the adaptivity of DLMs to intrinsic data structures and shed light on the benefit of randomized unmasking sizes in inference schedule design.
\end{abstract}

%% file: intro.tex
\section{Introduction}
\label{sec:intro}

Diffusion models \citep{sohl2015deep,ho2020denoising,song2020denoising} have become the cornerstone of modern generative modeling in continuous domains such as images and audio~\citep{dhariwal2021diffusion,rombach2022high}.
%
Motivated by these successes, recent work has extended diffusion-based generation to discrete data, most notably natural language, giving rise to \textit{diffusion language models} (DLMs) \citep{sahoo2024simple,nie2025large}.
Unlike autoregressive (AR) paradigms, which decode tokens sequentially in a fixed left-to-right order, DLMs allow multiple tokens to be generated in parallel and in arbitrary orders, offering a promising route toward faster and more flexible inference in large language models (LLMs).

State-of-the-art DLMs are built on \textit{masked diffusion models} (MDMs) \citep{austin2021structured,campbell2022continuous,shi2024simplified}.
An MDM introduces an absorbing state called \textit{mask} and consists of two complementary processes. The forward (masking) process transforms a token sequence $X^{(0)}$ drawn from a data distribution $p_\data$ into a fully masked sequence $X^{(K)}$ by gradually masking tokens:
\begin{align*}
    X^{(0)} \,\overset{\mathsf{mask}}{\longrightarrow}\, X^{(1)}
        \,\overset{\mathsf{mask}}{\longrightarrow}\, X^{(2)}
     \,\overset{\mathsf{mask}}{\longrightarrow}\, \cdots
     \,\overset{\mathsf{mask}}{\longrightarrow}\, X^{(K)}.
\end{align*}
The reverse (unmasking) process seeks to reverse the forward process and iteratively converts a fully masked sequence $Y^{(0)}$ into a fresh sample $Y^{(K)}$ from the data distribution:
\begin{align*}
    Y^{(0)} \,\overset{\mathsf{unmask}}{\longrightarrow}\, Y^{(1)}
     \,\overset{\mathsf{unmask}}{\longrightarrow}\, Y^{(2)} 
     \,\overset{\mathsf{unmask}}{\longrightarrow}\, \cdots
     \,\overset{\mathsf{unmask}}{\longrightarrow}\, Y^{(K)}.
\end{align*}
The core mechanism of the reverse process is to choose a subset of indices to unmask at each iteration and sample their values conditioned on the currently revealed context.
In principle, the idealized update would draw from the true conditional joint distribution of the tokens to be unmasked given the already unmasked tokens. 
Hence, DLM sampling relies on a \textit{mask predictor} that learns these conditional distributions from finite training samples drawn from the target distribution.
One can view the mask predictor as a counterpart of the score estimator in continuous diffusion models~\citep{song2019generative} that enable the transformation from noise to data.

However, exact reverse sampling is computationally prohibitive, since the joint distribution of multiple tokens is generally intractable in high-dimensional settings. 
To address this issue, practical DLMs typically adopt a conditional independence approximation \citep{nie2025large,wu2025fast}, using the \textit{product of conditional marginal distributions} to approximate the true conditional joint distribution of masked tokens.
%
In other words, the mask predictor is trained to learn the conditional marginal distributions of individual unmasked tokens, and predict each masked token independently given the unmasked context.


\subsection{Trade-off between sampling efficiency and accuracy}


A key appeal of DLMs is their potential to break the sequential bottleneck of AR generation. A neural network-based mask predictor can simultaneously compute the conditional marginal distributions for all masked positions in a single forward pass, making it possible to generate multiple tokens in parallel at each iteration.

This parallelism, however, comes with a fundamental statistical price. Since we approximate the conditional joint by a product of conditional marginals and sample each newly unmasked token independently given the \textit{same} partially revealed context, this approximation ignores dependencies among simultaneously generated tokens. Therefore, it induces an inevitable sampling error---even when the true conditional marginals are available.

Consequently, given a fixed pre-trained mask predictor, the sampling efficiency and accuracy of DLM sampling is governed by the \textit{unmasking schedule}, which specifies how many tokens to reveal and which positions to choose at each iteration. 
For example, unmasking all tokens in a single iteration maximizes parallelism but incurs the largest dependence-induced error, whereas unmasking one token at a time eliminates this error but reduces sampling to an effectively AR procedure.

%

Designing unmasking schedules that achieve high-quality generation without sacrificing the computational gains of parallel decoding is therefore a central challenge for DLMs.
Although a range of heuristics have been explored in practice, such as uniform random schedules~\citep{shi2024simplified,sahoo2024simple,nie2025large} and confidence-based procedures~\citep{ben2025accelerated}, a principled theoretical understanding of DLM sampling remains limited.

\paragraph{Prior theory on DLM sampling.}
Recent work has begun to theoretically investigate this speed-accuracy trade-off by analyzing the sampling error in the regime where the number of sampling iterations~$K$ is smaller than the sequence length~$L$.


The work of \citet{li2025breaking} provided the first convergence analysis for DLM sampling, focusing on the uniform random unmasking scheme. It prescribes a batch-size schedule $s_1,\dots,s_K$ with $s_k\geq 1$ and $\sum_{k=1}^K s_k = L$, and at iteration $k$, unmasks $s_k$ tokens selected uniformly at random from the remaining masked tokens.
When the unmasking sizes are balanced across iterations (i.e., $s_k \asymp L/K$ for all $k$), they show that the sampling convergence rate, measured by KL divergence, is bounded by
\begin{align*}
O \bigg( \frac1K \sum_{i=1}^L I(X_i; X_{-i}) \bigg), 
\end{align*}
where $I(X_i; X_{-i})$ is the mutual information between the $i$-token $X_i$ and the rest $X_{-i}\defn(X_j)_{j \neq i}$ under the data distribution $X\sim p_{\data}$.
Similar bounds were also established in \citet{lavenant2025error} later.

Recent work of \citet{chen2025optimal} sharpened the picture by giving an exact characterization of the sampling convergence rate.
The authors then proposed an unmasking schedule that assumes access to accurate estimates $\widehat \TC$ and $\widehat \DTC$ of the \textit{total correlation} (TC) and \textit{dual total correlation} (DTC) of the data distribution (see the formal definitions in Section~\ref{sec:problem}).
They proved that this scheme achieves the 
sampling convergence rate
\begin{align*}
O \bigg( \frac{\log L}{K} \min\{\widehat \TC, \widehat \DTC\} \bigg) .
\end{align*}

Since the TC and DTC satisfy
$
\TC(X) + \DTC(X) = \sum_{i=1}^L I(X_i; X_{-i}),
$
this result improves upon that of \citet{li2025breaking} and demonstrates that, for structured distributions where $\min\{\TC(X),\DTC(X)\}$ is much smaller than $\sum_{i} I(X_i; X_{-i})$, appropriately designed unmasking schedules can significantly improve sampling speed.

\paragraph{Gap.}
Despite these theoretical advances, several critical limitations remain.
First, the analysis in \citet{li2025breaking} did not take intrinsic data structures such as weak dependence into account. The sampling error of the uniform unmasking strategy scales with the sum of mutual information $\sum_i I(X_i;X_{-i})$, which can be suboptimal for structured data distributions.
For instance, in regimes where $I(X_i;X_{-i}) \gtrsim 1$---commonly encountered in practice---the sampling error grows linearly with $L$, making it ineffective for generating long sequences.

While the unmasking schedule proposed by \citet{chen2025optimal} can achieve improved sampling guarantees that depend on the intrinsic structure of the data distribution, it requires reliable estimates of the information-theoretic quantities such as TC, DTC, or related information curves, which are typically unavailable for real-world high-dimensional data distributions.    
In many cases, accurately estimating these quantities may require as much (or more) data than training the DLM itself. Consequently, their schedule is best viewed as an information-theoretic benchmark rather than a directly implementable procedure.

This gap motivates the question that we explore in this paper:
\begin{center}
\textit{Can we design an unmasking schedule that adapts to unknown data structure without requiring any prior knowledge?}
\end{center}

\subsection{Our contributions}
In this paper, we answer the above question affirmatively by proposing a distribution-agnostic, randomized unmasking schedule, revealing how intrinsic dependence structure in the data can be harnessed to speed up sampling.

Specifically, we develop a randomized unmasking strategy that, at each iteration, first randomly selects the unmasking set size from a carefully designed distribution and then unmasks the corresponding number of tokens uniformly at random. 
Crucially, the schedule depends only on the  sequence length~$L$ and the number of iterations~$K$, and requires no prior knowledge or estimation of the target distribution.

We prove that given a pre-trained mask predictor, the resulting sampler achieves expected KL-sampling error bounds that depend on intrinsic information-theoretic measures. With two specific parameter choices, we obtain sampling convergence guarantees
\begin{align*}
\frac{\TC(X)}{K}\log L \qquad\text{and}\qquad  \frac{\DTC(X)}{K - \log (L-1)-1}(\log (L-1)+1),
\end{align*}
where $\TC(X)$ and $\DTC(X)$ denote the TC and DTC of the data distribution, respectively.
Notably, these guarantees hold in the practically relevant parallel generation regime $K<L$.
When the target distribution has low complexity, i.e., $\min\{\DTC(X), \TC(X)\} \ll L$ --- frequently encountered in many practical applications --- our results reveal that DLMs can automatically leverage the favorable low-dimensional structures, without distribution-specific knowledge, estimation, or tuning.
More broadly, these findings deliver the first theory for the dependence-adaptivity of DLMs, and strengthen the state-of-the-art convergence theory.  
Finally, we note that the core design choice of our approach is randomizing unmasking sizes instead of fixing them in advance.
This may shed light on the unmasking schedule design for DLMs in practice.

%% file: related-work.tex

\subsection{Related work}
\label{sec:related-work}

\paragraph{Diffusion language models.}

Early attempts to bring diffusion models to language modeling operated on the continuous embeddings of discrete tokens and applied Gaussian diffusion in continuous spaces \citep{li2022diffusion}.
A major shift came with discrete diffusion models, where \citet{austin2021structured} formulates discrete diffusion as a continuous-time Markov chain (CTMC) with categorical transition kernels, providing a unified framework that encompasses uniform-noise, absorbing (mask), and other structured corruption processes.
The work of \citet{lou2023discrete} proposed score entropy discrete diffusion, introducing a score entropy objective to learn concrete scores and improving diffusion language modeling performance.
Recently, MDMs have proven particularly effective for language generation \citep{sahoo2024simple,shi2024simplified}, and scaling these approaches has led to performance competitive with autoregressive models \citep{nie2025large,labs2025mercury,song2025seed,you2025llada,zhu2025llada}. Beyond mask-based corruption, uniform-noise discrete diffusion has also been explored \citep{schiff2024simple}.

\paragraph{Unmasking schemes.}
A common practical unmasking strategy follows a two-stage template: first specifies a size schedule and then selects that many positions, often uniformly at random, from the remaining masked tokens. Popular deterministic size schedules include cosine schedules \citep{shi2024simplified} and log-linear schedules \citep{sahoo2024simple,lou2023discrete}.
A different line of heuristics is to unmask ``easy'' tokens first, using either confidence-based rules that prioritize positions with high predicted probability \citep{nie2025large,yu2025dimple,wu2025fast} or entropy-based rules that select positions with low predictive entropy \citep{ben2025accelerated}. 
However, such greedy choices can also be myopic: committing early to low-entropy tokens can increase uncertainty for the remaining positions in later iterations by removing informative context or amplifying dependency effects. To address this limitation, more structured approaches have been proposed, including learning unmasking policies via reinforcement learning \citep{jazbec2025learningunmaskingpoliciesdiffusion} and schedule design via planning or search procedures \citep{fu2025bitsroundsparalleldecoding}.

\paragraph{Theory for diffusion models.}

Our results connect to a well-developed literature on convergence guarantees for continuous diffusion samplers \citep{lee2022convergence,chen2022sampling,lee2023convergence,chen2023improved,benton2023linear,li2024d,li2024sharp}, which studies given a pre-trained model, how fast the sampling iterate converges to the target distribution in $\bR^D$ as the number of iterations $K$ increases.
State-of-the-art theory \citep{jiao2025optimal,zhang2025sublinear} establishes convergence rates of $\wt{O}(D/T^2)$ in KL divergence and $\wt{O}(\sqrt{D}/K)$ in total variation distance, and a variety of provably accelerated samplers have been developed \citep{li2024provable,li2024improved,gatmiry2026high,huang2024convergence,li2024accelerating,li2025faster}.
The adaptivity to (unknown) low-complexity structure has also been studied recently \citep{li2024adapting,liang2025low,huang2024denoising,potaptchik2024linear,li2025dimension}, improving the dependence on the ambient dimension $D$ to some intrinsic dimension $d$ of the data distribution.
Thus for $d \ll D$, continuous diffusion samplers can exploit low-dimensional structures to accelerate sampling.
In contrast, convergence theory for discrete diffusion models remains comparatively nascent. Existing analysis typically requires the number of iterations to scale at least with the data dimension or sequence length \citep{chen2024convergence,liang2025absorb,feng2025theoretical,ren2025fast,ren2024discrete}, and therefore does not accommodate the parallel-generation regime of DLMs.
Beyond convergence guarantees, end-to-end statistical guarantees by tracking the error contributions from both the training and sampling phases have also been developed \citep{oko2023diffusion,wibisono2024optimal,zhang2024minimax,cai2025minimax,gatmiry2024learning,chen2024learning}.

\paragraph{Concurrent work.}
In a concurrent and independent work, \citet{dmitriev2026efficient} propose a $\tau$-leaping-based sampler for MDMs under a CTMC formulation. Their method also adapts to intrinsic data structure, with sampling convergence rate characterized by the TC and DTC of the target data distribution.
Our emphasis is complementary: we study the explicit design of unmasking schedules directly in discrete-time DLMs, without invoking a CTMC perspective.
In addition, our iteration complexity does not scale with the vocabulary size, while the $\tau$-leaping approach incurs a dependence on the vocabulary size.
Furthermore, our theoretical analysis directly tracks the error propagation across unmasking iterations, which may be of independent interest for studying DLM samplers.

%% file: notation.tex
\subsection{Notation}
\label{sec:notation}
For any positive integer $n$, we use $[n]$ to denote the set $\{1, 2, \ldots, n\}$. For any set $S$, let $|S|$ denote its cardinality.
Let $\bX$ denote the vocabulary. We use $\mask$ to denote the mask and extend the vocabulary $\bX$ by including a single point $\{\mask\}$ to obtain $\ol \bX = \bX \cup \{\mask\}$.
For any sequence $x = (x_1, x_2, \dots, x_L) \in \bX^L$ and index set $S \subseteq [L]$, we denote by $x_S = (x_i)_{i \in S} \in \bX^{|S|}$ the subsequence of $x$ at positions in $S$. In addition, let $\cP_S:\bX^L \to \ol\bX^L$ denote the projection defined as
\begin{align}\label{eq:projection}
    \big[\cP_S(x)\big]_i = x_i ~\text{ if } i \in S, ~\text{ and }~ \mask \text{ otherwise}.
\end{align}
For a random variable $X$, we write $p_X$ for its distribution and density interchangeably depending on the context.
For random vectors $X,Y$ with distributions $p_X$ and $p_Y$, we denote by $\KL(p_X\,\|\, p_Y)\defn\int p_X(x)\log\frac{p_X(x)}{p_Y(x)} \diff x$ the Kullback-Leibler divergence between $p_X$ and $p_Y$.

For two functions $f(n),g(n) > 0$, we write $f(n)\lesssim g(n)$ or $f(n)=O\big(g(n)\big)$ if there exists some absolute constant $C>0$ such that $f(n)\leq Cg(n)$. 
Similarly, $f(n)\gtrsim g(n)$ or $f(n)=\Omega\big(g(n)\big)$ means $f(n)\geq C'g(n)$ for some absolute constant $C'>0$.
We write $f(n)\asymp g(n)$ or $f(n)=\Theta\big(g(n)\big)$ if $Cg(n)\leq f(n)\leq C'g(n)$ for some absolute constants $C' > C > 0$. Additionally $\wt O$, $\wt \Omega$, and $\wt \Theta$ are used to denote the corresponding asymptotic notation with logarithmic factors.


%% file: problem.tex
\section{Preliminaries}
\label{sec:problem}

In this section, we review the basics of MDMs and DLMs.

\paragraph{Forward (masking) process.}

Let $X^{(0)} = (X_1^{(0)}, X_2^{(0)}, \dots, X_L^{(0)}) \in \bX^L$ be a token sequence drawn from the data distribution $p_\data$. 
The forward process corrupts $X^{(0)}$ over $K$ steps by progressively replacing tokens with the mask symbol $\mask$ until all indices are masked.

Specifically, let $T = (T^{(1)}, T^{(2)}, \ldots, T^{(K)})$ be a partition of the index set $[L]$, 
i.e., $T^{(i)} \cap T^{(j)} = \varnothing$ for $i\neq j$ and $\cup_{k=1}^K T^{(k)} = [L]$. For each $k\in[K]$, $T^{(k)} \subseteq [L]$ represents the set of token positions newly masked at iteration $k$, and we define $M^{(k)} \defn \cup_{j=1}^k T^{(j)}$ the cumulative set of masked positions up to iteration~$k$.

Let $X^{(k)}$ denote the partially masked sequence after $k$ masking steps.
Specifically, we update the sequence $X^{(k)}$ from $X^{(k-1)}$ by masking the tokens at positions in set $T^{(k)}$:
\begin{align*}
    X^{(k)}_i = \begin{cases}
        \mask, & \text{if } i \in T^{(k)}, \\
        X^{(k-1)}_i, & \text{if } i \notin T^{(k)}.
    \end{cases}
\end{align*}
Equivalently, tokens of the original sequence $X^{(0)}$ at positions in $M^{(k)}$ are masked in $X^{(k)}$, i.e.,
$
    X^{(k)} = \cP_{[L]\setminus M^{(k)}}(X^{(0)}).
$
After $K$ steps, the sequence is fully masked, i.e., $X^{(K)}=(\mask, \mask, \ldots, \mask) \in \ol\bX^L$.

\paragraph{Mask predictor training.}
DLMs generate text by learning to reverse the above forward process, thereby transforming a fully masked sequence into a new sample from $p_\data$. 
This time-reversal is realized by a mask predictor, which aims to learn the conditional distribution of masked tokens given the unmasked ones at each step of the forward process.

As discussed in Section~\ref{sec:intro}, directly modeling the joint conditional distribution of multiple masked tokens can be intractable in high dimensions. Therefore, DLMs adopt a factorized approximation. 
Concretely, for $i\in[L]$, we denote the true conditional marginal of $X_i^{(0)}$ given an arbitrary set $S$ of unmasked tokens (under the data distribution $p_\data$) as
\begin{align}\label{eq:true-conditional}
    p^\star_{i}( x\mid z_S) \defn \bP \big\{X^{(0)}_i =  x\mid X^{(0)}_S = z_S\big\}
\end{align}
for any $S \subseteq [L] \setminus \{i\}$, $z_S\in \bX^{|S|}$, and $x\in \bX$.
The goal is to build an estimator $\wh p_{i}(\cdot \mid \cdot)$ for each conditional marginal $p^\star_i(\cdot \mid \cdot)$ and construct a mask predictor $\wh p = \prod_{i=1}^L\wh p_{i}$.

The mask predictor $\wh p$ is trained via maximizing the log-likelihood of the masked tokens given observed tokens (or equivalently, minimize the KL divergence between the data and sampled distributions):
\begin{align} \label{eq:objective}
\max_{p = \prod_{i=1}^L p_i
} \E_{X^{(0)}, S,\tau}\Bigg[\frac{L}{|M^{(\tau)}|}\sum_{i\in M^{(\tau)}} \log p_i\big(X^{(0)}_{i} \mid X^{(\tau)} \big)\Bigg].
\end{align}
Here, $X^{(0)}$ is a training sample drawn from $p_\data$, $S = (S^{(1)},\dots,S^{(K)})$ is a sequence of unmasking sets  forming a partition of $[L]$, $\tau$ is an iteration index, $M^{(\tau)} = \cup_{k=\tau}^K S^{(k)}$ is the set of masked tokens at step $\tau$, and $X^{(\tau)} = \cP_{[L]\setminus M^{(\tau)}}(X^{(0)})$ is the corresponding partially masked sequence. The expectation is over $X^{(0)}\sim p_\data$, the random unmasking sets $S$ drawn from a certain schedule, and the random time $\tau\in[K]$ sampled according to $\bP\{\tau = k \mid S \} = |S^{(k)}|/L$. 
Equivalently, each unmasking set $S^{(k)}$ corresponds to the masking set $T^{(K-k+1)}$ in the forward process, and the objective is to predict conditional marginals of masked tokens at various intermediate masking levels.
In practice, this objective is approximated by averaging over finite training samples.

As a note, one can verify that the product  of the true conditional marginals $\prod_{i=1}^L p^\star_i$ defined in \eqref{eq:true-conditional} is the optimal solution to the above minimization problem \eqref{eq:objective}.

\paragraph{Reverse (unmasking) process.}

After a mask predictor $\wh p$ is trained, a $K$-step DLM sampling procedure generates a token sequence of length $L$ by iteratively unmasking a masked sequence as follows.

Beginning at step $0$, we initialize the sequence $Y^{(0)} = (\mask, \mask, \ldots, \mask)\in \ol\bX^L$ and the cumulative unmasking set $U^{(0)} = \varnothing$.
Then, for each iteration $k = 1, 2, \ldots, K$, we repeat the following two steps:
\begin{enumerate}
    \item \textit{Unmasking set selection.} We first select a subset $S^{(k)}$ from the currently masked positions $ [L]\setminus U^{(k-1)}$ according to some policy $\pi^{(k)} $, which defines a distribution over all the subsets of $[L] \setminus U^{(k-1)}$.
    \item \textit{Token sampling.} Next, given the previous iterate $Y^{(k-1)}$, we sample the tokens from set $S^{(k)}$ \textit{independently} and \textit{simultaneously}, using the learned mask predictor $\wh p = \prod_{i=1}^L \wh p_i$:
\begin{align}
    \wh Y_i \sim \wh p_{i}\big(\cdot \mid Y^{(k-1)}\big),\quad \forall\,i \in S^{(k)}.
\end{align}
    We then update the sequence $Y^{(k)}$ by filling in the newly sampled tokens:
    \begin{align}
    Y^{(k)}_i = \begin{cases}
            \wh Y_i, & \text{if } i \in S^{(k)}, \\
            Y^{(k-1)}_i, & \text{if } i \notin S^{(k)}, 
        \end{cases}
    \end{align}
    and update the cumulative unmasking set 
    $U^{(k)} = U^{(k-1)} \cup S^{(k)}$.
\end{enumerate}
The final iterate $Y_\out \defn Y^{(K)}$ is outputted as the generated sample.

It is worth noting that given a pre-trained mask predictor $\wh p$ and a fixed sequence of unmasking sets $S = (S^{(1)}, S^{(2)}, \ldots, S^{(K)})$ the distribution of the output sample $Y_\out$ is uniquely determined. 
Therefore, we may write $Y_\out = Y_\out^S$ to emphasize the dependence of the generated sample on a specific sequence of unmasking sets $S$.

More generally, since the unmasking set sequence $S$ is selected according to the unmasking scheme $\pi = (\pi^{(1)}, \pi^{(2)}, \ldots, \pi^{(K)})$, the distribution of the generated sample $Y_\out$ is fully characterized by $\pi$ (along with $\wh p$) through the induced distribution over $S$.

\paragraph{Information-theoretic measures.}
Finally, we review several information-theoretic measures that will be used in this paper. Let $X = (X_1, X_2, \dots, X_L)\in \bX^L$ be a random sequence with joint distribution~$p_X$. 

For any sets $S,T\in[L]$, we use $H(X_S)$ to denote the entropy of the random sequence $X_S = (X_i)_{i \in S}$, and define the conditional entropy of $X_S$ given $X_T$ as
    \begin{align}
        H(X_S \mid X_T) & \defn \bE_{z\sim p_{X_T}}[H(X_S \mid X_T = z)]. 
    \end{align}

The \textit{total correlation} (TC) of $X_S$ is defined as:
    \begin{align}
    \TC(X_S) \defn \sum_{i \in S} H(X_i) - H(X_S).
    \end{align}
    Intuitively, TC measures the global dependence among the variables in $X_S$ by quantifying the distance between the joint distribution $p_{X_S}$ and the fully factorized distribution $\prod_{i \in S} p_{X_i}$.
The \textit{conditional total correlation} of $X_S$ given $X_T$ is given by
    \begin{align}
        \TC(X_S \mid X_T) \defn \sum_{i \in S} H(X_i \mid X_T) - H(X_S \mid X_T).
    \end{align}
The \textit{dual total correlation} (DTC) of $X_S$ is defined as:
    \begin{align}
        \DTC (X_S) \defn H(X_S) - \sum_{i \in S} H \big(X_i \mid X_{S \setminus \{i\}}\big).
    \end{align}
DTC has been shown to characterize how well the distribution can be approximated by a sparse mixture of product distributions~\citep{austin2018multi,chen2025optimal}.
Moreover, when each $X_i$ is determined by the rest, all the conditional entropies are zero, and the DTC equals the entropy, i.e., $\DTC (X_S) = H(X_S)$.

The \textit{conditional dual total correlation} of $X_S$ given $X_T$ is defined as
    \begin{align}
        \DTC (X_S \mid X_T) = H(X_S \mid X_T) - \sum_{i \in S} H\big(X_i \mid X_{(S \setminus \{i\}) \cup T}\big).
    \end{align}

Finally, we use $H(X)$ and $H(p_X)$ interchangeably to denote the entropy of $X$, and likewise for other information-theoretic quantities.

%% file: results.tex
\section{Main results}
\label{sec:results}

In this section, we present our proposed unmasking scheme along with its theoretical guarantee on the sampling error measured by KL divergence.

\subsection{Unmasking scheme}
We propose a distribution-agnostic randomized unmasking procedure, denoted by $\pi = \pi(K, [L])$, that unmasks a length-$L$ sequence over $K$ iterations. 
%
For clarity, we describe the scheme in a slightly more general form \(\pi(K,\cI)\), where \(\cI\subseteq[L]\) represents the set of indices that need to be unmasked.

At a high level, $\pi(K, \cI)$ operates recursively. In the first iteration $t=1$, it samples an unmasking set $S^{(1)}\subseteq \cI$. It then applies the same procedure $\pi(K-1, \cI \setminus S^{(1)})$ to the remaining masked positions $\cI \setminus S^{(1)}$ for the next $K-1$ iterations, and generates the unmasking sets $S^{(2)}, \ldots, S^{(K)}$ sequentially.

The key design choice is how to sample the first unmasking set $S^{(1)}$ from $\cI$. We do this in two stages:
\begin{itemize}
    \item \textbf{Sample a batch size.} We first draw $\ell=|S^{(1)}|$ by assigning weights $w_\ell(K,|\cI|)$ for $\ell \geq 1$.
    \item \textbf{Sample a subset of that size.} Conditioned on $\ell$, we choose $S^{(1)}$ uniformly at random among all $\ell$-subsets of \(\cI\).
\end{itemize}

The batch-size distribution is governed by a scalar coefficient $f(K,L')$ (defined for $K \geq 1$ and $L' \geq K$), which will later appear as the leading factor in our KL sampling error bounds for the unmasking schedule~$\pi$.
%
%
Importantly, for fixed $(K,|\cI|)$, the weights $\{w_\ell(K,|\cI|)\}_{\ell \geq 1}$ depend only on coefficients $f(K-1,\cdot)$ from the previous recursion level $K-1$. As a result, the entire schedule can be computed inductively starting from the base case $K=1$.

\paragraph{General framework.}
We now formalize the general construction of the unmasking scheme $\pi(K, \cI)$ in detail.
This construction provides a template for a family of randomized unmasking schemes, parameterized by the coefficient \(f(K,L')\) and the associated batch-size weights \(\{w_\ell(K,|\cI|)\}_{\ell\ge1}\).
We will later specify these quantities in two concrete ways, leading to a TC-adaptive scheme $\pi_\tc$ and a DTC-adaptive scheme $\pi_\dtc$.

\medskip
\noindent\textbf{Base case ($K = 1$).}
For any index set $ \cI $ with $|\cI| \geq 1$, the scheme $\pi(1, \cI)$ simply unmasks all tokens in $\cI$ in a single iteration, i.e., $\bP\{S^{(1)} = \cI\} = 1$.
In addition, we initialize the coefficient $f(1, L')$ for all $1 \leq L' \leq L$.

\medskip
\noindent\textbf{Recursion ($K \geq 2$).}
Assume that the scheme $\pi(K-1, \cJ)$ has been defined for any index set $\cJ$ with $K-1 \leq |\cJ| \leq L$, and that the coefficient $f(K-1, L')$ has been defined for all $K-1 \leq L' \leq L$.
We now construct $\pi(K, \cI)$ for any index set $\cI$ with $K \leq |\cI| \leq L$. 
For notational convenience, we denote by $L' := |\cI|$ the size of set $\cI$ to be unmasked.
\begin{itemize}
    \item 
\textit{Step 1: sample the first unmasking set $S^{(1)}$.}
    Since we must leave at least $K-1$ positions masked for the remaining $K-1$ iterations, the first batch size must satisfy $|S^{(1)}|\in\{1, 2, \ldots, L'-K+1\}$.
We assign each possible size $\ell$ a nonnegative weight 
\begin{align}\label{eq:wl-def}
w_\ell (K,L'), \quad \forall\,\ell = 1, 2, \ldots, L'-K+1,
\end{align}
which is determined by the coefficient $f(K-1, \cdot)$ at level $K-1$,
and define the normalizing constant 
\begin{align}\label{eq:Psi-def}
\Psi(K,L') \defn \sum_{\ell=1}^{L'-K+1} w_\ell(K,L').
\end{align}
%
Using these weights, we draw the size of the first unmasking set $|S^{(1)}|$ according to
\begin{align}
    \notag
\bP\{|S^{(1)}| = \ell\} = \frac{w_\ell(K,L')}{\Psi(K,L')}, \quad \forall\,\ell = 1, 2, \ldots, L'-K+1.
\end{align}
%
Conditioned on $|S^{(1)}| = \ell$, we then sample $S^{(1)}$ uniformly at random among all subsets of $\cI$ with size~$\ell$.

Finally, we compute the level-$K$ coefficient $f(K,\cdot)$ based on the previously defined $f(K-1,\cdot)$ and the normalizing constant $\Psi(K,\cdot)$ at level $K$.

\item \textit{Step 2: recurse on the remaining indices.}
    Having sampled the first set $S^{(1)}$, we apply the same procedure to the remaining masked set $\cI \setminus S^{(1)}$ for the next $K-1$ iterations.
    Concretely, for $k=2,3,\ldots,K$, we set
    \begin{align}
        \pi^{(k)}(K, \cI) = \pi^{(k-1)}(K-1, \cI \setminus S^{(1)}).
    \end{align}
\end{itemize}




\begin{remark}
The unmasking scheme $\pi(K, \cI)$ is completely specified by the number of iterations $K$ and index set $\cI$. In particular, it does not require prior knowledge of the data distribution $p_{\data}$, estimation of information-theoretic quantities such as TC and DTC, or hyperparameter tuning.
\end{remark}

Unlike procedures that fix the unmasking sizes $|S^{(1)}|,\dots,|S^{(K)}|$ in advance \citep{li2025breaking,chen2025optimal}, our scheme randomizes the unmasking size at each iteration. As we show in the analysis, this randomization is crucial for adapting to the unknown dependence structure of $p_{\data}$ while remaining distribution-agnostic.

\begin{figure}[t]
\centering
\begin{tabular}{cc}
\includegraphics[width=0.425\textwidth]{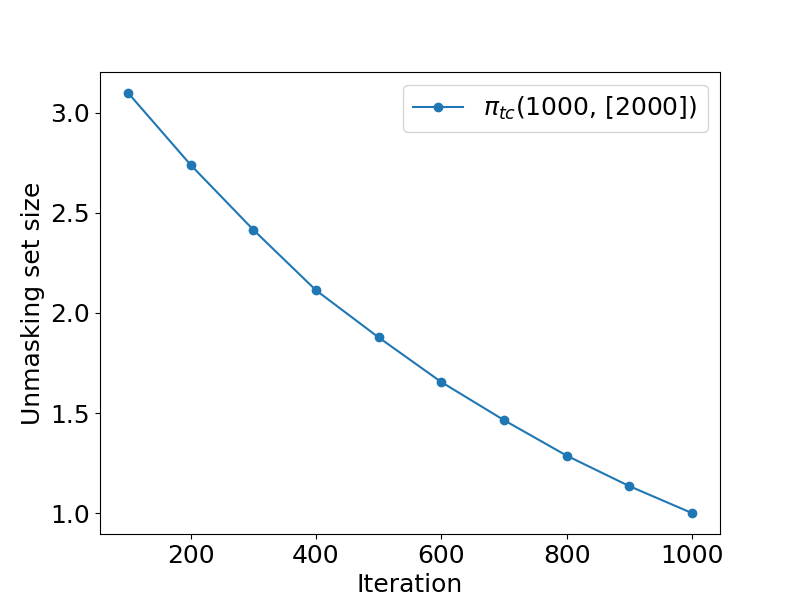}
& \includegraphics[width=0.425\textwidth]{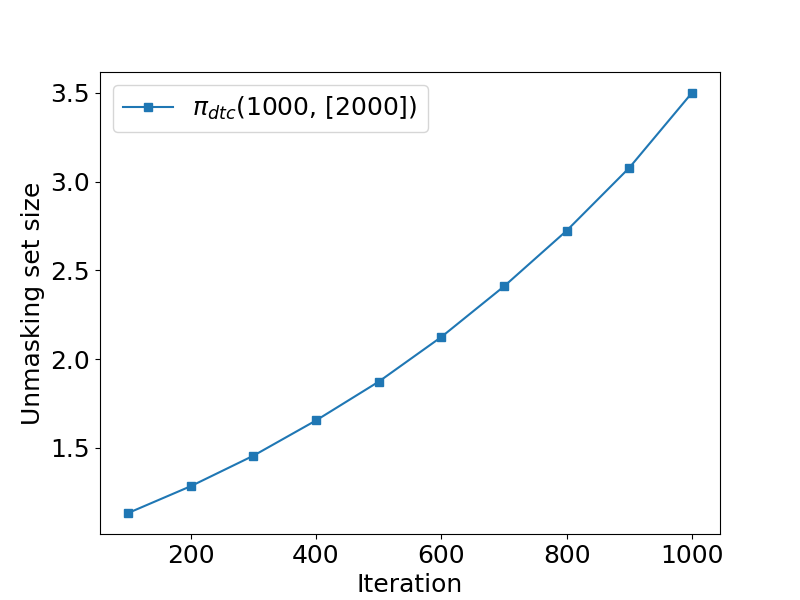}\tabularnewline
 (a) & (b) \tabularnewline
\end{tabular}
\caption{Empirical mean unmasking size vs.~iteration index $k$: (a) TC-adaptive scheme $\pi_\tc$; 
(b) DTC-adaptive scheme $\pi_\dtc$. The total number of iterations is $K = 1000$ and the sequence length is $L = 2000$. \label{fig:unmasking-size}}
\end{figure}
\paragraph{TC/DTC-adaptive schemes.}
With this general construction in place, we now present two concrete unmasking schemes by specifying the coefficients $f(K,L')$ and batch-size weights $\{w_\ell(K,L')\}_{\ell \geq 1}$, which yield sampling convergence guarantees in terms of TC and DTC, respectively.

\begin{itemize}
\item \textbf{TC-adaptive scheme.}
We first introduce the TC-adaptive unmasking scheme, denoted by $\pi_{\tc}(K, \cI)$.

For $K=1$, we initialize the coefficient $f_\tc(1, L')$ by setting
\begin{align}
f_\tc(1, 1) \defn 0, \qquad \text{and} \qquad f_\tc(1, L') \defn 1, \quad \forall\, 2 \leq L' \leq L.
\end{align}

For $K\geq 2$ and $L' \geq K$, we define the batch-size weights by
\begin{subequations}
    \label{eq:wl-tc-def}  
\begin{align}
    w_1^{\tc}(K,L') &\defn 1, \\
    w_\ell^{\tc}(K,L') &\defn \prod_{i = 1}^{\ell-1}  \frac{(L'-i)f_\tc(K-1,L'-i)}{1 + (L' - i - 2)f_\tc(K-1,L'-i-1)}, \quad \forall\,\ell = 2, \ldots, L'-K+1,
\end{align}
\end{subequations}
with the normalizing constant 
\begin{align}\label{eq:Psi-tc-def} 
\Psi_\tc(K,L') \defn \sum_{\ell=1}^{L'-K+1} w_\ell^{\tc}(K,L').
\end{align}
Finally, we update the coefficient $f_\tc(K,L')$ according to
\begin{align}\label{eq:f-def}
f_\tc(K,L') \defn 1-\frac{1 + (L'-2)f_\tc(K-1,L'-1)}{\Psi_\tc(K,L')}.
\end{align}

\item \textbf{DTC-adaptive scheme.}
For the DTC-adaptive scheme, denoted by $\pi_\dtc(K,\cI)$, we initialize the coefficient $f_\dtc(1, L')$ as
\begin{align}
f_\dtc(1, L') \defn \frac{L'-1}{L-L'+1}, \quad \forall\,1 \leq L' \leq L.
\end{align}

For $K\geq 2$ and $L' \geq K$, the batch-size weights are chosen as
\begin{subequations}
    \label{eq:wl-dtc-def}  
\begin{align}
    w_1^{\dtc}(K,L') &\defn 1, \\
    w_\ell^{\dtc}(K,L') &\defn \prod_{i = 1}^{\ell-1} \frac{(L-L'+i)\,f_\dtc(K-1, L'-i)}{1 + (L-L'+i+2)\,f_\dtc(K-1, L'-i-1)},\quad \forall\,\ell = 2, \ldots, L'-K+1,
\end{align}
\end{subequations}
with the normalizing constant 
\begin{align}\label{eq:Psi-dtc-def} 
\Psi_\dtc(K,L') \defn \sum_{\ell=1}^{L'-K+1} w_\ell^{\dtc}(K,L').
\end{align}
The coefficient $f_\dtc(K,L')$ is then updated according to
\begin{align}\label{eq:g-def}
f_\dtc(K,L') \defn -1+\frac{1 + (L - L' + 2)\,f_\dtc(K-1,L'-1)}{\Psi_\dtc(K,L')}.
\end{align}
\end{itemize}

To build some intuition for these two constructed schemes, we plot the empirical mean unmasking size $\bE[|S^{(k)}|]$ at each iteration $k$ in Figure~\ref{fig:unmasking-size}.
As can be seen, the TC-adaptive scheme $\pi_\tc$ unmasks more tokens in early iterations and progressively fewer later on. In contrast, the DTC-adaptive scheme $\pi_\dtc$ starts with smaller unmasking sizes and gradually increases the batch size as sampling proceeds.
Intuitively, $\pi_\tc$ front-loads unmasking because TC primarily penalizes within-batch dependence, while $\pi_\dtc$ back-loads unmasking because low-DTC often reflects a smaller number of degrees of freedom that, once revealed, reduce the conditional dependence of the remaining tokens and allow larger parallel updates in later iterations.

Finally, we briefly discuss the computational cost of sampling unmasking sets
from the proposed scheme.
After a one-time $O(KL)$ dynamic-programming precomputation of $f(K,L')$ for $K \leq L' \leq L$, the batch size $|S^{(k)}|$ can be drawn in $O(1+|S^{(k)}|)$ time per iteration via inverse-transform sampling, and the corresponding set can be sampled uniformly in overall $O(L)$ time (e.g., via reservoir
sampling). Consequently, generating all unmasking sets costs $O(K+L)$ time after precomputation, and thus our strategy introduces essentially no additional overhead.


\subsection{Theoretical guarantee}

We proceed to present the sampling guarantees of the proposed unmasking schemes $\pi_{\tc}$ and $\pi_{\dtc}$, stated in terms of the KL divergence between the data distribution $p_{\data}$ and the distribution $p_{Y_\out}$ of the sampled sequence after $K$ iterations.

Following prior analysis of DLM sampling convergence \citep{li2025breaking,chen2025optimal}, it is useful to separate 
two sources of error: (i) {prediction error} due to using an imperfect mask predictor, and (ii) {intrinsic sampling error} due to parallel decoding, which persists even with access to true conditional marginals.

We begin by quantifying the quality of the learned mask predictor used during the token generation process.
Recall the training objective from \eqref{eq:objective} in Section~\ref{sec:problem},
whose optimal solution $p^\star$ corresponds to the product of the true conditional marginals $p_i^\star$ defined in \eqref{eq:true-conditional}. 
\begin{definition}
For any mask predictor $\wh p= \prod_{i=1}^T\wh p_i$, define its prediction error under an unmasking schedule~$\pi$ as
\begin{align} \label{eq:estimate}
    \veps_{\train}(\pi) & \defn {\bE}_{X^{(0)}, S,\tau}\Bigg[\frac{L}{|M^{(\tau)}|}\sum_{i\in M^{(\tau)}} \log p_i^\star\big(X^{(0)}_{i} \mid X^{(\tau)} \big)\Bigg] - {\bE}_{X^{(0)}, S,\tau}\Bigg[\frac{L}{|M^{(\tau)}|}\sum_{i\in M^{(\tau)}} \log \wh p_i\big(X^{(0)}_{i} \mid X^{(\tau)} \big)\Bigg],
    \end{align}
where 
$M^{(\tau)} = \cup_{k=\tau}^K S^{(k)}$ is the set of masked tokens  at step $\tau$ and $X^{(\tau)} = \cP_{[L]\setminus M^{(\tau)}}(X^{(0)})$ is the corresponding partially masked sequence.
The expectation is taken over the data sample $X^{(0)} \sim p_{\data}$, unmasking sets $S = (S^{(1)}, S^{(2)}, \ldots, S^{(K)}) \sim \pi$, and iteration index $\tau\in[K]$ sampled according to
$\bP\{\tau = k \mid S\} = |S^{(k)}|/{L}$.
\end{definition}
Intuitively, $\varepsilon_\train(\pi)$ measures the average log-likelihood suboptimality of the learned mask predictor $\wh p$ relative to the optimal solution $p^\star$, evaluated on the partially masked sequences encountered under schedule $\pi$. When the optimal mask predictor is used, we have $\varepsilon_\train(\pi) = 0$ for any unmasking strategy $\pi$.

We now state our main results regarding the KL sampling error of our proposed unmasking schemes.
To begin with, Theorem~\ref{thm:main-tc} provides the KL sampling error bound of the unmasking scheme $\pi_{\tc}$. 
\begin{theorem}\label{thm:main-tc}
Given any mask predictor $\wh p$, the KL divergence between the data distribution  and the sampled distribution produced by the unmasking schedule $\pi_{\tc}(K, [L])$ 
satisfies, for any $2 \leq K \leq L$,
\begin{align}\label{eq:main-results-tc}
 \KL(p_{\data} \,\|\, p_{Y_\out}) 
 &\leq \E_{S\sim \pi_{\tc}}\big[\KL(p_{\data} \,\|\, p_{Y_\out^S}\big)\big] \leq \frac{H_{L-K+1} - 1}{K + H_{L-K+1} - 2}  \TC(p_\data) + \veps_{\train}(\pi_{\tc}).
\end{align}
Here, $p_{Y_\out^S} $ is the sampled distribution conditioned on a fixed sequence of unmasking sets $S = (S^{(1)}, S^{(2)}, \ldots, S^{(K)})$,
$H_n \defn \sum_{i=1}^{n} i^{-1}$ denotes the $n$-th harmonic number for $n \geq 1$, and
$\veps_{\train}(\pi_{\tc})$ is the prediction error of the mask predictor $\wh p$ defined in \eqref{eq:estimate}.
\end{theorem}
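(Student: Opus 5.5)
The first inequality follows from convexity of KL in its second argument. The sampled law is the mixture $p_{Y_\out}=\E_{S\sim\pi_\tc}[p_{Y_\out^S}]$, so Jensen gives it immediately.

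For the second inequality I would start from the standard chain-rule decomposition for a fixed partition $S$, as in \citet{li2025breaking,chen2025optimal}. Conditionally on $S$, both $p_\data$ and $p_{Y^S_\out}$ factor over iterations: the true law generates $X_{S^{(k)}}$ jointly given $X_{U^{(k-1)}}$, while the sampler uses $\prod_{i\in S^{(k)}}\wh p_i$. Hence
\begin{align*}
\KL(p_\data\,\|\,p_{Y^S_\out}) = \sum_{k=1}^K \TC\big(X_{S^{(k)}}\mid X_{U^{(k-1)}}\big) + \sum_{k=1}^K\sum_{i\in S^{(k)}} \E\Big[\log\frac{p^\star_i(X_i\mid X_{U^{(k-1)}})}{\wh p_i(X_i\mid X_{U^{(k-1)}})}\Big].
\end{align*}
Averaging the second sum over $S\sim\pi_\tc$ gives exactly $\veps_\train(\pi_\tc)$. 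This holds because the weight $\frac{L}{|M^{(\tau)}|}\cdot\frac{|S^{(\tau)}|}{L}$ in \eqref{eq:estimate}, combined with the exchangeability of indices within the masked set under the uniform subset choice, reproduces the sum over $i\in S^{(k)}$. The remaining task is to bound the intrinsic term $\E_{S}\sum_k \TC(X_{S^{(k)}}\mid X_{U^{(k-1)}})$.

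I would bound this term by induction on $K$, using the recursive structure of $\pi_\tc(K,\cI)$. The claim to prove is: for any index set $\cI$ with $|\cI|=L'$ and any conditioning context $X_A$ on the complement, the expected intrinsic error of $\pi_\tc(K,\cI)$ is at most $f_\tc(K,L')\,\TC(X_\cI\mid X_A)$. One subtlety is that a bound in terms of $\TC$ of the full set may not close the induction. In that case I would carry a refined quantity: the uniform averages $\bar T_m$ of $\TC(X_R\mid X_A)$ over random $m$-subsets $R\subseteq\cI$, or equivalently the "information curve" of successive conditional mutual informations. The inductive step runs as follows.

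\begin{enumerate}
\item Conditioning on $|S^{(1)}|=\ell$, the first-step cost is $\bar T_\ell$.
\item By the induction hypothesis, the future cost is at most $f_\tc(K-1,L'-\ell)$ times the expected $\TC(X_{\cI\setminus S^{(1)}}\mid X_{A\cup S^{(1)}})$.
\item By the chain rule, this equals $\bar T_{L'}-\bar T_\ell-\E\,I(X_{S^{(1)}};X_{\cI\setminus S^{(1)}}\mid X_A)$.
\end{enumerate}

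Writing $\bar T_\ell$ through increments $\Delta_j=\bar T_{j+1}-\bar T_j$, the total cost becomes a linear combination of the $\Delta_j$, which are nonnegative and nondecreasing (by submodularity/Han's inequality). One then has to show that the coefficient of each $\Delta_j$ is at most $f_\tc(K,L')$ times its coefficient in $\bar T_{L'}=\sum_j\Delta_j$. The weights \eqref{eq:wl-tc-def} appear designed so that these coefficients equalize. Their telescoping ratio $\frac{(L'-i)f(K-1,L'-i)}{1+(L'-i-2)f(K-1,L'-i-1)}$ matches the recursion \eqref{eq:f-def}, with $1+(L'-2)f(K-1,L'-1)$ arising from the $\ell=1$ term. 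This verification is the step I expect to be the main obstacle. My first attempt would be to write the cost as $\sum_\ell \frac{w_\ell}{\Psi}(\cdots)$ and check that the difference between consecutive coefficients telescopes to zero by the definition of $w_\ell$.

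Finally, it remains to show the scalar bound $f_\tc(K,L)\le\frac{H_{L-K+1}-1}{K+H_{L-K+1}-2}$. I would prove this by induction on $K$ from the recursion \eqref{eq:f-def}. The base case is $f_\tc(1,L')=1$, which matches $\frac{H_{L'}-1}{H_{L'}-1}$. For the inductive step, I would plug the hypothesized bound into $w_\ell$ and lower-bound $\Psi_\tc$ by comparing the product to ratios of harmonic-type sums, using monotonicity of $x\mapsto\frac{x}{K-1+x}$.
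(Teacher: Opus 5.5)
Your overall structure is the paper's: Jensen for the first inequality, the per-partition chain rule that splits off $\veps_\train(\pi_\tc)$ (your exchangeability argument for the weight $L/|M^{(\tau)}|$ is right), induction on $K$, then a scalar bound on $f_\tc$. Two steps fail as written, however.

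\textbf{The identity in your step 3 is wrong.} For $|R|=\ell$ the chain rule gives
\[
\TC(X_{\cI\setminus R}\mid X_R,x_A)=\TC(X_\cI\mid x_A)-\TC(X_R\mid x_A)-\sum_{j\in\cI\setminus R}I(X_j;X_R\mid x_A),
\]
that is, a sum of single-coordinate informations. Subtracting the joint $I(X_R;X_{\cI\setminus R}\mid x_A)$ instead yields the unconditioned residual $\TC(X_{\cI\setminus R}\mid x_A)$. For a counterexample, take $X_1=X_2=X_3$ a fair bit, $A=\varnothing$, $R=\{1\}$: the true value is $0$, and your formula gives $1$. With your version the averaged residual becomes $\bar T_{L'-\ell}$, and the weights $w^\tc_\ell$ no longer equalize the coefficients of the $\Delta_j$.

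With the correct version the averaged residual is $\sum_{j=\ell}^{L'-1}\Delta_j-(L'-\ell)\Delta_\ell$. Write $f_\ell\defn f_\tc(K-1,L'-\ell)$. The coefficient of $\Delta_j$ in $\sum_\ell w^\tc_\ell(\cdots)$ is then $c_j=\sum_{\ell>j}w^\tc_\ell+\sum_{\ell<j}w^\tc_\ell f_\ell-(L'-j-1)w^\tc_jf_j$. Consequently $c_{j+1}-c_j=(L'-j)w^\tc_jf_j-w^\tc_{j+1}\bigl(1+(L'-j-2)f_{j+1}\bigr)=0$ by the very definition of $w^\tc_\ell$, with $f_\tc(K-1,K-1)=0$ covering $j\ge L'-K+1$. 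Also $c_1=\Psi_\tc(K,L')-1-(L'-2)f_1$, so $c_1/\Psi_\tc(K,L')=f_\tc(K,L')$.

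Hence the expected error equals $f_\tc(K,L')\,\TC(X_\cI\mid x_A)$ exactly. This is Lemma~\ref{lem:tc-f}: the paper's cancellation of $\ell$- and $(\ell+1)$-subset entropy sums, written in your increment coordinates. The induction closes on the full TC, so neither Han's inequality nor a refined invariant is needed.

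\textbf{The scalar bound points the wrong way.} Since $f_\tc(K,L')=1-D/\Psi_\tc(K,L')$ with $D\defn 1+(L'-2)f_\tc(K-1,L'-1)$, you need the \emph{upper} bound $\Psi_\tc(K,L')/D\le (K+H_{L'-K+1}-2)/(K-1)$. A lower bound on $\Psi_\tc$ only bounds $f_\tc$ from below.

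Nor can you substitute upper bounds on $f_\tc(K-1,\cdot)$ into the individual $w^\tc_\ell$. These values also appear in the denominators; for example, $w^\tc_2$ decreases in $f_\tc(K-1,L'-2)$.

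The missing device is the recursion in $L'$: $\Psi_\tc(K,L')=1+(L'-1)f_\tc(K-1,L'-1)\,G'$, where $G'\defn\Psi_\tc(K,L'-1)/\bigl(1+(L'-3)f_\tc(K-1,L'-2)\bigr)$. This gives
\[
\frac{\Psi_\tc(K,L')}{D}=\frac{1+(L'-1)f_\tc(K-1,L'-1)\,G'}{1+(L'-2)f_\tc(K-1,L'-1)}.
\]
The right-hand side is nondecreasing in $G'$. It is also nondecreasing in $f_\tc(K-1,L'-1)$ as long as $G'\ge1$, which is equivalent to $f_\tc(K,L'-1)\ge0$.

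Run a strong induction on $K+L'$ with this normalized ratio as the invariant. Substitute the hypotheses at $(K,L'-1)$ and $(K-1,L'-1)$, and close with $H_{n+1}=H_n+\frac{1}{n+1}$. The base cases are $K=2$, where the ratio equals $H_{L'-1}$ exactly, and $L'=K$.
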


The next theorem provides an analogous KL sampling error bound for the DTC-adaptive unmasking scheme $\pi_{\dtc}$.
\begin{theorem}\label{thm:main-dtc}
Given any mask predictor $\wh p$, the KL divergence between the data distribution  and the sampled distribution produced by the unmasking schedule $\pi_{\dtc}(K, [L])$ 
satisfies, for any $H_{L-1} < K \leq L$,
\begin{align}\label{eq:main-results-dtc}
 \KL(p_{\data} \,\|\, p_{Y_\out}) 
 &\leq \E_{S\sim \pi_{\dtc}}\big[\KL(p_{\data} \,\|\, p_{Y_\out^S}\big)\big] \leq \frac{H_{L-1}}{K - H_{L-1} }  \DTC(p_\data) + \veps_{\train}(\pi_{\dtc}).
\end{align}
Here, $p_{Y_\out^S} $ is the sampled distribution conditioned on a fixed sequence of unmasking sets $S = (S^{(1)}, S^{(2)}, \ldots, S^{(K)})$, $H_n \defn \sum_{i=1}^{n} i^{-1}$ denotes the $n$-th harmonic number for $n \geq 1$, and
$\veps_{\train}(\pi_{\dtc})$ is the prediction error of the mask predictor $\wh p$.
\end{theorem}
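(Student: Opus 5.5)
We prove Theorem~\ref{thm:main-dtc} in three steps: decompose the KL error, reduce the schedule's error to a recursive inequality on $f_\dtc$, and then bound $f_\dtc(K,L)$ explicitly. This mirrors the argument we would use for Theorem~\ref{thm:main-tc}.

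\textbf{Step 1: error decomposition.} The first inequality follows from convexity of KL in its second argument, since $p_{Y_\out}=\E_S[p_{Y_\out^S}]$. For a fixed $S$, apply the chain rule along the unmasking order. This gives
\begin{align*}
\KL(p_\data\,\|\,p_{Y_\out^S})=\sum_{k=1}^K \E\Big[\KL\big(p_{X_{S^{(k)}}\mid X_{U^{(k-1)}}}\,\big\|\,\textstyle\prod_{i\in S^{(k)}}\wh p_i(\cdot\mid X_{U^{(k-1)}})\big)\Big].
\end{align*}
Each summand splits into two parts. The first is the conditional total correlation $\TC(X_{S^{(k)}}\mid X_{U^{(k-1)}})$, which is the error of the product of true marginals. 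The second is a log-likelihood gap between $p^\star_i$ and $\wh p_i$. Averaging over $S\sim\pi_\dtc$, the second part summed over $k$ is exactly $\veps_\train(\pi_\dtc)$. This is because the weights $L/|M^{(\tau)}|\cdot|S^{(\tau)}|/L$ reproduce the uniform choice of the batch within $M^{(\tau)}$ by exchangeability of the uniform subset selection. It therefore remains to bound $\E_S\sum_k \TC(X_{S^{(k)}}\mid X_{U^{(k-1)}})$.

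\textbf{Step 2: recursive potential.} Introduce the averaged entropy profile $h(m)\defn \binom{L}{m}^{-1}\sum_{|A|=m}H(X_A)$ and the averaged single-coordinate quantities. For a set $A$ of size $m$ and a uniformly random context $U$ of size $L-L'$, both $H(X_i\mid X_U)$ and $H(X_A\mid X_U)$ are expressible through $h$. We then prove by induction on $K$ the following claim. For every context size $L-L'$, the expected remaining error of $\pi_\dtc(K,\cI)$ with $|\cI|=L'$ is at most
\begin{align*}
f_\dtc(K,L')\big(h(L)-h(L-1)\big)-\text{(correction terms)},
\end{align*}
or an equivalent linear functional of $h$. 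The claim must be arranged so that it telescopes to $f_\dtc(K,L)\,\DTC(X)$ at the top level; recall that $\DTC(X)=H(X)-\sum_i H(X_i\mid X_{-i})=h(L)-L(h(L)-h(L-1))$.

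The inductive step conditions on the first batch size $\ell$. It writes the first-step TC as the difference between a sum of single-coordinate conditional entropies and a joint conditional entropy, and applies the hypothesis at level $K-1$ with $L'-\ell$. The weights $w^\dtc_\ell$ are designed so that, after summing over $\ell$ with probabilities $w_\ell/\Psi$, the coefficients of the increments $h(j+1)-h(j)$ telescope. The products in $w_\ell$ are precisely the ratios that make consecutive terms cancel. Submodularity then lets us bound the leftover terms: $h(j+1)-h(j)$ is nonincreasing in $j$ (Han's inequality). That leftover is what defines the update~\eqref{eq:g-def}. The base case $K=1$ is the identity $\TC(X_\cI\mid X_U)\le f_\dtc(1,L')\cdot(\cdot)$, which follows from averaging Han-type inequalities. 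This gives $(L'-1)/(L-L'+1)$.

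\textbf{Step 3: bounding $f_\dtc(K,L)$.} It remains to show $f_\dtc(K,L)\le H_{L-1}/(K-H_{L-1})$. We would guess an ansatz of the form $f_\dtc(K,L')\le c_{L'}/(K-c_{L'})$, with $c_{L'}$ a partial harmonic sum $\sum_{j=L-L'+1}^{L-1}1/j$. We would then verify it inductively: lower bound $\Psi_\dtc$ by comparing the products in $w_\ell$ to geometric-type sequences, and plug the result into~\eqref{eq:g-def}.

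I expect Step 2, and specifically identifying the exact linear functional of $h$ that makes the telescoping work, to be the main obstacle. Step 3 is a delicate but elementary inequality chase.
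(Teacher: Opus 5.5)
Your Step 1 is sound. Convexity of $\KL$ in its second argument gives the first inequality. Your exchangeability argument correctly turns $\sum_k\sum_{i\in S^{(k)}}$ into the $M^{(\tau)}$-average that defines $\veps_\train$.

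\textbf{Step 2: the inductive invariant is missing.} This step carries all the content, and you never state the invariant. The shape you tentatively write, $f_\dtc(K,L')\,(h(L)-h(L-1))$ minus unspecified corrections, is not the one for which the prescribed weights \eqref{eq:wl-dtc-def} and update \eqref{eq:g-def} work. Since $w^\dtc_\ell$ and $f_\dtc$ are fixed by the scheme, the invariant has to be exactly the functional these weights telescope.

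The paper's invariant (Lemma~\ref{lem:dtc-g}) says that, conditionally on $x_{\cI^\setc}$, the expected error of $\pi_\dtc(K,\cI)$ is at most $f_\dtc(K,|\cI|)\big(\frac{L-|\cI|}{|\cI|}\TC(X_\cI\mid x_{\cI^\setc})+\frac{L}{|\cI|}\DTC(X_\cI\mid x_{\cI^\setc})\big)$. It must mix TC and DTC, because the first-batch cost is a TC and a pure-DTC hypothesis does not close the recursion.

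In your averaged-profile language, set $\Delta(u)\defn h(u+1)-h(u)$ and take a uniformly random context of size $L-L'$. The invariant then reads $f_\dtc(K,L')\big[(L-L')\Delta(L-L')+h(L)-h(L-L')-L\,\Delta(L-1)\big]$, which reduces to $f_\dtc(K,L)\DTC(X)$ at $L'=L$.

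With this invariant the inductive step is an exact identity. The telescoped coefficient equals $\Psi_\dtc(K,L')\,f_\dtc(K,L')$, so no leftover bounded by submodularity defines \eqref{eq:g-def}. The only inequality is in the base case, $\TC\le(n-1)\DTC$ (Lemma~\ref{lem:dtc-tc-relation}), which in your language is monotonicity of $\Delta$. Your averaged framework is a legitimate alternative to conditioning on $x_{\cI^\setc}$, since contexts under $\pi_\dtc$ are uniform given the batch sizes. But it only works once this exact functional is in place.

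\textbf{Step 3: the ansatz cannot be propagated.} Your bound $f_\dtc(K,L')\le c_{L'}/(K-c_{L'})$, with $c_{L'}=H_{L-1}-H_{L-L'}$, is true when $K>c_{L'}$, but it is not an inductive invariant. $\Psi_\dtc(K,L')$ involves $f_\dtc(K-1,\cdot)$ at every size from $K-1$ to $L'-1$. Descending from $(K,L)$ you therefore need bounds at pairs such as $(2,L-K+2)$. There $c_{L-K+2}=H_{L-1}-H_{K-2}\approx\log L-\log\log L$, which exceeds $2$ for large $L$ even when $K$ is just above $H_{L-1}$. At such pairs your ansatz is negative, hence false.

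The paper instead proves $f_\dtc(K,L')\le p/(L-p)$ with $p=\min\{L'-K,\frac{L}{K}(H_{L-1}-H_{L-L'})\}$, by strong induction on $K+L'$. It uses the identity
\[
\Psi_\dtc(K,L')=1+\frac{(L-L'+1)f_\dtc(K-1,L'-1)}{1+(L-L'+3)f_\dtc(K-1,L'-2)}\,\Psi_\dtc(K,L'-1).
\]
The branch $p=L'-K$ supplies the bound exactly where yours fails, and it is used essentially in the sub-case $p(K-1,L'-1)=L'-K$. Comparing the products in $w_\ell$ to geometric-type sequences gives no indication of how you would obtain either branch.
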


In words, Theorems~\ref{thm:main-tc}--\ref{thm:main-dtc} demonstrate that the KL sampling error decomposes into two components: (i) a dependence measure $\TC(p_\data)$ or $\DTC(p_\data)$, which measures the intrinsic complexity of the target distribution $p_\data$, and (ii) an estimation component $\varepsilon_{\train}$ that captures the suboptimality of the learned mask predictor.
Notably, our convergence theory covers the practically critical regime $K < L$, where multiple tokens are generated per iteration. This provides theoretical support for the key promise of DLMs---parallel decoding with controlled degradation in sampling accuracy.

We now discuss several important implications.
\begin{itemize}
    \item \textit{Iteration complexity.}
Since $\log n \leq H_n \leq 1 + \log n$ for all $n \geq 1$, the sampling convergence rate of $\pi_\tc$ in \eqref{eq:main-results-tc} can be simplified to
\begin{align*}
    \frac{\TC(p_\data) }{K + \log (L-K+1) - 2}\log (L-K+1)    \leq \frac{\TC(p_\data)  }K\log L 
\end{align*}
for any $K \leq  L-7$.
In particular, when the optimal mask predictor is used (i.e., $\veps_{\train} = 0$), the number of sampling iterations required to achieve $\veps$-accuracy in KL divergence is at most
\begin{align*}
 \frac{\TC(p_\data)}{\veps}\log L .
\end{align*}
Similarly, the sampling error bound in \eqref{eq:main-results-dtc} demonstrates that the iteration complexity of $\pi_\dtc$ to achieve $\veps$-accuracy in KL divergence is upper bounded by
\begin{align*}
     \bigg(\frac{\DTC(p_\data) }{\veps} +1 \bigg) \big(\log (L-1)+1\big) 
\end{align*}
As a consequence, when $\min\{\TC(p_\data), \DTC(p_\data)\} \ll L$, our schedule yields a substantial speedup over one-by-one AR decoding and DLM unmasking schemes that requires $\Theta(L)$ iterations.

\item \textit{Adaptivity to intrinsic dependence.}
Notably, such sampling acceleration is achieved without requiring prior knowledge of the data distribution or hyperparameter tuning. This reveals the feasibility of dependence-adaptive DLM sampling that automatically exploits the low-complexity structure of data to achieve faster convergence.

\item \textit{Improvement over fixed-size schedules.} The work of \citet{chen2025optimal} shows that there exists a distribution $p_0$ such that, even with access to the true conditional marginals, any unmasking strategy with a fixed-size schedule requires at least $\Omega(\veps^{-1}\min\{\TC(p_0),\DTC(p_0)\}\log L)$ iterations to achieve $\veps$-accuracy in KL divergence. 
Therefore, our randomized unmasking schemes can be more efficient than fixed-size schedules in the worst case.

\item \textit{Unmasking schedule choice.} Our results suggest a simple guideline for choosing the unmasking schedule:
 $\pi_\tc$ is preferable when TC is the smaller intrinsic complexity measure, while $\pi_\dtc$ is preferable when DTC is smaller. More broadly, the two schedules provide complementary adaptivity guarantees, offering a principled way to match the DLM sampler to the dominant dependence structure of the target distribution.

\end{itemize}

Finally, we discuss representative settings where TC and DTC are significantly smaller than the ambient sequence length $L$, which illustrates the practical relevance of our results.
\begin{itemize}
    \item \textit{Low-TC distributions.}    
Consider the parity (checksum) distribution of $X= (X_1,\dots,X_L)$ where $X_1,\dots,X_{L-1}$ are i.i.d.~$\mathsf{Unif}(\{0,1\})$ random variables and
$X_L = X_1 \oplus X_2 \oplus \cdots \oplus X_{L-1}$. 
It is easy to compute $H(X_i) = 1$ for $i\in[L]$ and $H(X)=L-1$. 
Since each variable is determined by the remaining $L-1$ variables, we have
$\TC(X)=\sum_{i=1}^L H(X_i)-H(X)=1$, $\DTC(X) = H(X) - \sum_{i=1}^L H\big(X_i \mid X_{-i}\big) = L-1$, and $\sum_{i=1}^L I(X_i; X_{-i}) = L$.
This example illustrates that the distribution with a weak ``checksum-like'' constraint has very small TC.
More generally, small TC arises in distributions that are close to independent except for a limited amount of shared information, or a small number of global constraints.

\item \textit{Low-DTC distributions.}
Consider a latent vector $U\in\{0,1\}^d$ with i.i.d.\ $\mathsf{Unif}(\{0,1\})$ entries, and let
\[
X \;=\; G U \in \{0,1\}^L,
\]
where $G\in\{0,1\}^{L\times d}$ has rank $d$ (over $\mathbb{F}_2$). Then $X$ is uniform over a
$d$-dimensional linear subspace of $\{0,1\}^L$, and hence $H(X)=d$. By the definition
$\DTC(X)= H(X)-\sum_{i=1}^L H(X_i\mid X_{-i})$, we immediately obtain
$
\DTC(X)\le H(X)=d.
$
In particular, when $d\ll L$, the distribution has small DTC even though it may exhibit strong
global constraints. For instance, if every coordinate $X_i$ is a nontrivial linear function of $U$
(so that $H(X_i)=1$ for all $i$), then
$
\TC(X)=\sum_{i=1}^L H(X_i)-H(X)=L-d
$
can be large. More generally, small DTC arises in distributions driven by a low-dimensional latent source (e.g., low-rank or manifold structure), where the intrinsic degrees of freedom are much smaller than the sequence length $L$.
We refer to \citet{dmitriev2026efficient} for a detailed discussion of distributions with small DTC.

\end{itemize}

%% file: analysis.tex
\section{Analysis}
\label{sec:analysis}

This section outlines the proof strategy for Theorem~\ref{thm:main-tc} and Theorem~\ref{thm:main-dtc}.
\subsection{Analysis for TC-adaptive scheme (Proof of Theorem~\ref{thm:main-tc})}
\label{sec:analysis-tc}
\paragraph{Step 1: Decoupling prediction and sampling errors.}

We first separate the error due to an imperfectly learned mask predictor from the error that is intrinsic to parallel unmasking.

To this end, we introduce an auxiliary (idealized) sequence $(\overline Y^{(0)}, \overline Y^{(1)},\dots,\overline Y^{(K)})$, which uses the same unmasking strategy $\pi_\tc(K, [L])$ as the practical sampler $Y$, but replace the learned mask predictor $\wh p$ with the optimal mask predictor $p^\star$ (the product of true conditional marginals under $p_\data$). 

Similar to $Y$, we denote by $\ol Y_\out \defn \overline Y^{(K)}$ the output of the idealized sampler and write $\ol Y_\out = \overline Y_\out^S$ when we need to highlight the dependence on the unmasking sets $S = (S^{(1)}, S^{(2)}, \ldots, S^{(K)})$.

As shown in \citet{li2025breaking}, the difference between the expected KL errors of $Y_\out$ and $\overline Y_\out$ is exactly the prediction error $\veps_\train$ defined in \eqref{eq:estimate}:
\begin{align}\label{eq:train-sample-decouple}
\mathbb{E}_{S\sim \pi_\tc}\Big[\mathsf{KL} \big(p_{X^{(0)}} \,\|\, p_{Y_\out^S})\Big] - \mathbb{E}_{S\sim \pi_\tc}\Big[\mathsf{KL}(p_{X^{(0)}} \,\|\, p_{\overline Y_\out^S} \big)\Big]
 = \varepsilon_\train(\pi_\tc),
\end{align}
For completeness, we include a proof in Appendix \ref{sec:proof-train-sample-decouple}.
Consequently, it suffices to analyze the expected KL sampling error of the output of the idealized sampler $\overline Y_\out$.

In what follows, we write $X = X^{(0)}$ and $\ol Y = \ol Y_\out = \overline Y^{(K)}$ for convenience of presentation, so that $p_X = p_\data$.

\paragraph{Step 2: Recursive decomposition of KL error.}

We first notice that when a set of tokens is generated independently using a product of conditional marginals, the resulting discrepancy from the true joint distribution is precisely captured by the TC of those tokens. 
This is formalized in the following lemma, whose proof can be found in Appendix \ref{sec:proof-single_batch_error}.
\begin{lemma}
\label{lem:single_batch_error}
Fix an arbitrary masked index set $\cI \subseteq [L]$ and condition on the already generated tokens $X_{\cI^\mathsf{c}} =x_{\cI^\mathsf{c}}$. For any set $\cS \subseteq \cI$, the KL divergence between the true joint conditional distribution and the sampled distribution of a single-step sampler for tokens in $\cS$ satisfies
\[
    \KL\Big(p_{X_\cS \mid X_{\cI^\mathsf{c}}}(\cdot \mid x_{\cI^\mathsf{c}}) \,\big\|\, \prod_{i \in \cS} p_{X_i \mid X_{\cI^\mathsf{c}}}(\cdot \mid x_{\cI^\mathsf{c}}) \Big) = \TC \big(p_{X_\cS \mid X_{\cI^\mathsf{c}}}(\cdot \mid x_{\cI^\mathsf{c}}) \big).
\]
\end{lemma}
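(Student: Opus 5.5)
The plan is to expand the KL divergence directly and identify each piece with an entropy term; no earlier result beyond the definitions in Section~\ref{sec:problem} is needed. Throughout, fix $x_{\cI^\mathsf{c}}$ and abbreviate the conditional joint law $q(\cdot) \defn p_{X_\cS \mid X_{\cI^\mathsf{c}}}(\cdot \mid x_{\cI^\mathsf{c}})$ on $\bX^{|\cS|}$. Its marginals are $q_i(\cdot) = p_{X_i \mid X_{\cI^\mathsf{c}}}(\cdot \mid x_{\cI^\mathsf{c}})$ for $i\in\cS$, since marginalizing the conditional joint over the other coordinates of $\cS$ gives the conditional marginal given the same context. This identification is the only point that needs a remark. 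It shows that the product on the right-hand side is exactly $\prod_{i\in\cS} q_i$, the product of the marginals of $q$.

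We then write
\begin{align*}
\KL\Big(q \,\big\|\, \prod_{i\in\cS} q_i\Big) = \sum_{x_\cS} q(x_\cS)\log q(x_\cS) - \sum_{i\in\cS}\sum_{x_\cS} q(x_\cS)\log q_i(x_i).
\end{align*}
The first term equals $-H(q)$. For the second, we sum out all coordinates except $x_i$ in each summand, which turns it into $-\sum_{x_i} q_i(x_i)\log q_i(x_i)$. The second term therefore equals $-\sum_{i\in\cS} H(q_i)$ with the sign carried through. Combining the two gives $\sum_{i\in\cS} H(q_i) - H(q)$, which is by definition the total correlation of the (conditional) distribution $q$. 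This is the claimed identity.

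The only subtlety is handling zero-probability points. We use the convention $0\log 0 = 0$. We also note that $q(x_\cS)>0$ implies $q_i(x_i)>0$ for every $i$, so $q$ is absolutely continuous with respect to $\prod_i q_i$ and the KL divergence is finite. No step is a real obstacle; this is a routine computation, which the appendix would record in a few lines.
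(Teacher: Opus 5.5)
Your proposal is correct and matches the paper's proof: it expands the KL divergence, identifies the log-joint term with $-H(q)$ and the cross term with the marginal entropies, and reads off $\sum_{i\in\cS} H(q_i) - H(q) = \TC(q)$. Your wording of the cross term's sign is muddled, since the term $-\sum_{i}\sum_{x_\cS} q(x_\cS)\log q_i(x_i)$ equals $+\sum_{i\in\cS} H(q_i)$, but your final combination is right.
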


Our key observation is that the expected KL sampling error of $\overline Y$ admits a clean recursion across iterations. After the first batch is unmasked, the remaining $K-1$ steps form the same problem on the remaining positions, conditioned on what has already been unmasked.
This yields the following recursive decomposition; its proof is deferred to Appendix \ref{sec:proof-prop-recursive}.

\begin{lemma}
    \label{prop:recursive}
Fix an arbitrary unmasking scheme $ \pi $ and masked index set $ \cI $ with $K\leq |\cI| \leq L $. 
Conditioned on the already unmasked tokens $ X_{\cI^\mathsf{c}} =x_{\cI^\mathsf{c}}$, we have
\begin{align*}
    & \bE_{S \sim \pi(K, \cI) } \Big[\KL\big(p_{X_{\cI}\mid X_{\cI^\mathsf{c}}}(\cdot \mid x_{\cI^\mathsf{c}})  \,\|\,  p_{\overline Y_{\cI}^S \mid \overline Y_{\cI^\setc}}(\cdot \mid x_{\cI^\mathsf{c}}  ) \big)\Big] \notag\\
    &\quad= 
    \bE_{S^{(1)}} \Big[ \TC \big(p_{X_{S^{(1)}} \mid X_{\cI^\mathsf{c}}}(\cdot\mid x_{\cI^\mathsf{c}}) \big) \Big] \\
     &\qquad
    +\bE_{S^{(1)},x_{S^{(1)}}} \bigg[\bE_{S^{(2:K)}} \Big[\KL \Big(p_{X_{\cI \setminus S^{(1)}} \mid X_{S^{(1)} \cup \cI^\mathsf{c}}}(\cdot \mid x_{S^{(1)} \cup \cI^\mathsf{c}}) \,\big\|\, p_{\overline Y_{\cI \setminus S^{(1)}}^{S^{(2:K)}}  \mid \overline Y_{S^{(1)} \cup \cI^\mathsf{c}}}(\cdot \mid x_{S^{(1)} \cup \cI^\mathsf{c}} ) \Big) \Big] \bigg] ,
\end{align*}
where the expectation on the second line is over $ S^{(1)} \sim \pi^{(1)}(K, \cI) $, $x_{S^{(1)}} \sim p_{X_{S^{(1)}} \mid X_{\cI^\mathsf{c}}}(\cdot \mid x_{\cI^\mathsf{c}})$, and $S^{(2:K)}\defn (S^{(2)}, S^{(3)}, \ldots, S^{(K)}) \sim \pi(K-1, \cI \setminus S^{(1)})$.
\end{lemma}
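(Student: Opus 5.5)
The plan is to prove Lemma~\ref{prop:recursive} by applying the chain rule for KL divergence to the two-stage structure of the idealized sampler. Fix a realization $S=(S^{(1)},S^{(2:K)})$ of the unmasking sets and condition on $X_{\cI^\setc}=x_{\cI^\setc}$ (with the idealized sampler started from $\overline Y_{\cI^\setc}=x_{\cI^\setc}$). The true conditional law factorizes as
\[
p_{X_\cI\mid X_{\cI^\setc}} = p_{X_{S^{(1)}}\mid X_{\cI^\setc}}\cdot p_{X_{\cI\setminus S^{(1)}}\mid X_{S^{(1)}\cup\cI^\setc}}.
\]
By construction of the idealized sampler, the sampled law factorizes the same way. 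Its first factor is $\prod_{i\in S^{(1)}} p^\star_i(\cdot\mid x_{\cI^\setc})=\prod_{i\in S^{(1)}}p_{X_i\mid X_{\cI^\setc}}$. Its second factor is the law of the idealized sampler run with unmasking sets $S^{(2:K)}$ on $\cI\setminus S^{(1)}$, started from the revealed context $x_{S^{(1)}\cup\cI^\setc}$.

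The key point to check is this Markov property of the second factor. Given $\overline Y_{S^{(1)}\cup\cI^\setc}=x_{S^{(1)}\cup\cI^\setc}$, the later updates use only $p^\star_i(\cdot\mid \overline Y^{(k-1)})$, which depends on the currently revealed tokens alone. The conditional law of the remaining tokens is therefore exactly the object $p_{\overline Y^{S^{(2:K)}}_{\cI\setminus S^{(1)}}\mid \overline Y_{S^{(1)}\cup\cI^\setc}}$ appearing in the statement. This holds regardless of how the tokens in $S^{(1)}$ were produced.

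With both factorizations in place, the chain rule gives, for fixed $S$,
\[
\KL(p\,\|\,\bar p)=\KL\big(p_{X_{S^{(1)}}\mid\cdot}\,\|\,\textstyle\prod_i p_{X_i\mid\cdot}\big)+\bE_{x_{S^{(1)}}\sim p_{X_{S^{(1)}}\mid X_{\cI^\setc}}}\big[\KL(\text{second factors})\big].
\]
By Lemma~\ref{lem:single_batch_error} with $\cS=S^{(1)}$, the first term equals $\TC(p_{X_{S^{(1)}}\mid X_{\cI^\setc}}(\cdot\mid x_{\cI^\setc}))$.

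It remains to average over $S\sim\pi(K,\cI)$. The scheme's recursive definition means $S^{(1)}\sim\pi^{(1)}(K,\cI)$, and conditionally on $S^{(1)}$ we have $S^{(2:K)}\sim\pi(K-1,\cI\setminus S^{(1)})$. The statement says "arbitrary scheme"; I will use only this conditional structure, taking $\pi(K-1,\cdot)$ to mean the conditional law of the tail. Since $S$ is drawn independently of the data, the expectation over $S^{(2:K)}$ can be moved inside the expectation over $x_{S^{(1)}}$. The first term depends only on $S^{(1)}$, which yields the two terms claimed.

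The only delicate step is the Markov/conditioning claim for the idealized sampler. It holds because the schedule is independent of the tokens and each update depends only on the current partially revealed sequence. I expect no other obstacle, since the rest is the standard KL chain rule.
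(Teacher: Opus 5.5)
Your proposal is correct and follows the paper's proof essentially step by step. You fix $S$, factorize both the true conditional law and the idealized sampler's law at the first batch, apply the KL chain rule, identify the first term as the TC via Lemma~\ref{lem:single_batch_error}, and then average over $S\sim\pi(K,\cI)$. Your explicit remarks on the Markov property of the tail sampler and on exchanging the expectations (because the schedule is independent of the tokens) spell out steps the paper uses only implicitly.
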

We remind the reader of the notation that the superscript $S$ in $\ol Y^S$ highlights the dependence on the fixed sequence of unmasking sets $S = (S^{(1)}, S^{(2)}, \ldots, S^{(K)})$. When we apply the schedule $\pi(K,\cI)$ to unmask tokens in $\cI$, $\overline Y_{\cI}^S$ depends on the unmasking sets $S$ and $\overline Y_{\cI \setminus S^{(1)}}^{S^{(2:K)}}$ depends on the unmasking sets $S^{(2:K)}$.

\paragraph{Step 3: Exact characterization of KL error in terms of TC.}
With this recursion in hand, we show that our specific choice of the first-step size distribution in $ \pi_{\tc}  $ makes the overall expected KL sampling error proportional to the TC of the remaining conditional distribution.
Particularly, the proportionality factor is precisely the coefficient $f_\tc(K, |\cI|)$ defined in \eqref{eq:f-def}.

This is formalized in the following lemma, whose proof is deferred to Appendix \ref{sec:proof-tc-f}.
\begin{lemma}\label{lem:tc-f}
For any masked index set $ \cI \subseteq [L]$ with $K\leq |\cI| \leq L$, conditioned on the already unmasked tokens $ X_{\cI^\mathsf{c}} =x_{\cI^\mathsf{c}}$, the unmasking scheme $\pi_{\tc} (K,\cI)$ satisfies
\begin{align}\label{eq:tc-f}
    &\E_{S \sim \pi_{\tc}(K, \cI) } \Big[\KL \Big(p_{X_{\cI} \mid X_{\cI^\setc}}(\cdot \mid x_{\cI^\mathsf{c}}) \,\|\, p_{\overline Y^{S}_{\cI} \mid \overline Y_{\cI^\setc}}(\cdot \mid x_{\cI^\mathsf{c}} ) \Big) \Big] = f_\tc(K, |\cI|) \cdot \TC \big(p_{X_{\cI} \mid X_{\cI^\setc}}(\cdot \mid x_{\cI^\mathsf{c}})\big).
\end{align}
In particular, by setting $\cI = [L]$, we have
\begin{align}\label{eq:tc-f-final}
\E_{S \sim \pi_{\tc}(K, [L]) } \Big[\KL \big(p_{\data} \,\|\, p_{\overline Y^S_\out} \big) \Big] = f_\tc(K, L) \cdot \TC(p_\data).
\end{align}
\end{lemma}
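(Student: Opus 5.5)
The plan is to argue by induction on $K$, using Lemma~\ref{prop:recursive} to peel off the first batch and Lemma~\ref{lem:single_batch_error} to evaluate its cost. Throughout, condition on $X_{\cI^\setc}=x_{\cI^\setc}$ and suppress it from the notation, and write $L'=|\cI|$.

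\textbf{Base case and entropy profile.} For $K=1$ the scheme unmasks all of $\cI$ at once. Lemma~\ref{lem:single_batch_error} then gives error exactly $\TC(X_\cI)$, which matches $f_\tc(1,L')=1$ for $L'\ge2$. For $L'=1$ the TC vanishes, which is consistent with $f_\tc(1,1)=0$. For the inductive step, the key device is the averaged entropy profile
\[
a_m \defn \E_{|A|=m}\big[H(X_A)\big],\qquad m=0,1,\dots,L',
\]
where $A$ is uniform over the $m$-subsets of $\cI$ and $a_0=0$. The target quantity is $\TC(X_\cI)=L'a_1-a_{L'}$. I expect the whole computation to reduce to a linear identity in $(a_1,\dots,a_{L'})$.

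\textbf{Computing both terms of the recursion.} Fix $|S^{(1)}|=\ell$ and average over the uniform choice of $S^{(1)}$.

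The first term of Lemma~\ref{prop:recursive} averages to
\[
\E\,\TC(X_{S^{(1)}})=\ell a_1-a_\ell .
\]

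For the second term, apply the induction hypothesis at level $K-1$ on the index set $\cI\setminus S^{(1)}$, which has size $L'-\ell\ge K-1$. This yields the factor $f_\tc(K-1,L'-\ell)$ multiplying the conditional TC. Using the chain rule, $H(X_i\mid X_{S^{(1)}})=H(X_{S^{(1)}\cup\{i\}})-H(X_{S^{(1)}})$. By symmetry of uniform subsets, averaging over $S^{(1)}$ and $x_{S^{(1)}}$ gives
\[
\E\,\TC\big(X_{\cI\setminus S^{(1)}}\mid X_{S^{(1)}}\big)=(L'-\ell)(a_{\ell+1}-a_\ell)-a_{L'}+a_\ell .
\]

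Weighting by $w_\ell/\Psi$, the total error becomes $\Psi^{-1}\sum_\ell w_\ell\big[\ell a_1-a_\ell+f_{L'-\ell}\big((L'-\ell)(a_{\ell+1}-a_\ell)-a_{L'}+a_\ell\big)\big]$, where $f_j\defn f_\tc(K-1,j)$.

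\textbf{Cancellation of intermediate coefficients.} The main step is to check that the coefficients of $a_m$ vanish for $2\le m\le L'-1$. For $2\le m\le L'-K+1$, the coefficient of $a_m$ is
\[
w_m\big[-1-(L'-m-1)f_{L'-m}\big]+w_{m-1}(L'-m+1)f_{L'-m+1}.
\]
This is exactly zero by the ratio $w_m/w_{m-1}$ in \eqref{eq:wl-tc-def}, taking $i=m-1$.

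At the top index $m=L'-K+2$, only the $\ell=m-1$ term contributes. That contribution carries the factor $f_\tc(K-1,K-1)$, so I need the auxiliary fact $f_\tc(K,K)=0$ for all $K$. The intuition is that one-by-one unmasking is exact. I would prove it inside the same induction: when $L'=K$ only $\ell=1$ is allowed, so $\Psi=1$, and \eqref{eq:f-def} gives $1-(1+(K-2)\cdot 0)=0$.

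\textbf{Matching the remaining coefficients.} After cancellation, the total has the form $\alpha a_1+\beta a_{L'}$. If the coordinates were independent, we would have $a_m=ma_1$ and every TC term would vanish identically, so $\alpha+\beta L'=0$. Hence the total equals $-\beta\,\TC(X_\cI)$ with
\[
-\beta=\Psi^{-1}\sum_\ell w_\ell f_{L'-\ell}.
\]
It remains to show $\Psi^{-1}\sum_\ell w_\ell f_{L'-\ell}=1-\big(1+(L'-2)f_{L'-1}\big)/\Psi$, i.e.\ $\sum_\ell w_\ell f_{L'-\ell}=\Psi-1-(L'-2)f_{L'-1}$. I would obtain this by summing the relations $w_m(1+(L'-m-1)f_{L'-m})=w_{m-1}(L'-m+1)f_{L'-m+1}$ over $m$ and telescoping, again using $f_{K-1}=0$ at the end. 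Alternatively, one can read the coefficient of $a_1$ directly.

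I expect this final bookkeeping, with the boundary terms at $\ell=1$ and $\ell=L'-K+1$, to be the main obstacle. The conceptual content is in the vanishing of the intermediate coefficients. The claim \eqref{eq:tc-f-final} then follows by taking $\cI=[L]$.
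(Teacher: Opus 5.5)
Your proposal is correct and follows the paper's proof essentially step for step. The shared structure is: induction on $K$ via Lemma~\ref{prop:recursive}; averaging over uniform $\ell$-subsets to reduce everything to subset entropies; cancellation of the intermediate terms exactly by the weight ratio in \eqref{eq:wl-tc-def}, which is the paper's condition $w_\ell\beta_\ell=w_{\ell+1}\alpha_{\ell+1}$; the boundary fact $f_\tc(K,K)=0$; and the telescoped identity $\sum_\ell w_\ell f_\tc(K-1,L'-\ell)=\Psi_\tc(K,L')-1-(L'-2)f_\tc(K-1,L'-1)$. The only deviation is that you get the coefficient of $a_1$ from the formal substitution $a_m=m\,a_1$, under which every bracket vanishes and so $\alpha+\beta L'=0$; this is valid and replaces the paper's second telescoping computation for the coefficient of $\sum_{i}H(X_i\mid x_{\cI^\setc})$.
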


We now briefly outline the proof of Lemma~\ref{lem:tc-f}, which proceeds by induction on the number of iterations $K$.

The base case $K=1$ is immediate since the scheme unmasks all tokens at once, and the KL sampling error reduces to the TC of the conditional distribution.

Now suppose the claim holds for $K-1$. Combining Lemma~\ref{prop:recursive} with the induction hypothesis, we can express the target error, 
$\E_{S \sim \pi_{\tc}(K, \cI)} \big[\KL(p_{X_{\cI}}(\cdot \mid x_{\cI^\mathsf{c}}) \,\|\, p_{\overline Y_{\cI}^S \mid \overline Y_{\cI^\setc}}(\cdot \mid x_{\cI^\mathsf{c}} ) ) \big]$, 
as an expectation over the first unmasking set $S^{(1)}$ consisting of two contributions: (i) the TC incurred when unmasking the first batch, and (ii) the remaining $(K-1)$-step error, which by the induction hypothesis reduces to a scaled TC of the residual problem. Concretely, we have
\begin{align*}
&\E_{S^{(1)}} \Big[ \TC \big(p_{X_{S^{(1)}} \mid X_{\cI^\setc}}(\cdot \mid x_{\cI^\mathsf{c}}) \big)\Big] + f_\tc(K - 1, |\cI| - |S^{(1)}|) \cdot \E_{x_{S^{(1)}} } \Big[\TC\big( p_{X_{\cI \setminus S^{(1)}} \mid X_{S^{(1)} \cup \cI^\mathsf{c}} }(\cdot \mid x_{S^{(1)} \cup \cI^\mathsf{c}})\big) \Big],
\end{align*}
where the second expectation is over $ x_{S^{(1)}} \sim p_{X_{S^{(1)}} \mid X_{\cI^\mathsf{c}}}(\cdot \mid x_{\cI^\mathsf{c}})$.

Recall that our strategy first randomly selects the size $|S^{(1)}|$ and then chooses $S^{(1)}$ uniformly at random among all subsets of that size.
Hence, by the definition of TC, this expectation can be written as a weighted sum of entropies of subsequences of $X$, indexed by the possible first-step sizes $\ell=|S^{(1)}|$:
\begin{align}\label{eq:weighted-sum-TC}
\frac{1}{\Psi_{\tc}(K, |\cI|)}\sum_{\ell = 1}^{|\cI| - K + 1} w_\ell^\tc(K,|\cI|) a_\ell ,
\end{align}
where weights $w_\ell^\tc(K,|\cI|)$ and normalizing constant $\Psi_{\tc}(K,|\cI|)$ are defined in \eqref{eq:wl-tc-def} and \eqref{eq:Psi-tc-def}, respectively, and $a_\ell$ is a linear combination of entropies of $\ell$- and $(\ell+1)$-subsets, given by 
\begin{align*}
& \frac{\ell}{|\cI|} \sum_{i \in \cI} H \big(p_{X_i \mid X_{\cI^\mathsf{c}}}(\cdot \mid x_{\cI^\mathsf{c}}) \big) - f_\tc(K-1, |\cI|-\ell) \cdot H \big(p_{X_{\cI} \mid X_{\cI^\mathsf{c}}}(\cdot \mid x_{\cI^\mathsf{c}}) \big) \\
&\quad  - \alpha_\ell\sum_{\cA\subseteq \cI \,:\,|\cA|=\ell} H \big(p_{X_\cA \mid X_{\cI^\mathsf{c}}}(\cdot \mid x_{\cI^\mathsf{c}}) \big) + \beta_\ell \sum_{\cB\subseteq \cI \,:\,|\cB|=\ell+1} H \big(p_{X_\cB \mid X_{\cI^\mathsf{c}}}(\cdot \mid x_{\cI^\mathsf{c}}) \big)
\end{align*}
for some coefficients $ \alpha_\ell, \beta_\ell$ (exact expressions can be found in Appendix \ref{sec:proof-tc-f}) depending on $f_\tc(K-1, |\cI|-\ell)$, which are used to determine the weights $w_\ell^\tc(K,|\cI|)$.

If the unmasking size $\ell = |S^{(1)}|$ are fixed deterministically, these subset-entropy terms generally do not cancel and one is left with distribution-specific residual terms. In contrast, the chosen weights $\{w^\tc_\ell\}_{\ell \geq 1}$ over possible sizes in our schedule make these subset-entropy contributions telescope across $\ell$, leaving precisely the TC term, 
$$\TC\big(p_{X_{\cI} \mid X_{\cI^\mathsf{c}}}(\cdot \mid x_{\cI^\mathsf{c}})\big) = \sum_{i\in\cI} H\big(p_{X_i \mid X_{\cI^\mathsf{c}}}(\cdot \mid x_{\cI^\mathsf{c}})\big) - H\big(p_{X_{\cI} \mid X_{\cI^\mathsf{c}}}(\cdot \mid x_{\cI^\mathsf{c}})\big),$$
multiplied by the coefficient $f_\tc(K, |\cI|)$. 
Therefore, this telescoping mechanism is the core reason that our randomized unmasking strategy enables distribution-agnostic adaptivity.

This completes the induction and establishes Lemma~\ref{lem:tc-f}.

\paragraph{Step 4: Bounding coefficient $f_\tc(K,L)$.}

By \eqref{eq:tc-f-final}, it remains to bound the coefficient $ f_\tc(K, L) $. This is established by the following lemma; with proof deferred to Appendix \ref{sec:proof-f-bound}.
\begin{lemma}\label{lem:f-bound}
For any $K \geq 2$ and $K \leq L' \leq L$, the coefficient $f_\tc(K, L')$ satisfies
\begin{align}
f_\tc(K, L') \leq \frac{H_{L'-K+1} - 1}{K + H_{L'-K+1} - 2}.\label{eq:f-bound}
\end{align}
where $H_n \defn \sum_{i=1}^{n} i^{-1}$ denotes the $n$-th harmonic number.
In particular, one has
\begin{equation}
    \label{eq:f-bound-final}
    f_\tc(K, L) \leq \frac{H_{L-K+1} - 1}{K + H_{L-K+1} - 2}. 
\end{equation}
\end{lemma}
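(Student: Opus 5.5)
The plan is to avoid working with the explicit weight products. Instead, I would derive a two-index recursion for $f_\tc$ that lets me induct on $L'$ inside an outer induction on $K$. Write $g(K,L') \defn \frac{H_{n}-1}{K+H_{n}-2}$ with $n = L'-K+1$. The target is then equivalent to
\[
\frac{1}{1-f_\tc(K,L')} \le \frac{1}{1-g(K,L')} = 1 + \frac{H_n - 1}{K-1}.
\]
This form is convenient because $1-f_\tc(K,L') = A(K,L')/\Psi_\tc(K,L')$, where $A(K,L') \defn 1+(L'-2)f_\tc(K-1,L'-1)$.

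\textbf{Step 1 (recursion for $\Psi_\tc$).} Comparing \eqref{eq:wl-tc-def} at $(K,L')$ and $(K,L'-1)$ shows that $w^\tc_\ell(K,L')$ equals the $i=1$ factor $r \defn (L'-1)f_\tc(K-1,L'-1)/A(K,L'-1)$ times $w^\tc_{\ell-1}(K,L'-1)$. Summing over $\ell$ gives
\[
\Psi_\tc(K,L') = 1 + r\,\Psi_\tc(K,L'-1).
\]
The relation $\Psi_\tc(K,L'-1) = A(K,L'-1)/(1-f_\tc(K,L'-1))$ cancels the $A(K,L'-1)$ inside $r$. This yields
\[
\frac{1}{1-f_\tc(K,L')} = \frac{1 + (L'-1)\,x\,c}{1+(L'-2)\,x},\qquad x = f_\tc(K-1,L'-1),\quad c = \frac{1}{1-f_\tc(K,L'-1)}.
\]
So each level only involves $f_\tc(K-1,L'-1)$ and $f_\tc(K,L'-1)$.

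\textbf{Step 2 (base cases).} For $K=2$, the weights telescope to $w_\ell = (L'-1)/(L'-\ell)$, so $\Psi_\tc(2,L') = (L'-1)H_{L'-1}$. Hence $f_\tc(2,L') = 1 - 1/H_{L'-1}$, which matches the bound with equality. For $L'=K$ (that is, $n=1$), we have $\Psi_\tc = 1$. I would show inductively that $f_\tc(K,K) = 0 = g(K,K)$, using $f_\tc(K-1,K-1) = 0$. Along the way I would also record $0 \le f_\tc < 1$, which is needed for $c \ge 1$.

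\textbf{Step 3 (monotonicity and substitution).} The right-hand side of Step 1 is increasing in $c$ for $x \ge 0$. Its derivative in $x$ has the sign of $(L'-1)c - (L'-2) > 0$ when $c \ge 1$, so it is also increasing in $x$. Both arguments can therefore be replaced by their inductive upper bounds: $x \le g(K-1,L'-1)$ from the outer hypothesis (which shares the same $n$), and $c \le 1 + (H_{n-1}-1)/(K-1)$ from the inner hypothesis in $L'$. It then remains to verify the closed-form inequality
\[
\frac{1 + (L'-1)\,g(K-1,L'-1)\bigl(1+\tfrac{H_{n-1}-1}{K-1}\bigr)}{1 + (L'-2)\,g(K-1,L'-1)} \le 1 + \frac{H_n-1}{K-1},\qquad g(K-1,L'-1) = \frac{H_n-1}{K+H_n-3}.
\]

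\textbf{Main obstacle.} I expect the last display to be the hard part. After clearing denominators and using $H_n = H_{n-1} + 1/n$ with $L' = n+K-1$, it should reduce to a polynomial inequality in $K$, $n$ and $H_{n-1}$. I would first try to show that it holds with slack proportional to $(H_n-1)\bigl(1 - \tfrac{L'-1}{n(K+H_n-3)}\bigr)$-type terms. If the sign does not come out cleanly, I would fall back on the crude bounds $H_{n-1} \ge 1$ and $K \ge 3$ to control the remaining terms. The $K=2$ equality case indicates how tight the inequality is, and it serves as a sanity check on the algebra.
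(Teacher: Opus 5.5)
Your proposal is correct and is essentially the paper's proof. You use the same reformulation $\Psi_\tc(K,L')/\bigl(1+(L'-2)f_\tc(K-1,L'-1)\bigr) \le (K+H_n-2)/(K-1)$, the same recursion $\Psi_\tc(K,L') = 1 + r\,\Psi_\tc(K,L'-1)$, the same base cases ($K=2$ with equality, and $L'=K$), and the same induction hypotheses at $(K,L'-1)$ and $(K-1,L'-1)$.

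The step you flag as the main obstacle closes in one line, as in the paper. After clearing the denominator the inequality is linear in $g$ and reads $g\,\bigl(K+H_n-3-\tfrac{K-2}{n}\bigr)\le H_n-1$. This holds because $g = g(K-1,L'-1) = \frac{H_n-1}{K+H_n-3}$ and $\frac{K-2}{n}\ge 0$.
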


To conclude, combining \eqref{eq:train-sample-decouple}, \eqref{eq:tc-f-final}, and \eqref{eq:f-bound-final} yields Theorem~\ref{thm:main-tc}.

\subsection{Analysis for DTC-adaptive scheme (Proof of Theorem~\ref{thm:main-dtc})}
\label{sec:analysis-dtc}

We proceed to present the proof of the convergence theory of the DTC-adaptive scheme $\pi_{\dtc}$ in Theorem~\ref{thm:main-dtc}. It follows a similar high-level structure as the TC case in Section~\ref{sec:analysis-tc}.
Therefore, we only highlight the key differences in the analysis, while omitting details that are essentially the same as in the TC case.

In Step 1, we decouple the prediction error from the parallel sampling errors via \eqref{eq:train-sample-decouple}.

In Step 2, we invoke the recursive decomposition of the KL sampling error (Lemma~\ref{prop:recursive}), aiming to relate the KL error to the TC and DTC of the remaining conditional distribution.

The main technical difference in the DTC case arises in the following Step 3.

\paragraph{Step 3: Characterization of KL error in terms of TC and DTC.}

Unlike the TC-adaptive scheme~$\pi_{\tc}$, the expected KL sampling error of the DTC-adaptive scheme $\pi_{\dtc}$ generally cannot be characterized solely in terms of DTC.
This is natural since the one-step error incurred in the first iteration is exactly the TC of the tokens unmasked in the first batch (see Lemma~\ref{lem:single_batch_error}), which generally cannot be tightly controlled by DTC alone.
Instead, we show that the choice of the first-batch size distribution in $\pi_{\dtc}(K,\cI)$ yields an expected KL  error controlled by a linear combination of TC and DTC of the remaining conditional distribution.
Additionally, the proportionality factor is given by $f_\dtc(K, |\cI|)$ defined in \eqref{eq:g-def}.
In particular, when $\cI = [L]$, the TC term vanishes and the KL sampling error becomes proportional to the DTC of the data distribution.

This is summarized by the following lemma; with proof deferred to Appendix \ref{sec:proof-dtc-g}.
\begin{lemma}\label{lem:dtc-g}
For any masked index set $ \cI \subseteq [L]$ with $K\leq |\cI|\leq L$, conditioned on the already unmasked tokens $ X_{\cI^\mathsf{c}} =x_{\cI^\mathsf{c}}$, the DTC-adaptive unmasking scheme $\pi_{\dtc} (K,\cI)$ satisfies
\begin{align}\label{eq:dtc-g}
    &\E_{S \sim \pi_{\dtc}(K, \cI) } \Big[\KL \big(p_{X_{\cI} \mid X_{\cI^\setc}}(\cdot \mid x_{\cI^\mathsf{c}}) \,\|\, p_{\overline Y_{\cI}^S \mid \overline Y_{\cI^\setc}}(\cdot \mid x_{\cI^\mathsf{c}} ) \big) \Big] \notag \\ 
    &\quad\leq f_\dtc(K, |\cI|)\bigg(\frac{L - |\cI|}{|\cI|} \cdot\TC \big(p_{X_{\cI} \mid X_{\cI^\setc}}(\cdot \mid x_{\cI^\mathsf{c}})\big) + \frac{L}{|\cI|} \cdot\DTC \big(p_{X_{\cI} \mid X_{\cI^\setc}}(\cdot \mid x_{\cI^\mathsf{c}})\big) \bigg).
\end{align}
In particular, when $\cI = [L]$, we have
\begin{align}\label{eq:dtc-g-final}
\E_{S \sim \pi_{\dtc}(K, [L]) } \big[\KL(p_{\data} \,\|\, p_{\overline Y^S_\out} ) \big] \leq f_\dtc(K, L) \cdot \DTC(p_\data).
\end{align}
\end{lemma}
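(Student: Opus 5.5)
The plan is to prove Lemma~\ref{lem:dtc-g} by induction on $K$, following the same template as Lemma~\ref{lem:tc-f}. Throughout, we condition on $X_{\cI^\setc}=x_{\cI^\setc}$ and write $n=|\cI|$ and $m=L-n$. For $0\le j\le n$ let $h_j$ be the average of $H(X_\cA\mid X_{\cI^\setc}=x_{\cI^\setc})$ over all $j$-subsets $\cA\subseteq\cI$. In this notation
\[
\TC=nh_1-h_n,\qquad \sum_{i\in\cI}H(X_i\mid X_{\cI\setminus\{i\}},\cdot)=n(h_n-h_{n-1}).
\]
The right-hand side of \eqref{eq:dtc-g} therefore becomes
\[
f_\dtc(K,n)\big(m h_1+(1-L)h_n+Lh_{n-1}\big).
\]
We will use two standard facts about $(h_j)$ (Han's inequality / submodularity of entropy):
\begin{itemize}
\item[(i)] $h_j-h_{j-1}$ is nonincreasing in $j$, i.e., $j\mapsto h_j$ is concave;
\item[(ii)] $h_j\ge 0$, and consequently the bracket on the right-hand side is a nonnegative combination of TC and DTC.
\end{itemize}

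\emph{Base case $K=1$.} By Lemma~\ref{lem:single_batch_error}, the left-hand side equals $\TC=nh_1-h_n$. With $f_\dtc(1,n)=(n-1)/(m+1)$, it suffices to show
\[
(m+1)(nh_1-h_n)\le (n-1)\big(mh_1+(1-L)h_n+Lh_{n-1}\big).
\]
Rearranging, this reduces to a linear inequality in $h_1,h_{n-1},h_n$. It should follow from concavity: $h_{n-1}\ge \frac{n-2}{n-1}h_n+\frac1{n-1}h_1$ and $h_n-h_{n-1}\le h_1$. Independent coordinates give equality in the $m=0$ case, which is a useful sanity check.

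\emph{Inductive step.} Apply Lemma~\ref{prop:recursive} and then the induction hypothesis to the residual problem. For the residual problem the index set is $\cI\setminus S^{(1)}$ of size $n-\ell$, there are $m+\ell$ unmasked tokens, and the prefactor is $f_\dtc(K-1,n-\ell)$. Averaging over a uniform $\ell$-subset $S^{(1)}$ and over $x_{S^{(1)}}$, each conditional entropy turns into a difference of the $h_j$'s:
\begin{itemize}
\item the average residual single-coordinate entropy is $h_{\ell+1}-h_\ell$;
\item the residual full entropy is $h_n-h_\ell$;
\item the residual leave-one-out entropy is $h_{n-1}-h_\ell$;
\item the first-batch TC is $\ell h_1-h_\ell$.
\end{itemize}
Collecting terms, the conditional contribution given $|S^{(1)}|=\ell$ is an explicit $a_\ell=c_1h_1+\alpha_\ell h_\ell+\beta_\ell h_{\ell+1}+\gamma_\ell h_{n-1}+\delta_\ell h_n$. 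Its coefficients involve $f_\dtc(K-1,n-\ell)$ and $(m+\ell)$. The next step is to verify that the ratio
\[
\frac{w^\dtc_{\ell+1}}{w^\dtc_\ell}=\frac{(m+\ell)f_\dtc(K-1,n-\ell)}{1+(m+\ell+2)f_\dtc(K-1,n-\ell-1)}
\]
is exactly what makes $w_\ell\beta_\ell+w_{\ell+1}\alpha_{\ell+1}=0$. This makes the interior $h_\ell$ terms telescope in $\Psi_\dtc^{-1}\sum_\ell w_\ell a_\ell$. What survives is an $h_1$ term from $\ell=1$, a boundary term in $h_{n-K+2}$ from $\ell=n-K+1$, and the $h_{n-1},h_n$ terms. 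The update \eqref{eq:g-def}, $f_\dtc(K,n)=-1+\big(1+(m+2)f_\dtc(K-1,n-1)\big)/\Psi_\dtc$, should be precisely the normalization that makes the surviving combination equal $f_\dtc(K,n)\big(mh_1+(1-L)h_n+Lh_{n-1}\big)$, up to the boundary term.

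\emph{Main obstacle.} I expect the hard part to be this last bookkeeping, namely identifying the leftover boundary term and showing it has the right sign. This is presumably why the lemma states an inequality rather than the identity of Lemma~\ref{lem:tc-f}. My first attempt would be to use concavity (i) to replace $h_{n-K+2}$ by its linear interpolation between $h_1$ (or $h_{n-1}$) and $h_n$. This should only weaken the bound, since its coefficient in the leftover has a definite sign because $f_\dtc(K-1,\cdot)\ge0$. If that bound is too lossy, I would instead carry the residual $h$-terms with nonnegative coefficients into the induction hypothesis.

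\emph{Specialization to $\cI=[L]$.} Setting $\cI=[L]$ gives $m=0$, so the TC term vanishes and the factor $L/|\cI|$ equals $1$. This yields \eqref{eq:dtc-g-final}.
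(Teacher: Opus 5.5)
Your plan matches the paper's proof, written in averaged-entropy notation. Both use induction on $K$, Lemma~\ref{prop:recursive} together with the induction hypothesis, and averaging over a uniform $\ell$-subset. In both, the weight ratio $w^\dtc_{\ell+1}/w^\dtc_\ell$ is chosen so that the interior subset-entropy terms telescope; your $\alpha_\ell=-[1+(m+\ell+1)f_\dtc(K-1,n-\ell)]$ and $\beta_\ell=(m+\ell)f_\dtc(K-1,n-\ell)$ are the paper's $\zeta_\ell,\eta_\ell$. Your base case is also the paper's. The concavity inequality $(n-1)h_{n-1}\ge (n-2)h_n+h_1$ is exactly equivalent to $\TC\le (n-1)\DTC$ (Lemma~\ref{lem:dtc-tc-relation}). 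You derive it from submodularity, whereas the paper uses the cyclic chain-rule argument; this is a fine substitute.

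The step you single out as the main obstacle does not exist. The leftover $h_{n-K+2}$ term from $\ell=n-K+1$ has coefficient $w^\dtc_{n-K+1}(m+n-K+1)f_\dtc(K-1,K-1)$. This vanishes because $f_\dtc(k,k)=0$ for every $k$: we have $f_\dtc(1,1)=0$ and $\Psi_\dtc(k,k)=1$, so \eqref{eq:g-def} gives $f_\dtc(k,k)=(L-k+2)f_\dtc(k-1,k-1)$. So no interpolation is needed. Given the induction hypothesis, the inductive step is an exact identity, and the inequality in the lemma comes entirely from the base case. It does not come from a boundary term, so your explanation of why the lemma is an inequality is wrong.

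What actually remains is the computation you left as ``should be''. Write $f_\ell\defn f_\dtc(K-1,n-\ell)$ and use $w_{\ell+1}\big(1+(m+\ell+2)f_{\ell+1}\big)=w_\ell(m+\ell)f_\ell$ together with $f_{n-K+1}=0$. Two telescoping sums then give
\[
\textstyle\sum_\ell w_\ell f_\ell = C \quad\text{and}\quad \sum_\ell \ell w_\ell-\big(1+(m+2)f_1\big)=mC,
\qquad\text{where } C\defn 1+(m+2)f_1-\Psi_\dtc(K,n).
\]
Hence $\sum_\ell w_\ell a_\ell=C\big(mh_1+(1-L)h_n+Lh_{n-1}\big)$, and $C/\Psi_\dtc(K,n)=f_\dtc(K,n)$ by \eqref{eq:g-def}, which closes the induction.
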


The proof strategy of Lemma~\ref{lem:dtc-g} follows the same inductive template as Lemma~\ref{lem:tc-f} for the TC-adaptive scheme. Concretely, we express the target KL error as a weighted sum of entropies of subsequences of $X$. The key difference is that the weights $\{w_\ell^\dtc\}_{\ell \geq 1}$ are designed to make these subset-entropy terms telescope into a linear combination of TC and DTC, rather than collapsing to DTC alone, as one might hope by direct analogy with the TC case.

\paragraph{Step 4: Bounding coefficient $ f_\dtc(K, L) $.}

Equipped with \eqref{eq:dtc-g-final}, we are left to control the coefficient $ f_\dtc(K, L) $. This is established by the following lemma; with proof deferred to Appendix \ref{sec:proof-g-bound}.
\begin{lemma} \label{lem:g-bound}
    For any $K \geq 2$ and $K\leq L' \leq L$, the coefficient $ f_\dtc(K, L') $ satisfies
    \begin{align}
    f_\dtc(K, L') \leq \frac{\min\{L'-K, \frac{L}{K}(H_{L-1} - H_{L-L'} )\}}{L - \min\{L'-K, \frac{L}{K}(H_{L-1} - H_{L-L'} )\}}.\label{eq:g-bound}
    \end{align}
    where $H_n \defn \sum_{i=1}^{n} i^{-1}$ denotes the $n$-th harmonic number and we set $H_0 \defn 0$ for convenience.
    In particular, for $K > H_{L-1}$, we have
    \begin{equation}
        \label{eq:g-bound-final}
        f_\dtc(K, L) \leq \frac{\min\{L'-K, \frac{L}{K}H_{L-1}\}}{L - \min\{L'-K, \frac{L}{K}H_{L-1}\}} \leq \frac{H_{L-1}}{K - H_{L-1}}.
    \end{equation}
\end{lemma}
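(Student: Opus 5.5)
The plan is to prove \eqref{eq:g-bound} by induction on $K$, after a change of variables that linearizes the target. Write $m(K,L') \defn \min\{L'-K, \tfrac{L}{K}(H_{L-1}-H_{L-L'})\}$. Since $x\mapsto x/(L-x)$ is increasing on $[0,L)$, the claim $f_\dtc(K,L')\le m/(L-m)$ is equivalent to
\[
\varphi(K,L') \defn L - \frac{L}{1+f_\dtc(K,L')} \le m(K,L'),
\]
where I use $1+f_\dtc>0$. I will therefore prove the two bounds $\varphi(K,L')\le L'-K$ and $\varphi(K,L')\le \tfrac{L}{K}(H_{L-1}-H_{L-L'})$ separately, each by induction.

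\emph{Base case.} Although the lemma is stated for $K\ge 2$, I start the induction at $K=1$, where the definition gives $1+f_\dtc(1,L') = L/(L-L'+1)$, hence $\varphi(1,L') = L'-1$. The first bound then holds with equality. The second reads $L'-1\le L\sum_{j=L-L'+1}^{L-1} 1/j$, which holds because each of the $L'-1$ summands is at least $1/L$.

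\emph{Inductive step.} The recursion \eqref{eq:g-def} becomes
\[
\frac{1}{1+f_\dtc(K,L')} = \frac{\Psi_\dtc(K,L')}{1+(L-L'+2)f_\dtc(K-1,L'-1)},
\]
so upper bounds on $\varphi(K,L')$ correspond to lower bounds on $\Psi_\dtc(K,L')$ relative to the denominator. I would rewrite the consecutive ratios in \eqref{eq:wl-dtc-def},
\[
\frac{w_{\ell+1}^\dtc}{w_\ell^\dtc}=\frac{(L-L'+\ell)f_\dtc(K-1,L'-\ell)}{1+(L-L'+\ell+2)f_\dtc(K-1,L'-\ell-1)},
\]
in terms of $\varphi(K-1,\cdot)$ via $f=\varphi/(L-\varphi)$, and then insert the induction hypothesis. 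Care is needed with the direction: $f_\dtc(K-1,\cdot)$ appears in both numerator and denominator, so I would look for a telescoping identity. My guess is that $\Psi_\dtc$ can be written, up to the denominator factor, as a telescoping sum of quantities like $(L-L'+\ell)\,f_\dtc(K-1,L'-\ell)$, mirroring the entropy telescoping used to design the weights in Lemma~\ref{lem:dtc-g}. That would give an exact expression for $1/(1+f_\dtc(K,L'))$ as an average of level-$(K-1)$ quantities, from which monotone bounds follow.

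\emph{Bound $L'-K$.} I expect this to follow by showing $\varphi(K,L')\le \varphi(K-1,L'-1)$: intuitively, one forced unmasking step removes at least one degree of freedom. Concretely, I would keep only the $\ell=1$ term, $\Psi_\dtc\ge 1$, together with a direct comparison of the recursion.

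\emph{Harmonic bound.} This is the hard step. The $1/K$ factor must come from averaging over the random first batch size. Each step should reduce $\varphi$ by roughly $\varphi/K$, that is,
\[
\varphi(K,L') \le \tfrac{K-1}{K}\,\varphi_{\mathrm{avg}} + \tfrac{L}{K}\cdot(\text{increment}),
\]
where $\varphi_{\mathrm{avg}}$ is an average of $\varphi(K-1,\cdot)$ and the increment is the telescoping difference $H_{L-L'+\ell}-H_{L-L'}$. The plan is to plug $\varphi(K-1,L'-\ell)\le \tfrac{L}{K-1}(H_{L-1}-H_{L-L'+\ell})$ into the telescoped form of $\Psi_\dtc$ and check that the harmonic differences combine exactly. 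The main obstacle is controlling the ratio $\Psi_\dtc/(1+(L-L'+2)f_\dtc(K-1,L'-1))$ from below using only these one-sided bounds. If the bounds enter with the wrong sign, I would instead strengthen the induction hypothesis to an exact or two-sided description of $\varphi$.

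\emph{Conclusion.} Setting $L'=L$ gives $H_{L-L'}=H_0=0$, so $m\le \tfrac{L}{K}H_{L-1}$, which is less than $L$ when $K>H_{L-1}$. Monotonicity of $x/(L-x)$ then yields
\[
f_\dtc(K,L)\le \frac{(L/K)H_{L-1}}{L-(L/K)H_{L-1}} = \frac{H_{L-1}}{K-H_{L-1}},
\]
which is \eqref{eq:g-bound-final}.
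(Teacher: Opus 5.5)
\textbf{Gap: the inductive step is never supplied.} The reformulation through $\varphi$, starting the induction at $K=1$, treating the two terms of the minimum separately, and the final specialization $L'=L$ are all fine. The problem is the inductive step, which is the whole content of the lemma. You never give a way to lower-bound $\Psi_\dtc(K,L')$ relative to $1+(L-L'+2)f_\dtc(K-1,L'-1)$ using upper bounds on $f_\dtc(K-1,\cdot)$, and your one concrete suggestion fails.

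Keeping only $\Psi_\dtc\ge 1$ gives $f_\dtc(K,L')\le (L-L'+2)\,f_\dtc(K-1,L'-1)$, not $f_\dtc(K,L')\le f_\dtc(K-1,L'-1)$. For example, take $K=2$, $L=10$, $L'=5$. Then $\Psi_\dtc(2,5)=1+\frac{6}{7}+\frac{6}{8}+\frac{6}{9}\approx 3.27$ and $1+7f_\dtc(1,4)=4$. The crude bound yields only $f_\dtc(2,5)\le 3$, whereas the $L'-K$ bound you want here is $f_\dtc(2,5)\le \frac{3}{7}$.

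For the harmonic term you correctly name the obstruction: $f_\dtc(K-1,\cdot)$ appears in both the numerator and the denominator of every ratio $w^\dtc_{\ell+1}/w^\dtc_\ell$. But you leave it unresolved. An induction on $K$ alone that plugs one-sided bounds on $f_\dtc(K-1,\cdot)$ into the product has no control over those denominators.

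\textbf{Missing idea: a recursion in $L'$ and induction on $K+L'$.} Peel off the first factor of each product. This gives the exact recursion
\[
\Psi_\dtc(K,L')=1+\frac{(L-L'+1)f_\dtc(K-1,L'-1)}{1+(L-L'+3)f_\dtc(K-1,L'-2)}\,\Psi_\dtc(K,L'-1).
\]
Then run a strong induction on $K+L'$ rather than on $K$. Write $p(K,L')$ for the bound being proved (your $m$, or either of its two terms). The hypothesis at $(K,L'-1)$ is exactly
\[
\frac{\Psi_\dtc(K,L'-1)}{1+(L-L'+3)f_\dtc(K-1,L'-2)}\ge 1-\frac{p(K,L'-1)}{L}.
\]
So the troublesome denominator cancels, and
\[
\Psi_\dtc(K,L')\ge 1+(L-L'+1)f_\dtc(K-1,L'-1)\Bigl(1-\frac{p(K,L'-1)}{L}\Bigr),
\]
which is linear in $f_\dtc(K-1,L'-1)$. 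The goal then becomes
\[
\bigl(L-(L-L'+2)p(K,L')+(L-L'+1)p(K,L'-1)\bigr)\,f_\dtc(K-1,L'-1)\le p(K,L').
\]
This is trivial if the bracket is negative. Otherwise it needs only the upper bound on $f_\dtc(K-1,L'-1)$ supplied by the hypothesis at $(K-1,L'-1)$. The two terms then check as follows.
\begin{itemize}
\item With $p=L'-K$, the bracket equals $K-1$, and the check is $(K-1)(L'-K)\le (L'-K)(L-L'+K)$.
\item With $p=\frac{L}{K}D(L')$, where $D(n)\defn H_{L-1}-H_{L-n}$, the bracket equals $\frac{L}{K}\bigl(K-1-D(L')\bigr)$, and the check reduces to $D(L'-1)\le D(L')$.
\end{itemize}

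\textbf{On your averaging guess.} The identity you guessed does exist. The telescoping \eqref{eq:coeff-DTC-all} from the proof of Lemma~\ref{lem:dtc-g}, combined with \eqref{eq:g-def}, gives
\[
f_\dtc(K,L')=\Psi_\dtc(K,L')^{-1}\sum_{\ell}w^\dtc_\ell f_\dtc(K-1,L'-\ell).
\]
This yields $f_\dtc\ge 0$ and the $L'-K$ bound immediately. However, bounding the average by the largest inductive bound on its terms (attained at $\ell=1$) is too lossy for the harmonic bound. For $L=100$, $L'=50$, $K=10$ it gives about $0.079$, against the target $0.073$. So it does not replace the recursion in $L'$.
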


Putting \eqref{eq:train-sample-decouple}, \eqref{eq:dtc-g-final}, and \eqref{eq:g-bound-final} together finishes the proof of Theorem~\ref{thm:main-dtc}.

%% file: experiment.tex
\section{Numerical experiment}
\label{sec:experiment}
In this section, we evaluate the sampling performance of our proposed unmasking scheme and empirically verify the convergence theory in Theorems~\ref{thm:main-tc}--\ref{thm:main-dtc} via numerical experiments.

We take the vocabulary $ \bX$ to be the finite field $\mathbb{F}_q$, and let $p_{\data}$ be the uniform distribution over a $d$-dimensional Reed-Solomon(RS) code in $\mathbb{F}_q^L$.
Specifically,
\begin{align*}
    p_{\data}^{\text{RS}} (x) = \begin{cases}
    q^{-d}, & \text{if } \exists \text{ a polynomial } p \text{ over } \mathbb{F}_q \text{ with } \\
    & \mathsf{deg}(p) < d \text{ s.t.\ } x = (p(1), \ldots, p(L)), \\
    0, & \text{otherwise}.
    \end{cases}
\end{align*}
For this distribution, the TC and DTC can be calculated explicitly as
$ \TC(p_{\data}^{\text{RS}} ) 
= (L - d) \log q$ and $ \DTC(p_{\data}^{\text{RS}} ) = d \log q $, respectively.
As our focus is on the sampling stage, we use the optimal mask predictor $p^\star$, so that $ \veps_{\train} = 0$. 
Conditioned on any sequence of unmasking sets $S$, we can compute $p_{Y_\out^S}$ and $\KL \big(p_{\data}^{\text{RS}}  \| p_{Y_\out^S}\big)$ in closed form.
The expected KL error is approximated via $ 10^5 $ Monte Carlo simulations over the randomness of $S\sim \pi$.
We set the sequence length $ L = 2000 $ and alphabet size $ q = 2048 $.
For comparison, we also evaluate the fixed-size unmasking scheme that unmasks $ \lceil L / K \rceil $ tokens uniformly at random from the remaining masked positions at each iteration.

\begin{figure}[t]
\centering
\begin{tabular}{cc}
\includegraphics[width=0.425\textwidth]{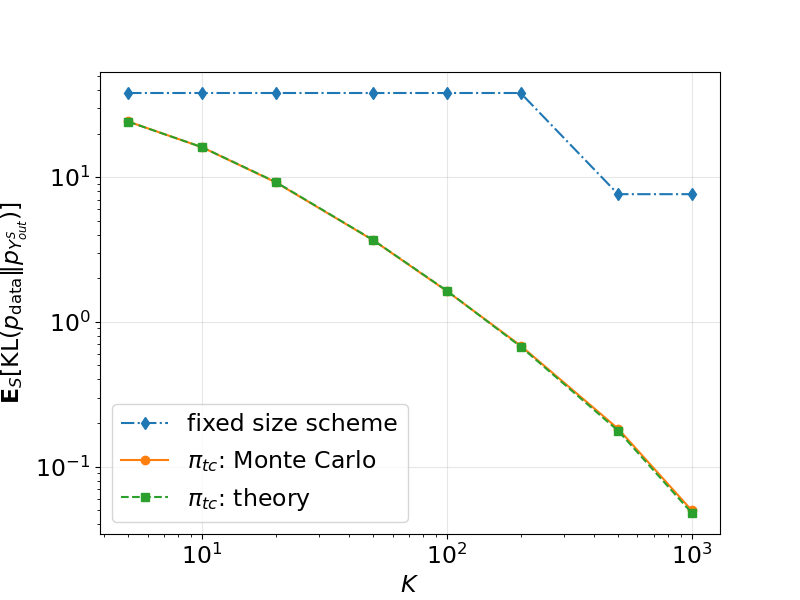}
& \includegraphics[width=0.425\textwidth]{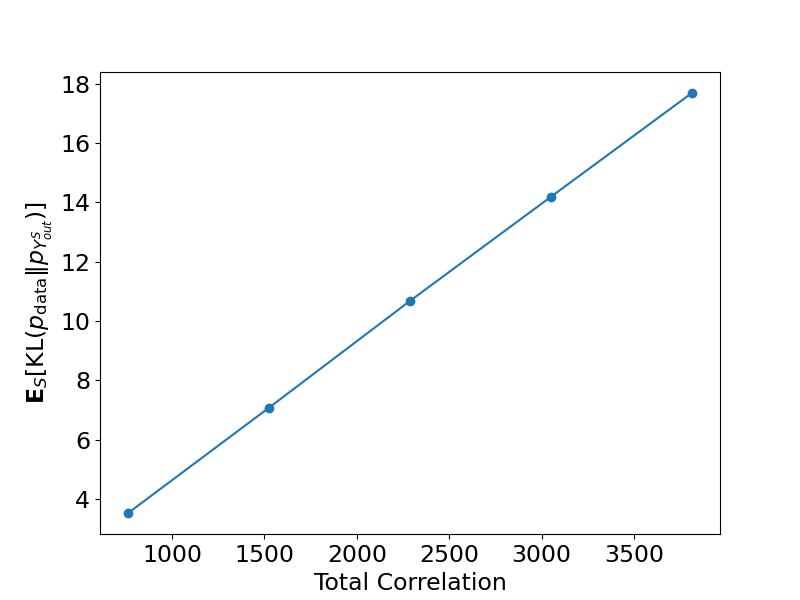}\tabularnewline
 (a) & (b) \tabularnewline
\end{tabular}
\caption{
Expected KL error of the TC-adaptive unmasking scheme $\pi_\tc$:  
(a) KL error vs.~iteration number $K$ for codimension $L-d=5$; 
(b) KL error vs.~TC for number of iterations $K=500$. Sequence length $L=2000$ and alphabet size $q=2048$.\label{fig:sampling}}
\end{figure}
Figure~\ref{fig:sampling} (a) plots the expected KL error of the TC-adaptive scheme $\pi_\tc$ v.s.\ the number of iterations $ K $  when codimension $L - d=5$. 
The empirical error of $\pi_\tc$ closely matches the theoretical prediction in Theorem~\ref{thm:main-tc}, decaying as $\wt O(K^{-1})$ with the number of iterations~$K$.
Moreover, it consistently outperforms the fixed-size unmasking scheme.
Figure~\ref{fig:sampling} (b) plots the expected KL error of $\pi_\tc$ v.s.\ the TC (controlled by the codimension).
As can be seen, the error increases linearly with TC. This is also consistent with our bound, verifying that the proposed schedule $\pi_\tc$ adaptively exploits the global dependence underlying the data distribution that is captured by TC.

\begin{figure}[t]
\centering
\begin{tabular}{cc}
\includegraphics[width=0.425\textwidth]{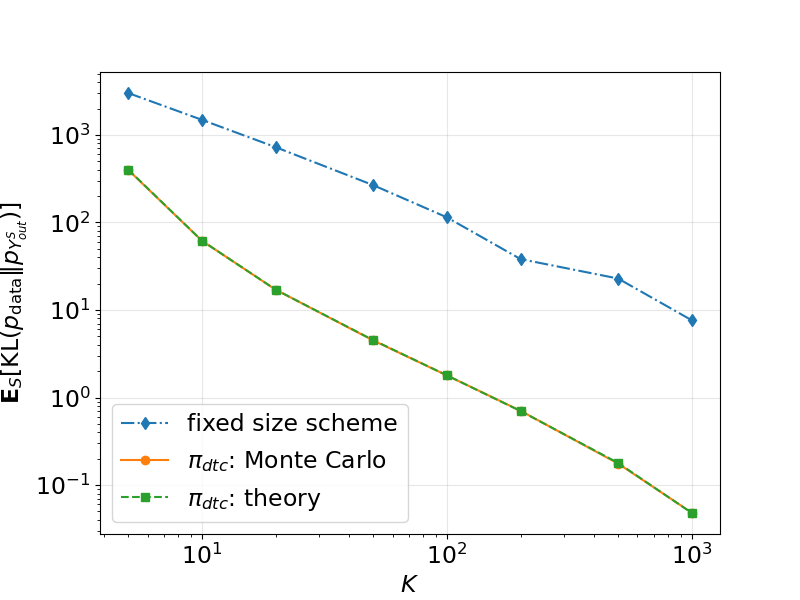}
& \includegraphics[width=0.425\textwidth]{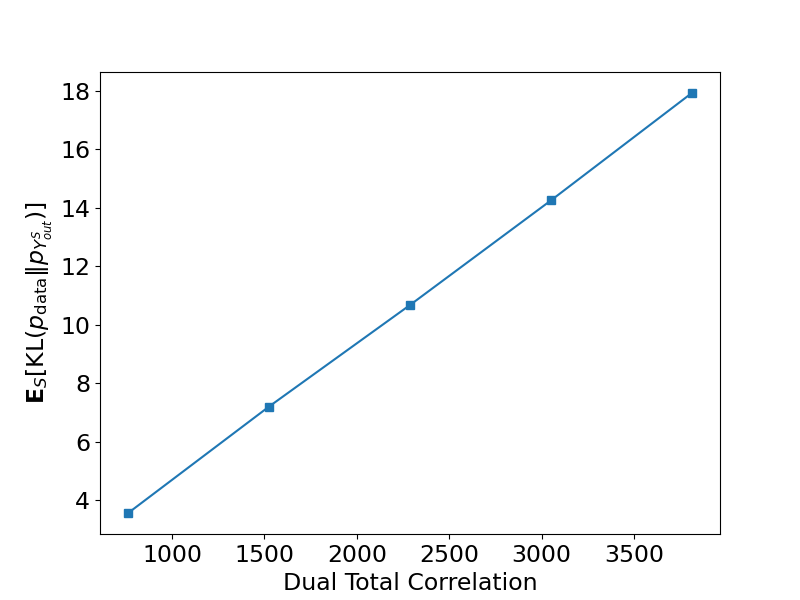}\tabularnewline
 (a) & (b) \tabularnewline
\end{tabular}
\caption{
    Expected KL error of the DTC-adaptive unmasking scheme $\pi_\dtc$:  (a) KL error vs.~iteration number $K$ for dimension $d=5$; 
(b) KL error vs.~DTC for number of iterations $K=500$.
Sequence length $L=2000$ and alphabet size $q=2048$.
\label{fig:sampling-dtc}}
\end{figure}
Figure~\ref{fig:sampling-dtc} (a) presents the expected KL error of the DTC-adaptive scheme $\pi_\dtc$ v.s.\ the number of iterations $ K $ for dimension $d=5$. 
The empirical performance of our unmasking strategy aligns closely with the theoretical decay $\wt O(K^{-1})$ predicted by Theorem~\ref{thm:main-dtc}.
In addition, it consistently surpasses the fixed-size unmasking scheme.
In Figure~\ref{fig:sampling-dtc} (b), we plot the expected KL error of $\pi_\dtc$ v.s.\ the DTC (controlled by the dimension).
The linear increase in the KL error confirms our theoretical bounds and demonstrates that the proposed schedule effectively adapts to the DTC inherent in the data distribution.

Overall, these numerical results confirm our main theoretical findings, demonstrating the effectiveness and adaptivity of our proposed unmasking scheme for distributions with low intrinsic dependence.

%% file: discussion.tex
\section{Discussion}
\label{sec:discussion}

This work has made progress toward a theoretical understanding of how DLMs can exploit intrinsic dependence structure to accelerate sampling.
Our main finding is that, without requiring distribution-specific knowledge or tuning, one can design a distribution-agnostic unmasking schedule that adapts to unknown structure and achieves minimal sampling error governed by TC, an information-theoretic measure of global dependence.
Beyond the convergence theory, our result highlights a concrete algorithmic principle: randomizing the unmasking sizes across iterations can enable structure-adaptive performance.

Before concluding, we point out several directions for future investigation.
First, it is natural to ask whether one can design an adaptive unmasking schedule that is simultaneously optimal in the small-TC and small-DTC regimes. This would sharpen our understanding of sampling acceleration for DLMs and potentially yield a single schedule that performs well across a wider range of distributions.
Second, our theory treats the mask predictor as given, whereas in practice it is trained under a particular masking distribution and then deployed under an inference-time schedule.
Thus developing an end-to-end analysis that couples training with sampling serves as an important research direction.
Third, our schedule chooses which positions to unmask uniformly at random conditioned on the unmasking size, while practical samplers often use state-dependent heuristics (e.g., confidence/entropy) to pick positions.
It would be valuable to characterize the sampling performance of such heuristics theoretically.

%% file: proof-TC.tex
\section{Proof of main lemmas}

\subsection{Notation}

Throughout the appendix, for simplicity of presentation, we denote 
\begin{align*}
\TC(X_{\cA} \mid x_{\cC}) &\defn \TC \big(p_{X_{\cA} \mid X_{\cC}}(\cdot \mid x_{\cC})\big); \\
\TC( X_{\cA }\mid X_{\cB } , x_{\cC}) &\defn \bE_{x_{\cB} \sim p_{X_{\cB} \mid X_{\cC}}(\cdot \mid x_{\cC})} \Big[ \TC\big(p_{X_{\cA} \mid X_{\cB}, X_{\cC}}(\cdot \mid x_{\cB}, x_{\cC}) \big)\Big]; \\
\DTC(X_{\cA} \mid x_{\cC}) &\defn \DTC \big(p_{X_{\cA} \mid X_{\cC}}(\cdot \mid x_{\cC})\big); \\
\DTC( X_{\cA }\mid X_{\cB } , x_{\cC}) &\defn \bE_{x_{\cB} \sim p_{X_{\cB} \mid X_{\cC}}(\cdot \mid x_{\cC})} \Big[ \DTC\big(p_{X_{\cA} \mid X_{\cB}, X_{\cC}}(\cdot \mid x_{\cB}, x_{\cC}) \big)\Big].
\end{align*}
for any index set $\cA,\cB,\cC\subseteq [L]$ and realization $x_{\cC}$ of the already-revealed tokens.

In addition, we use the shorthand notation
\begin{align*}
p_{X_\cA}(\cdot \mid x_{\cB}) \defn p_{X_\cA \mid X_{\cB}}(\cdot \mid x_{\cB})
\end{align*}
for any realization $x_{\cB}$ of the already-revealed tokens and for any index set $\cA $ disjoint from $\cB$.

\subsection{Proof of Lemma \ref{prop:recursive}}
\label{sec:proof-prop-recursive}
We decompose the KL divergence using the chain rule for probability distributions.
For any index set of currently masked tokens $\cI$ with $|\cI| \geq K$, condition on already generated tokens $x_{\cI^\mathsf{c}}$ and the unmasking sets in each iteration $ S = (S^{(1)}, S^{(2)}, \ldots, S^{(K)}) $, the sampling distribution of $ p_{\ol Y_{\cI}^S}(\cdot \mid x_{\cI^\mathsf{c}}) $ factorizes as:
\[
    p_{\ol Y_{\cI}^S}(x_{\cI} \mid x_{\cI^\mathsf{c}}) = \Big(\prod_{i \in S^{(1)}} p_{X_i}(\cdot \mid x_{\cI^\mathsf{c}})\Big)(x_{S^{(1)}}) \cdot p_{\ol Y_{\cI \setminus S^{(1)}}^{S^{(2:K)}}}(x_{\cI \setminus S^{(1)}} \mid x_{S^{(1)} \cup \cI^\mathsf{c}}).
\]
The true conditional distribution can be factorized similarly:
\[
    p_{X_{\cI}}(x_{\cI} \mid x_{\cI^\mathsf{c}}) = p_{X_{S^{(1)}}} (x_{S^{(1)}} \mid x_{\cI^\mathsf{c}}) \cdot p_{X_{\cI \setminus S^{(1)}}}(x_{\cI \setminus S^{(1)}} \mid x_{S^{(1)} \cup \cI^\mathsf{c}}).
\]

Using the chain rule for KL divergence:
\begin{align*}
    &\KL(p_{X_{\cI}}(\cdot \mid x_{\cI^\mathsf{c}}) \| p_{\ol Y_{\cI}^S}(\cdot \mid x_{\cI^\mathsf{c}})) \\
    &= \sum_{x_{\cI}} p_{X_{\cI}}(x_{\cI} \mid x_{\cI^\mathsf{c}}) \log \frac{p_{X_{\cI}}(x_{\cI} \mid x_{\cI^\mathsf{c}})}{p_{\ol Y_{\cI}^S}(x_{\cI} \mid x_{\cI^\mathsf{c}})} \\
    &= \sum_{x_{\cI}} p_{X_{\cI}}(x_{\cI} \mid x_{\cI^\mathsf{c}}) \log \frac{p_{X_{S^{(1)}}}(x_{S^{(1)}} \mid x_{\cI^\mathsf{c}}) p_{X_{\cI \setminus S^{(1)}}}(x_{\cI \setminus S^{(1)}} \mid x_{S^{(1)} \cup \cI^\mathsf{c}})}{(\prod_{i \in S^{(1)}} p_{X_i}(\cdot \mid x_{\cI^\mathsf{c}}))(x_{S^{(1)}}) \cdot p_{\ol Y_{\cI \setminus S^{(1)}}^{S^{(2:K)}}}(x_{\cI \setminus S^{(1)}} \mid x_{S^{(1)} \cup \cI^\mathsf{c}})} \\
    &= \sum_{x_{\cI}} p_{X_{\cI}}(x_{\cI} \mid x_{\cI^\mathsf{c}}) \left( \log \frac{p_{X_{S^{(1)}}}(x_{S^{(1)}} \mid x_{\cI^\mathsf{c}})}{(\prod_{i \in S^{(1)}} p_{X_i}(\cdot \mid x_{\cI^\mathsf{c}}))(x_{S^{(1)}})} + \log \frac{p_{X_{\cI \setminus S^{(1)}}}(x_{\cI \setminus S^{(1)}} \mid x_{S^{(1)} \cup \cI^\mathsf{c}})}{p_{\ol Y_{\cI \setminus S^{(1)}}^{S^{(2:K)}}}(x_{\cI \setminus S^{(1)}} \mid x_{S^{(1)} \cup \cI^\mathsf{c}})} \right) \\
    &= \KL\big(p_{X_{S^{(1)}}}(\cdot \mid x_{\cI^\mathsf{c}}) \| \prod_{i \in S^{(1)}} p_{X_i}(\cdot \mid x_{\cI^\mathsf{c}})\big) + \E_{x_{S^{(1)}} \sim p_{X_{S^{(1)}}}(\cdot \mid x_{\cI^\mathsf{c}})}\left[\KL\big(p_{X_{\cI \setminus S^{(1)}}}(\cdot \mid x_{S^{(1)} \cup \cI^\mathsf{c}}) \| p_{\ol Y_{\cI \setminus S^{(1)}}^{S^{(2:K)}}}(\cdot \mid x_{S^{(1)} \cup \cI^\mathsf{c}})\big)\right] \\
    &= \TC(X_{S^{(1)}}\mid x_{\cI^\mathsf{c}}) + \E_{x_{S^{(1)}} \sim p_{X_{S^{(1)}}}(\cdot \mid x_{\cI^\mathsf{c}})}\left[\KL\big(p_{X_{\cI \setminus S^{(1)}}}(\cdot \mid x_{S^{(1)} \cup \cI^\mathsf{c}}) \| p_{\ol Y_{\cI \setminus S^{(1)}}^{S^{(2:K)}}}(\cdot \mid x_{S^{(1)} \cup \cI^\mathsf{c}})\big)\right], 
\end{align*}
where the last equality follows from the application of Lemma \ref{lem:single_batch_error} to the first term and its proof is deferred to Section \ref{sec:proof-single_batch_error}.

Taking expectation over the random choice of $ S = (S^{(1)}, S^{(2)}, \ldots, S^{(K)}) $ according to $ \pi(K, \cI)$ yields the desired result:
\begin{align*}
    & \bE_{S} \big[\KL(p_{X_{\cI}}(\cdot \mid x_{\cI^\mathsf{c}}) \,\|\, p_{\ol Y_{\cI}^S}(\cdot \mid x_{\cI^\mathsf{c}}  ))\big]
     = \bE_{S^{(1)}} \big[ \TC(X_{S^{(1)}}\mid x_{\cI^\mathsf{c}}) \big] \notag\\ 
    & \quad + \bE_{S^{(1)},\,x_{S^{(1)}}} \bE_{S^{(2:K)}} \big[\KL \big(p_{X_{\cI \setminus S^{(1)}}}(\cdot \mid x_{S^{(1)} \cup \cI^\mathsf{c}}) \| p_{\ol Y_{\cI \setminus S^{(1)}}^{S^{(2:K)}}}(\cdot \mid x_{S^{(1)} \cup \cI^\mathsf{c}} ) \big) \big],
\end{align*}
where $S^{(2:K)} \defn (S^{(2)}, S^{(3)}, \ldots, S^{(K)})$, and the expectation is over $ S \sim \pi(K, \cI) $, $ S^{(1)} \sim \pi^{(1)}(K, \cI) $, $x_{S^{(1)}} \sim p_{X_{S^{(1)}}}(\cdot \mid x_{\cI^\mathsf{c}})$, and $S^{(2:K)} \sim \pi(K-1, \cI \setminus S^{(1)})$.

\subsection{Proof of Lemma~\ref{lem:tc-f}}
\label{sec:proof-tc-f}

The proof of Lemma \ref{lem:tc-f} proceeds by induction on the iteration count $K$. Consider an arbitrary set $\cI$ of currently masked token indices satisfying $|\cI| \geq K$, and condition on the already-revealed tokens $x_{\cI^\mathsf{c}}$. Our goal is to establish that
\begin{align*}
    \E_{S \sim \pi_{\tc}(K, \cI)} \big[\KL(p_{X_{\cI}}(\cdot \mid x_{\cI^\mathsf{c}}) \,\|\, p_{\ol Y_{\cI}^S}(\cdot \mid x_{\cI^\mathsf{c}} ) ) \big] = f_\tc(K, |\cI|) \TC \big(X_{\cI}\mid x_{\cI^\mathsf{c}} \big).
\end{align*}
For notational convenience, we write $ L' \defn |\cI| $ for the remainder of this proof.
Every distribution appearing below is implicitly conditioned on $x_{\cI^\mathsf{c}}$.

\subsubsection{Base case} For $ K = 1 $, the schedule $ \pi_{\tc}(1, \cI) $ reveals every token in $\cI$ at once. 
The resulting sampling distribution therefore factorizes into a product of conditional marginals. Invoking Lemma \ref{lem:single_batch_error} gives
\begin{align*}
    \E_{S \sim \pi_{\tc}(1, \cI)}[\KL(p_{X_{\cI}}(\cdot \mid x_{\cI^\mathsf{c}}) \| p_{\ol Y_{\cI}^S}(\cdot \mid x_{\cI^\mathsf{c}})  )] &= \KL\big(p_{X_{\cI}}(\cdot \mid x_{\cI^\mathsf{c}}) \| \prod_{i \in \cI} p_{X_i}(\cdot \mid x_{\cI^\mathsf{c}})   \big) \\
    &= \TC(X_{\cI} \mid x_{\cI^\mathsf{c}}) = f_\tc(1, L') \TC(X_{\cI} \mid x_{\cI^\mathsf{c}}),
\end{align*}
where the final step uses the fact that $ f_\tc(1, L') = 1 $ whenever $ L' \geq 2 $, and both sides vanish when $ L'=1 $.

\subsubsection{Inductive step} Suppose the statement has been verified for $ K - 1$ iterations. We now extend it to $ K $.

\paragraph{Step 1: Write KL error as a weighted sum $\sum_{\ell} w^\tc_{\ell} a_\ell$.}
Applying Lemma \ref{prop:recursive} with the conditioning on $x_{\cI^\mathsf{c}}$, the expected KL divergence incurred by $ \pi_{\tc}(K, \cI) $ decomposes as:
\begin{align}
    & \bE_{S \sim \pi_{\tc}(K, \cI)} \Big[\KL\big(p_{X_{\cI}}(\cdot \mid x_{\cI^\mathsf{c}}) \| p_{\ol Y_{\cI}^S}(\cdot \mid x_{\cI^\mathsf{c}}) \big)\Big]
     = \bE_{S^{(1)} \sim \pi^{(1)}_\TC(K, \cI)} \big[ \TC(X_{S^{(1)}}\mid x_{\cI^\mathsf{c}}) \big] \notag\\ 
    & \quad + \bE_{S^{(1)} \sim \pi^{(1)}_\TC(K, \cI),\,x_{S^{(1)}} \sim p_{X_{S^{(1)}}}(\cdot \mid x_{\cI^\mathsf{c}})} \Big[ \bE_{S^{(2:K)} \sim \pi_{\tc}(K-1, \cI \setminus S^{(1)})} \big[\KL \big(p_{X_{\cI \setminus S^{(1)}}}(\cdot \mid x_{S^{(1)} \cup \cI^\mathsf{c}}) \| p_{\ol Y_{\cI \setminus S^{(1)}}^{S^{(2:K)}}}(\cdot \mid x_{S^{(1)} \cup \cI^\mathsf{c}} ) \big) \big] \Big] \notag \\
    &= \E_{S^{(1)} \sim \pi^{(1)}_\TC(K, \cI), x_{S^{(1)}} \sim p_{X_{S^{(1)}}}(\cdot \mid x_{\cI^\mathsf{c}})} \big[ \TC(X_{S^{(1)}}\mid x_{\cI^\mathsf{c}} ) + f_\tc(K - 1, L' - |S^{(1)}|) \TC(X_{\cI \setminus S^{(1)}} \mid x_{S^{(1)} \cup \cI^\mathsf{c}}) \big] \notag \\
    &= \E_{S^{(1)}  \sim \pi^{(1)}_\tc (K, \cI)} \big[ \TC(X_{S^{(1)} } \mid x_{\cI^\mathsf{c}}) + f_\tc(K - 1, L' - |S^{(1)}|) \TC( X_{\cI \setminus S^{(1)} }\mid X_{S^{(1)} } , x_{\cI^\mathsf{c}}) \big],\label{eq:KL-decomp-TC}
\end{align}
where the first equality comes from Lemma \ref{prop:recursive}, the second applies the inductive assumption, and the third exploits the definitions of conditional total correlation and conditional entropy.

To evaluate the right-hand side of \eqref{eq:KL-decomp-TC}, recall the two-stage construction of the first unmasking set $S^{(1)}$: its cardinality $\ell = |S^{(1)}|$ is drawn with probability $w^\tc_{\ell}(K,L') / \Psi^\tc(K,L')$ for $\ell = 1, 2, \ldots, L'-K+1$, after which $S^{(1)}$ is selected uniformly at random from all size-$\ell$ subsets of $\cI$.
Accordingly, we introduce $a_\ell = a_\ell(K,L')$, the average over all such size-$\ell$ subsets, for each $1 \leq \ell \leq L'-K+1$:
\begin{align}
    a_\ell(K,L') \defn\frac{1}{\binom{L'}{\ell}}\sum_{S^{(1)} \subseteq \cI: |S^{(1)}|=\ell} \Big( \TC \big(X_{S^{(1)}}\mid x_{\cI^\mathsf{c}}\big) + f_\tc(K-1, L' - \ell)\TC(X_{\cI \setminus S^{(1)}} \mid X_{S^{(1)}}, x_{\cI^\mathsf{c}}) \Big). \label{eq:al-TC-def}
\end{align}
The expected KL sampling error from \eqref{eq:KL-decomp-TC} can then be written as the following weighted sum
\begin{align}\label{eq:KL-decomp-TC-2}
    \bE_{S \sim \pi_{\tc}(K, \cI)} \Big[\KL\big(p_{X_{\cI}}(\cdot \mid x_{\cI^\mathsf{c}}) \| p_{\ol Y_{\cI}^S}(\cdot \mid x_{\cI^\mathsf{c}}) \big)\Big] = \sum_{\ell = 1}^{L'-K+1} \bP\{|S|=\ell\} \cdot a_\ell = 
\frac{1}{\Psi^\tc(K,L')}\sum_{\ell = 1}^{L'-K+1} w^\tc_{\ell} a_\ell.
\end{align}
Our task therefore reduces to bounding the weighted sum $ \sum_{\ell=1}^{L'-K+1} w^\tc_{\ell} a_{\ell} $.

\paragraph{Step 2: Express $\sum_{\ell} w^\tc_{\ell} a_{\ell}$ as a telescoped sum.}
Expanding the total correlation and conditional total correlation in their entropic forms, we can reformulate $a_\ell$ from \eqref{eq:al-TC-def} as:
\begin{align}
    a_\ell &\numpf{i}{=} \frac{1}{\binom{L'}{\ell}}\sum_{S^{(1)} \subseteq \cI: |S^{(1)}|=\ell} \Bigg( \sum_{i \in S^{(1)}} H(X_i \mid x_{\cI^\mathsf{c}}) - H(X_{S^{(1)}} \mid x_{\cI^\mathsf{c}}) \notag \\
    &\quad \quad + f_\tc(K-1, L' - \ell) \bigg( \sum_{j \in \cI \setminus S^{(1)}} H(X_j \mid X_{S^{(1)}}, x_{\cI^\mathsf{c}}) - H(X_{\cI \setminus S^{(1)}} \mid X_{S^{(1)}}, x_{\cI^\mathsf{c}}) \bigg) \Bigg)\notag \\
    &\numpf{ii}{=} \frac{1}{\binom{L'}{\ell}}\sum_{S^{(1)} \subseteq \cI: |S^{(1)}|=\ell} \Bigg( \sum_{i \in S^{(1)}} H(X_i \mid x_{\cI^\mathsf{c}}) - H(X_{S^{(1)}} \mid x_{\cI^\mathsf{c}}) \notag \\
    &\quad \quad + f_\tc(K-1, L' - \ell) \bigg( \sum_{j \in \cI \setminus S^{(1)}} H(X_{\{j\} \cup S^{(1)}} \mid x_{\cI^\mathsf{c}}) - (L' - \ell - 1) H(X_{S^{(1)}} \mid x_{\cI^\mathsf{c}}) - H(X_{\cI} \mid x_{\cI^\mathsf{c}}) \bigg) \Bigg)\notag \\
    &\numpf{iii}{=} \frac{\ell}{L'} \sum_{i \in \cI} H(X_i \mid x_{\cI^\mathsf{c}}) - f_\tc(K-1, L'-\ell) H(X_{\cI} \mid x_{\cI^\mathsf{c}}) \notag \\
    &\quad - \underbrace{\frac{1 + (L' - \ell - 1) f_\tc(K - 1, L' - \ell)}{\binom{L'}{\ell}}}_{=:\alpha_\ell}\sum_{S^{(1)} \subseteq \cI: |S^{(1)}|=\ell} H(X_{S^{(1)}} \mid x_{\cI^\mathsf{c}}) \notag\\
    & \quad + \underbrace{\frac{(\ell + 1) f_\tc(K-1, L'-\ell)}{\binom{L'}{\ell}}}_{=:\beta_\ell} \sum_{T \subseteq \cI: |T|=\ell+1} H(X_{S^{(1)} } \mid x_{\cI^\mathsf{c}}). \label{eq:al-TC}
\end{align}
Here, (i) expands the total correlation and conditional total correlation into sums of entropies; (ii) applies the standard identity $H(X\mid Y) = H(X,Y) - H(Y)$; (iii) relies on a double-counting argument: $\sum_{S^{(1)} \subseteq \cI: |S^{(1)}|=\ell}\sum_{j \in \cI \setminus S^{(1)}} H(X_{\{j\} \cup S^{(1)}} \mid x_{\cI^\mathsf{c}}) = (\ell + 1) \sum_{S^{(1)}  \subseteq \cI: |S^{(1)} |=\ell+1} H(X_T \mid x_{\cI^\mathsf{c}})$, because each $(\ell+1)$-element set $T$ is obtained by adjoining any one of its $\ell+1$ members to its complement subset.

For the entropy sums in $a_\ell$ to cancel telescopically upon forming the weighted combination $\sum_{\ell = 1}^{L'-K+1} w^\tc_{\ell} a_\ell$, we require that the weights $w^\tc_{\ell}$ satisfy
\begin{align*}
w^\tc_{\ell} \beta_{\ell} = w^\tc_{\ell + 1} \alpha_{\ell + 1}, \quad \forall\, l = 1, 2, \ldots, L'-K.
\end{align*}
This telescoping requirement yields the recurrence
\begin{align}
w^\tc_{\ell + 1} = w^\tc_{\ell} \frac{(L'-\ell)  f_\tc(K-1, L'-\ell)}{1 + (L' - \ell - 2)  f_\tc(K - 1, L' - \ell - 1)}, \quad \forall\,l = 1, 2, \ldots, L'-K.
\end{align}
Initializing with $ w_1 = 1 $ and iterating, we arrive at
\begin{align}\label{eq:wl-TC}
w^\tc_{\ell} = \prod_{i=1}^{\ell - 1} \frac{(L'-i)f_\tc(K-1, L' - i)}{1 + (L'-i-2)f_\tc(K-1, L' - i - 1)},\quad \forall\,l = 1,2,\ldots, L'-K+1, 
\end{align}
which coincides precisely with the weight formula specified in \eqref{eq:wl-tc-def}.

\paragraph{Step 3: Compute entropy coefficients.}

Having established \eqref{eq:al-TC}, the next task is to determine the entropy coefficients that appear in the weighted sum $ \sum_{\ell = 1}^{L'-K+1} w^\tc_{\ell} a_\ell $.

\begin{itemize}
    \item 

We start by evaluating the coefficient multiplying $-H(X_{\cI} \mid x_{\cI^\mathsf{c}})$, namely $\sum_{\ell = 1}^{L'-K+1} w^\tc_{\ell} f_\tc(K-1, L'-\ell)$. The key observation is the following rearrangement of $f_\tc(K-1, L'-\ell)$:
\begin{align*}
&f_\tc(K-1, L'-\ell) = \\
&\frac{(L'-\ell)f_\tc(K-1, L'-\ell)}{1 + (L'-\ell-2)f_\tc(K-1, L'-\ell-1)} \Big(1 + (L'-\ell-2)f_\tc(K-1, L'-\ell-1) \Big) - (L'-\ell-1)f_\tc(K-1, L'-\ell) .
\end{align*}
Applying this identity, the coefficient evaluates to:
\begin{align}
    &\sum_{\ell = 1}^{L'-K+1} w^\tc_{\ell} f_\tc(K-1, L'-\ell) \numpf{i}{=} \sum_{\ell = 1}^{L'-K} w^\tc_{\ell} f_\tc(K-1, L'-\ell) \notag\\
    &\numpf{ii}{=} \sum_{\ell = 1}^{L'-K} w^\tc_{\ell}\bigg( \frac{(L'-\ell)f_\tc(K-1, L'-\ell)}{1 + (L'-\ell-2)f_\tc(K-1, L'-\ell-1)} - (L'-\ell-1)f_\tc(K-1, L'-\ell) \notag\\
    &\qquad \qquad\qquad\quad + \frac{(L'-\ell)f_\tc(K-1, L'-\ell)}{1 + (L'-\ell-2)f_\tc(K-1, L'-\ell-1)}(L'-\ell-2)f_\tc(K-1, L'-\ell-1)\bigg) \notag\\
    &\numpf{iii}{=} \sum_{\ell = 1}^{L'-K} w^\tc_{\ell + 1} - \sum_{\ell = 1}^{L'-K} \Big( w^\tc_{\ell} (L'-\ell-1)f_\tc(K-1, L'-\ell) - w^\tc_{\ell + 1} \big(L'-(\ell + 1)-1\big)f\big(K-1, L'-(\ell + 1)\big) \Big) \notag\\
    &= \sum_{\ell = 1}^{L'-K} w^\tc_{\ell + 1} - \Big(w_1 (L'-2)f_\tc(K-1, L'-1) - w^\tc_{L'-K+1} (K-2)f_\tc(K-1, K-1) \Big) \notag\\
    &\numpf{iv}{=} \sum_{\ell = 1}^{L'-K+1} w^\tc_{\ell} -1 - (L'-2)f_\tc(K-1,L'-1).\label{eq:coeff-TC-all}
\end{align}
Here, (i) follows from the vanishing property $f_\tc(K,K)=0$ for every $K\geq 1$; (ii) substitutes the rearrangement of $f_\tc(K-1, L'-\ell)$ derived above;
(iii) invokes the weight definition in \eqref{eq:wl-TC} and reorganizes into a telescoping sum; (iv) applies the initialization $ w_1 = 1 $ together with $f_\tc(K,K)=0$.

\item It remains to handle the coefficient of $ \sum_{i \in \cI} H(X_i \mid x_{\cI^\mathsf{c}}) $ appearing in \eqref{eq:al-TC}. Using the expression for $\alpha_1$ from \eqref{eq:al-TC}, the coefficient can be computed as:
\begin{align} 
    &\sum_{\ell = 1}^{L'-K+1} w^\tc_{\ell} \frac{\ell}{L'} - w_1\frac{1+(L'-2)f_\tc(K-1, L'-1)}{L'} \notag \\
    &= \sum_{\ell = 1}^{L'-K+1} w^\tc_{\ell} - \frac{1}{L'} \sum_{\ell = 1}^{L'-K+1} w^\tc_{\ell} (L'-\ell) - w_1\frac{1+(L'-2)f_\tc(K-1, L'-1)}{L'} \notag \\
    &\stackrel{\text{(i)}}{=} \sum_{\ell = 1}^{L'-K+1} w^\tc_{\ell} \notag \\
    &\quad - \frac{1}{L'} \sum_{\ell = 1}^{L'-K} \Big( w^\tc_{\ell} (L'-\ell)\big(1 + (L'-\ell-1)f_\tc(K-1, L'-\ell)\big) - w^\tc_{\ell} (L'-\ell)(L'-\ell-1)f_\tc(K-1, L'-\ell) \Big) \notag \\
    &\quad -\frac{1}{L'}w^\tc_{L'-K+1} (K-1) - w_1\frac{1+(L'-2)f_\tc(K-1, L'-1)}{L'} \notag \\
    &\stackrel{\text{(ii)}}{=} \sum_{\ell = 1}^{L'-K+1} w^\tc_{\ell} - \frac{1}{L'} \sum_{\ell = 1}^{L'-K} \Big( w^\tc_{\ell} (L'-\ell)\big(1 + (L'-\ell-1)f_\tc(K-1, L'-\ell)\big) \notag \\
    &\qquad \qquad \qquad \qquad \qquad - w^\tc_{\ell + 1} \big(L'-(\ell + 1)\big)\big[1 + \big(L'-(\ell + 1)-1\big)f\big(K-1, L'-(\ell + 1)\big)\big] \Big) \notag \\
    &\quad -\frac{1}{L'}w^\tc_{L'-K+1} (K-1) - w_1\frac{1+(L'-2)f_\tc(K-1, L'-1)}{L'} \notag \\
    &\stackrel{\text{(iii)}}{=} \sum_{\ell = 1}^{L'-K+1} w^\tc_{\ell} \notag \\
    &\quad - \frac{1}{L'} \Big( w_1 (L'-1)\big(1 + (L'-2)f_\tc(K-1, L'-1)\big) - w^\tc_{L'-K+1} (K-1)\big(1 + (K-2)f_\tc(K-1, K-1)\big) \Big) \notag \\
    &\quad -\frac{1}{L'}w^\tc_{L'-K+1} (K-1) - w_1\frac{1+(L'-2)f_\tc(K-1, L'-1)}{L'} \notag \\
    &\stackrel{\text{(iv)}}{=} \sum_{\ell = 1}^{L'-K+1} w^\tc_{\ell} - \big(1 + (L'-2)f_\tc(K-1, L'-1)\big). \label{eq:coeff-TC-single}
\end{align}
Here, (i) decomposes $ (L'-\ell) $ as \[
 (L'-\ell)(1 + (L'-\ell+1)f_\tc(K-1, L'-\ell)) - (L'-\ell)(L'-\ell+1)f_\tc(K-1, L'-\ell); \]
(ii) employs the weight recurrence from \eqref{eq:wl-TC}, which gives
\[
    w^\tc_{\ell} (L'-\ell)f_\tc(K-1, L'-\ell) = w^\tc_{\ell + 1} \big[1 + \big(L'-(\ell + 1)-1\big)f\big(K-1, L'-(\ell + 1)\big)\big];
\] 
(iii) collapses the resulting telescoping sum; (iv) substitutes $ w_1 = 1 $ and leverages $ f_\tc(K,K) = 0 $ for all $ K \geq 1 $.

\end{itemize}

\paragraph{Step 4: Putting everything together.}
Assembling the coefficient formulas \eqref{eq:coeff-TC-all} and \eqref{eq:coeff-TC-single} and inserting them back into \eqref{eq:al-TC}, all intermediate entropy sums cancel telescopically, and only the total correlation over $ \cI $ given $ x_{\cI^\mathsf{c}} $ remains:
\begin{align}
\sum_{\ell = 1}^{L'-K+1} w^\tc_{\ell} a_\ell
& = \bigg(\sum_{\ell = 1}^{L'-K+1} w^\tc_{\ell} - 1 - (L'-2)f_\tc(K-1, L'-1)\bigg) \bigg(\sum_{i \in \cI} H(X_i \mid x_{\cI^\mathsf{c}}) - H(X_{\cI} \mid x_{\cI^\mathsf{c}})\bigg) \notag\\
& = \bigg(\sum_{\ell = 1}^{L'-K+1} w^\tc_{\ell} - 1 - (L'-2)f_\tc(K-1, L'-1)\bigg) \TC(X_{\cI} \mid x_{\cI^\mathsf{c}}). \label{eq:weighted-sum-tc}
\end{align}

Substituting \eqref{eq:weighted-sum-tc} into the KL error expression \eqref{eq:KL-decomp-TC-2} yields
\begin{align}
    &\bE_{S \sim \pi_{\tc}(K, \cI)} \Big[\KL\big(p_{X_{\cI}}(\cdot \mid x_{\cI^\mathsf{c}}) \| p_{\ol Y_{\cI}^S}(\cdot \mid x_{\cI^\mathsf{c}}) \big)\Big] = \frac{1}{\Psi^{\tc}(K,L')}\sum_{\ell = 1}^{L'-K+1} w^\tc_{\ell} a_\ell \notag\\
     &\qquad = \bigg(1-\frac{1 + (L'-2)f_\tc(K-1, L'-1)}{\Psi^{\tc}(K, L')}\bigg) \TC(X_{\cI} \mid x_{\cI^\mathsf{c}}) \notag\\
     &\qquad = f_\tc(K, L') \TC(X_{\cI} \mid x_{\cI^\mathsf{c}}), \label{eq:KL-bound-final-TC}
\end{align}
with the final step following from the definition of $ f_\tc(K, L') $ in \eqref{eq:f-def}.
This closes the inductive argument and finishes the proof of Lemma \ref{lem:tc-f}.

%% file: proof-TC-f.tex

\subsection{Proof of Lemma \ref{lem:f-bound}}
\label{sec:proof-f-bound}
By the definition of $ f_\tc(K, L') $ \eqref{eq:f-def}, we only need to prove 
\begin{equation}
    \frac{\Psi^{\tc}(K, L')}{1 + (L'-2)f_\tc(K-1, L'-1)} \leq \frac{K + H_{L'-K+1} - 2}{K - 1}, \label{eq:tc-error-bound-Psi}
\end{equation}
from which it follows directly that
\[
    f_\tc(K, L') = 1 - \frac{1 + (L'-2)f_\tc(K-1, L'-1)}{\Psi^{\tc}(K, L')} \leq 1 - \frac{K-1}{K + H_{L'-K+1} - 2} = \frac{H_{L'-K+1} - 1}{K + H_{L'-K+1} - 2}.
\]

Our strategy is to establish \eqref{eq:tc-error-bound-Psi} via strong induction over $(K, L')$, ordered by the sum $K + L'$.
Using the expressions for $ w_\ell^{\dtc} $ from \eqref{eq:wl-dtc-def} and $\Psi_\dtc(K, L)$ from \eqref{eq:Psi-def}, one can write
\begin{align*}
\Psi^{\tc}(K, L') &= \sum_{\ell = 1}^{L'-K+1} \prod_{i=1}^{\ell - 1} \frac{(L'-i)f_\tc(K-1, L' - i)}{1 + (L'-i-2)f_\tc(K-1, L' - i - 1)}.
\end{align*}

\paragraph{Base case 1: $K = 2$.}
We begin by verifying the claim when the number of batches is at its smallest, namely $K = 2$.

Setting $K = 2$, note that $f_\tc(K-1, \ell) = f_\tc(1, \ell) = 1$ for every $\ell \geq 2$. Each factor in the product that defines $\Psi^{\tc}(2, L')$ then reduces to
\[
    \frac{(L'-i)f_\tc(1, L' - i)}{1 + (L'-i-2)f_\tc(1, L' - i - 1)} = \frac{L'-i}{L'-i-1},
\]
so the product collapses telescopically to
\[
    \prod_{i=1}^{\ell - 1} \frac{L'-i}{L'-i-1} = \frac{(L'-1)(L'-2)\cdots(L'-\ell+1)}{(L'-2)(L'-3)\cdots(L'-\ell)} = \frac{L'-1}{L'-\ell}.
\]
Consequently,
\begin{align*}
    \frac{\Psi^{\tc}(2, L')}{1 + (L'-2) f_\tc(1, L'-1)} &= \frac{1}{1 + (L'-2) \cdot 1} \sum_{\ell = 1}^{L'-1} \frac{L'-1}{L'-\ell} = \frac{1}{L'-1} \sum_{\ell = 1}^{L'-1} \frac{L'-1}{L'-\ell} = \sum_{j=1}^{L'-1} \frac{1}{j} = H_{L'-1},
\end{align*}
which verifies the claim for $K = 2$.

\paragraph{Base case 2: $L' = K \geq 2$.}
We now handle the boundary case where $L'$ equals $K$.

Observe that for $L' = K$, one has $\Psi^{\tc}(K, K) = 1$, while the induction hypothesis gives $f\big(K-1, L'-(L'-K+1)\big) = f_\tc(K-1, K-1) = 0$. It follows that
\[
    \frac{\Psi^{\tc}(K, K)}{1 + (K-2) f_\tc(K-1, K-1)} = \frac{1}{1 + (K-2) \cdot 0} = 1 = \frac{K + H_1 - 2}{K - 1} = \frac{K - 1}{K - 1}
\]
and 
\[
    f_\tc(K, K) = 1 - \frac{\Psi^{\tc}(K, K)}{1 + (K-2) f_\tc(K-1, K-1)} = 0.
\] 
This settles the diagonal base case.

\paragraph{Inductive step: $L' > K > 2$.}

Suppose \eqref{eq:tc-error-bound-Psi} is valid for every pair $(K', L'')$ satisfying $K' + L'' < K + L'$. Our goal is to verify it for $(K, L')$ with $K \geq 3$ and $L' > K$, which amounts to showing 
\begin{align}\label{eq:f-bound-inductive-goal}
    \Psi^{\tc}(K, L') \leq \big(1 + (L'-2)f_\tc(K-1, L'-1) \big)\cdot \frac{K + H_{L'-K+1} - 2}{K - 1}.
\end{align}

A key ingredient is the following recurrence linking $\Psi^{\tc}(K, L')$ to $\Psi^{\tc}(K, L'-1)$:
\begin{align}
    \Psi^{\tc}(K, L') - 1 &= \frac{(L' - 1)f_\tc(K - 1, L' - 1)}{1 + (L' - 3)f_\tc(K - 1, L' - 2)} \Psi^{\tc}(K, L'-1). \label{eq:R-Psi-tc}
\end{align}
We postpone its derivation to the end of this subsection.

Taking \eqref{eq:R-Psi-tc} as given, we carry out the induction argument.
By the inductive assumption applied to the pair $(K, L'-1)$, we obtain
\[
    \frac{\Psi^{\tc}(K, L'-1)}{1 + (L' - 3)f_\tc(K - 1, L' - 2)} \leq \frac{K + H_{L'-K} - 2}{K - 1}.
\]
Substituting this estimate into \eqref{eq:R-Psi-tc} yields the following upper bound on $\Psi^{\tc}(K, L')$:
\begin{align*}
    \Psi^{\tc}(K, L')  &\leq 1+ \frac{(L' - 1)f_\tc(K - 1, L' - 1)}{1 + (L' - 3)f_\tc(K - 1, L' - 2)} \cdot \bigl(1 + (L' - 3)f_\tc(K - 1, L' - 2)\bigr) \cdot \frac{K + H_{L'-K} - 2}{K - 1} \\
    &= 1 + \frac{(L' - 1)f_\tc(K - 1, L' - 1)(K + H_{L'-K} - 2)}{K - 1}.
\end{align*}
Therefore, proving \eqref{eq:f-bound-inductive-goal} reduces to verifying that
\[
    1 + \frac{(L' - 1)f_\tc(K - 1, L' - 1)(K + H_{L'-K} - 2)}{K - 1} \leq \frac{(1 + (L'-2)f_\tc(K - 1, L' - 1))(K + H_{L'-K+1} - 2)}{K - 1}.
\]
After multiplying through by $(K - 1)$ and reorganizing, this becomes
\begin{equation}\label{eq:f-bound-key}
    (K - 1) + (L' - 1)f_\tc(K - 1, L' - 1)(K + H_{L'-K} - 2) \leq \big(1 + (L' - 2)f_\tc(K - 1, L' - 1)\big)(K + H_{L'-K+1} - 2).
\end{equation}

Isolating the factor $f_\tc(K - 1, L' - 1)$ and simplifying, \eqref{eq:f-bound-key} can be rewritten as
\[
    f_\tc(K - 1, L' - 1) \big[ (L'-1)(K + H_{L'-K} - 2) - (L'-2)(K + H_{L'-K+1} - 2) \big] \leq (K + H_{L'-K+1} - 2) - (K - 1) = H_{L'-K+1} - 1.
\]
The bracketed expression admits the following simplification:
\begin{align*}
    &(L'-1)(K + H_{L'-K} - 2) - (L'-2)(K + H_{L'-K+1} - 2) \notag\\
    &\qquad = (K + H_{L'-K} - 2) + (L'-2)\big(H_{L'-K} - H_{L'-K+1}\big) \notag\\
    &\qquad = (K + H_{L'-K} - 2) - \frac{L'-2}{L'-K+1} \notag\\
    &\qquad = K - 2 + H_{L'-K+1} - \frac{1}{L'-K+1} - \frac{L'-2}{L'-K+1} \notag\\
    &\qquad = K - 3 + H_{L'-K+1} - \frac{K - 2}{L' - K + 1}.
\end{align*}
with the second and third equalities following from the harmonic number identity $H_{L'-K+1} = H_{L'-K} + (L'-K+1)^{-1}$.
Combining the above, it remains to establish that
\begin{equation}\label{eq:f-bound-bracket-simplify}
    \Big(K - 3 + H_{L'-K+1} - \frac{K - 2}{L' - K + 1}\Big) f_\tc(K - 1, L' - 1) \leq H_{L'-K+1} - 1.
\end{equation}

To complete the argument, invoke the induction hypothesis for $(K-1, L'-1)$ --- valid because $(K-1) + (L'-1) < K + L'$ --- to deduce that
\[
    f_\tc(K - 1, L' - 1) \leq \frac{H_{(L'-1)-(K-1)+1} - 1}{(K-1) + H_{(L'-1)-(K-1)+1} - 2} = \frac{H_{L'-K+1} - 1}{K - 3 + H_{L'-K+1}}.
\]
Using this, the left-hand side of \eqref{eq:f-bound-bracket-simplify} is bounded above by
\begin{align*}
    \Big(K - 3 + H_{L'-K+1} - \frac{K - 2}{L' - K + 1}\Big)  f_\tc(K - 1, L' - 1) &\leq (K - 3 + H_{L'-K+1} )  f_\tc(K - 1, L' - 1) \\
    &\leq (K - 3 + H_{L'-K+1}) \cdot \frac{H_{L'-K+1} - 1}{K - 3 + H_{L'-K+1}} = H_{L'-K+1} - 1.
\end{align*}
This confirms \eqref{eq:f-bound-key}, thereby closing the inductive step and concluding the proof of Lemma \ref{lem:f-bound}.

\paragraph{Proof of the identity \eqref{eq:R-Psi-tc}.}
We now provide the derivation of the identity \eqref{eq:R-Psi-tc}. Note that
\begin{align*}
\Psi^{\tc}(K, L') - 1 &= \sum_{\ell = 1}^{L'-K+1} \prod_{i=1}^{\ell - 1} \frac{(L'-i)f_\tc(K-1, L' - i)}{1 + (L'-i-2)f_\tc(K-1, L' - i - 1)} - 1 \\ 
& =   \sum_{\ell=2}^{L'-K+1} \prod_{i=1}^{\ell - 1} \frac{(L'-i)f_\tc(K-1, L' - i)}{1 + (L'-i-2)f_\tc(K-1, L' - i - 1)},
\end{align*} 
since the $\ell = 1$ summand equals $1$ (empty product).

Proceeding further, we obtain
\begin{align*}
    \Psi^{\tc}(K, L') - 1 
    &= \frac{(L' - 1)f_\tc(K - 1, L' - 1)}{1 + (L' - 3)f_\tc(K - 1, L' - 2)} \sum_{\ell=2}^{L'-K+1} \prod_{i=2}^{\ell - 1} \frac{(L'-i)f_\tc(K-1, L' - i)}{1 + (L'-i-2)f_\tc(K-1, L' - i - 1)} \\ 
    &= \frac{(L' - 1)f_\tc(K - 1, L' - 1)}{1 + (L' - 3)f_\tc(K - 1, L' - 2)} \sum_{\ell = 1}^{L'-K} \prod_{i=1}^{\ell - 1} \frac{(L' - 1 - i)f_\tc(K-1, L' - 1 - i)}{1 + (L' - 1 - i- 2)f_\tc(K - 1, L' - 1 - i - 1)} \\
    &= \frac{(L' - 1)f_\tc(K - 1, L' - 1)}{1 + (L' - 3)f_\tc(K - 1, L' - 2)} \Psi^{\tc}(K, L'-1),
\end{align*}
Here, the first step separates out the $i = 1$ factor from the product, and the second step follows from a change of summation and product indices.
This completes the proof of \eqref{eq:R-Psi-tc}.

%% file: proof-DTC.tex

\subsection{Proof of Lemma \ref{lem:dtc-g}} \label{sec:proof-dtc-g}

We will prove Lemma \ref{lem:dtc-g} by induction on the number of iterations $K$. For any index set of currently masked tokens $\cI$ with $K \leq |\cI| \leq L$ and conditioned on already generated tokens $x_{\cI^\mathsf{c}}$, we aim to show that
\begin{align*}
    \E_{S \sim \pi_{\dtc}(K, \cI)} \Big[\KL\big(p_{X_{\cI}}(\cdot \mid x_{\cI^\mathsf{c}}) \,\|\, p_{\ol Y_{\cI}^S}(\cdot \mid x_{\cI^\mathsf{c}} ) \big) \Big] \leq f_\dtc(K, |\cI|)\bigg(\frac{L - |\cI|}{|\cI|} \TC \big(X_{\cI} \mid x_{\cI^\mathsf{c}}\big) + \frac{L}{|\cI|}\DTC \big(X_{\cI}\mid x_{\cI^\mathsf{c}}\big) \bigg).
\end{align*}
To simplify the notation, we denote $ L' \defn |\cI| $ throughout the proof.
Note that all distributions below are conditioned on $x_{\cI^\mathsf{c}}$.

\subsubsection{Base case} When $ K = 1 $, $ \pi_{\dtc}(1, \cI) $ simply unmasks all tokens in $\cI$ in a single iteration. 
So the sampling distribution is the product of conditional marginal distributions. By Lemma \ref{lem:single_batch_error}, we have
\begin{align*}
    &\E_{S \sim \pi_{\dtc}(1, \cI)} \Big[\KL \big(p_{X_{\cI}}(\cdot \mid x_{\cI^\mathsf{c}}) \,\|\, p_{\ol Y_{\cI}^S}(\cdot \mid x_{\cI^\mathsf{c}})  \big) \Big] \\
    & \qquad = \KL\Big(p_{X_{\cI}}(\cdot \mid x_{\cI^\mathsf{c}}) \,\|\, \prod_{i \in \cI} p_{X_i}(\cdot \mid x_{\cI^\mathsf{c}})   \Big) \\
    & \qquad = \TC \big(X_{\cI} \mid x_{\cI^\mathsf{c}} \big) \\
    & \qquad = \frac{(L'-1)(L-L')}{(L-L'+1)L'} \TC\big(X_{\cI} \mid x_{\cI^\mathsf{c}} \big) + \frac{L}{(L-L'+1)L'} \TC\big(X_{\cI} \mid x_{\cI^\mathsf{c}} \big) \\
    & \qquad  \leq \frac{(L'-1)(L-L')}{(L-L'+1)L'} \TC\big(X_{\cI} \mid x_{\cI^\mathsf{c}} \big) + \frac{(L'-1)L}{(L-L'+1)L'} \DTC\big(X_{\cI} \mid x_{\cI^\mathsf{c}} \big) \\
    & \qquad = f_\dtc(1, L')\bigg(\frac{L - L'}{L'} \TC\big(X_{\cI} \mid x_{\cI^\mathsf{c}} \big) + \frac{L}{L'}\DTC\big(X_{\cI} \mid x_{\cI^\mathsf{c}} \big) \bigg),
\end{align*}
where the inequality holds since $  \TC\big(X_{\cI} \mid x_{\cI^\mathsf{c}} \big) \leq (L'-1)\DTC\big(X_{\cI} \mid x_{\cI^\mathsf{c}} \big) $ by Lemma~\ref{lem:dtc-tc-relation} in Appendix \ref{sec:proof-dtc-tc-relation}, and the last equality holds since the definition $ f_\dtc(1, L') = \frac{L'-1}{L-L'+1} $ for any $ 1 \leq L' \leq L $.

\subsubsection{Inductive step} Assume that the claim holds for $ K - 1$. We will show that it also holds for $ K $.

\paragraph{Step 1: Bound KL error by a weighted sum $\sum_{\ell} w_\ell^\dtc b_\ell$.}
By Lemma \ref{prop:recursive}, conditioned on $x_{\cI^\mathsf{c}}$, the expected KL error of $ \pi_{\dtc}(K, \cI) $ admits the following decomposition:
\begin{align*}
    & \bE_{S \sim \pi_{\dtc}(K, \cI)} \Big[\KL\big(p_{X_{\cI}}(\cdot \mid x_{\cI^\mathsf{c}}) \,\|\, p_{\ol Y_{\cI}^S}(\cdot \mid x_{\cI^\mathsf{c}}) \big)\Big]\\ 
    & \quad = \bE_{S^{(1)}} \big[ \TC(X_{S^{(1)}}\mid x_{\cI^\mathsf{c}}) \big] + \bE_{S^{(1)},\,x_{S^{(1)}}} \bigg[ \bE_{S^{(2:K)}} \Big[\KL \big(p_{X_{\cI \setminus S^{(1)}}}(\cdot \mid x_{S^{(1)} \cup \cI^\mathsf{c}}) \,\|\, p_{\ol Y_{\cI \setminus S^{(1)}}^{S^{(2:K)}}}(\cdot \mid x_{S^{(1)} \cup \cI^\mathsf{c}} ) \big) \Big] \bigg]. 
\end{align*}
where $S^{(1)} \sim \pi_\dtc^{(1)}(K, \cI)$, $x_{S^{(1)}} \sim p_{X_{S^{(1)}}}(\cdot \mid x_{\cI^\mathsf{c}})$ and $S^{(2:K)} \sim \pi_{\dtc}(K-1, \cI \setminus S^{(1)})$. Applying the induction hypothesis to the inner expectation yields
\begin{align}
& \bE_{S} \Big[\KL\big(p_{X_{\cI}}(\cdot \mid x_{\cI^\mathsf{c}}) 	,\|\, p_{\ol Y_{\cI}^S}(\cdot 	\mid x_	{\cI^\mathsf{c}}) 	\big)\Big] 	\notag \\
    & \quad \leq \E_{S^{(1)}} \big[ \TC \big(X_{S^{(1)}}\mid x_{\cI^\mathsf{c}} \big) \big] + \E_{S^{(1)}, x_{S^{(1)}}} \bigg[ f_\dtc(K - 1, L' - |S^{(1)}|) \bigg( \frac{L-L'+|S^{(1)}| }{L'-|S^{(1)} |}\TC \big( X_{\cI \setminus S^{(1)}} \mid x_{S^{(1)}}, x_{\cI^\mathsf{c}} \big) \notag\\ 
    & \hspace{28em}+ \frac{L}{L'-|S^{(1)}| } \DTC \big( X_{\cI \setminus S^{(1)}} \mid x_{S^{(1)}}, x_{\cI^\mathsf{c}}\big) \bigg) \bigg]\notag \\
    & \quad = \E_{S^{(1)}} \bigg[ \TC \big(X_{S^{(1)}}\mid x_{\cI^\mathsf{c}} \big) + f_\dtc(K - 1, L' - |S^{(1)}|)\bigg( \frac{L-L'+|S^{(1)}| }{L'-|S^{(1)} |}\TC \big(X_{\cI \setminus S^{(1)} } \mid X_{S^{(1)} } , x_{\cI^\mathsf{c}} \big) \notag\\ 
    & \hspace{23em}+ \frac{L}{L'-|S^{(1)}| } \DTC \big(X_{\cI \setminus S^{(1)}} \mid X_{S^{(1)}}, x_{\cI^\mathsf{c}} \big) ) 	\Big] ,\label{eq:KL-decomp-DTC}
\end{align}
where the last equality follows from the definitions of conditional total correlation and conditional dual total correlation.

To analyze the right-hand side of \eqref{eq:KL-decomp-DTC}, recall that we generate the first unmasking set $S^{(1)}$ in two steps. First, we sample its size $\ell =  |S^{(1)}|$ with probability $w^\dtc_{\ell}(K,L') / \Psi^\dtc(K,L')$ for $\ell = 1, 2, \ldots, L'-K+1$. Given $\ell$, we then choose $S^{(1)}$ uniformly at random among all size-$ \ell $ subsets of $ \cI $. 
As a result, let us define $ b_\ell = b_\ell (K,L') $ as the uniform average over all size-$\ell$ subsets for each $1 \leq \ell \leq L'-K+1$:
\begin{align}
    b_{\ell}(K,L') \defn\frac{1}{\binom{L'}{\ell}}\sum_{S^{(1)} \subseteq \cI \,:\, |S^{(1)}|=\ell} \Big(&  \TC(X_{S^{(1)}} \mid x_{\cI^\mathsf{c}}) + \frac{L-L'+\ell}{L'-\ell}f_\dtc(K-1, L'-\ell)\TC(X_{\cI \setminus S^{(1)}} \mid X_{S^{(1)}}, x_{\cI^\mathsf{c}}) \notag \\
    & + \frac{L}{L'-\ell}f_\dtc(K-1, L'-\ell)\DTC(X_{\cI \setminus S^{(1)}} \mid X_{S^{(1)}}, x_{\cI^\mathsf{c}}) \Big). \label{eq:bl-DTC}
\end{align}
We can then express the expected KL sampling error on the right-hand side of \eqref{eq:KL-decomp-DTC} as a weighted sum
\begin{align}\label{eq:KL-decomp-DTC-2}
    \bE_{S \sim \pi_{\dtc}(K, \cI)} \Big[\KL\big(p_{X_{\cI}}(\cdot \mid x_{\cI^\mathsf{c}}) \,\|\, p_{\ol Y_{\cI}^S}(\cdot \mid x_{\cI^\mathsf{c}}) \big)\Big] &  \leq \sum_{\ell=1}^{L'-K+1} \bP\{|S^{(1)}|=\ell\} \cdot b_{\ell} = \frac{1}{\Psi^\dtc} \sum_{\ell=1}^{L'-K+1} w^\dtc_{\ell} b_{\ell}.
\end{align}
Therefore, this suggests we control the weighted sum $ \sum_{\ell=1}^{L'-K+1} w^\dtc_{\ell} b_{\ell} $.

\paragraph{Step 2: Express $\sum_{\ell} w^\dtc_{\ell} b_{\ell}$ as a telescoped sum.}
To this end, plugging in the definitions of conditional TC and DTC, we can rewrite $b_\ell$ defined in \eqref{eq:bl-DTC} as follows:
\begin{align*}
    b_{\ell} &= \frac{1}{\binom{L'}{\ell}}\sum_{\cS \subseteq \cI \,:\, |\cS|=\ell} \Bigg( \sum_{i \in \cS} H(X_i \mid x_{\cI^\mathsf{c}}) - H(X_{\cS} \mid x_{\cI^\mathsf{c}}) \notag \\
    &\quad + \frac{L-L'+\ell}{L'-\ell}f_\dtc(K-1, L'-\ell) \bigg( \sum_{j \in \cI \setminus \cS} H(X_j \mid X_{\cS}, x_{\cI^\mathsf{c}}) - H(X_{\cI \setminus \cS} \mid X_{\cS}, x_{\cI^\mathsf{c}}) \bigg) \notag \\
    &\quad + \frac{L}{L'-\ell}f_\dtc(K-1, L'-\ell) \bigg(H(X_{\cI \setminus \cS} \mid X_{\cS}, x_{\cI^\mathsf{c}}) - \sum_{j \in \cI \setminus \cS } H\big(X_j \mid X_{\cI \setminus \{j\}}, x_{\cI^\mathsf{c}}\big) \bigg) \Bigg)\notag \\
    &= \frac{1}{\binom{L'}{\ell}}\sum_{\cS \subseteq \cI \,:\, |\cS|=\ell} \Bigg( \sum_{i \in \cS} H(X_i \mid x_{\cI^\mathsf{c}}) - H(X_{\cS} \mid x_{\cI^\mathsf{c}}) \notag \\
    &\quad + \frac{L-L'+\ell}{L'-\ell}f_\dtc(K-1, L'-\ell) \sum_{j \in \cI \setminus \cS} H(X_j \mid X_{\cS}, x_{\cI^\mathsf{c}}) + f_\dtc(K-1, L'-\ell) H(X_{\cI \setminus \cS } \mid X_{\cS }, x_{\cI^\mathsf{c}}) \notag \\
    &\quad - \frac{L}{L'-\ell}f_\dtc(K-1, L'-\ell) \sum_{j \in \cI \setminus \cS } H\big(X_j \mid X_{\cI \setminus \{j\}}, x_{\cI^\mathsf{c}}\big)  \Bigg).
\end{align*}
As $H(X|Y) = H(X,Y) - H(Y)$ for any random variables $X,Y$, we can further rewrite $b_\ell$ as
\begin{align}
b_{\ell} &= \frac{1}{\binom{L'}{\ell}}\sum_{\cS \subseteq \cI \,:\, |\cS|=\ell} \Bigg( \sum_{i \in \cS} H(X_i \mid x_{\cI^\mathsf{c}}) - \big[1+ (L-L'+\ell+1)f_\dtc(K-1, L'-\ell) \big] H(X_{\cS} \mid x_{\cI^\mathsf{c}}) \notag \\
    &\quad + \frac{L-L'+\ell}{L'-\ell} f_\dtc(K-1, L'-\ell) \sum_{j \in \cI \setminus \cS} H\big(X_{\{j\} \cup \cS} \mid x_{\cI^\mathsf{c}}\big) + f_\dtc(K-1, L'-\ell)H(X_{\cI} \mid x_{\cI^\mathsf{c}}) \notag \\
    &\quad - \frac{L}{L'-\ell}f_\dtc(K-1, L'-\ell) \sum_{j \in \cI \setminus \cS } H\big(X_j \mid X_{\cI \setminus \{j\}}, x_{\cI^\mathsf{c}}\big)  \Bigg), \label{eq:bl-DTC-1}
\end{align}
where we use the fact that $|\cI\setminus \cS| = L'-\ell$.
Now, notice that 
\begin{align}
    \frac{1}{\ell}\frac{1}{\binom{L'}{\ell}}\sum_{\cS \subseteq \cI \,:\, |\cS|=\ell} \sum_{i \in \cS} H(X_i \mid x_{\cI^\mathsf{c}}) & = \frac{1}{L'} \sum_{i \in \cI} H(X_i \mid x_{\cI^\mathsf{c}}); \label{eq:avg-Hi-DTC}\\
\sum_{\cS \subseteq \cI \,:\,|\cS|=\ell}\sum_{j \in \cI \setminus \cS} H\big(X_{\{j\} \cup \cS} \mid x_{\cI^\mathsf{c}}\big) &= (\ell + 1) \sum_{\cT \subseteq \cI \,:\,|\cT|=\ell+1} H(X_{\cT} \mid x_{\cI^\mathsf{c}}); \label{eq:avg-Hij-DTC} \\
\frac{1}{L'-\ell}\frac{1}{\binom{L'}{\ell}} \sum_{\cS \subseteq \cI \,:\, |\cS|=\ell} \sum_{j \in \cI \setminus \cS } H\big(X_j \mid X_{\cI \setminus \{j\}}, x_{\cI^\mathsf{c}}\big) & = \frac1{L'}\sum_{j\in\cI } H\big(X_j \mid X_{\cI \setminus \{j\}}, x_{\cI^\mathsf{c}}\big). \label{eq:avg-Hj-DTC} 
\end{align}
Here, \eqref{eq:avg-Hi-DTC} holds because uniformly selecting an element from $\cI$ is equivalent to uniformly selecting a set $\cS \subseteq \cI$ with size $\ell$ and then selecting an element $i \in \cS$. \eqref{eq:avg-Hij-DTC} is true because specifying a pair $(\cS,j)$ is equivalent to removing one element from a set with size $\ell+1$, which can be done in $\ell + 1$ ways. \eqref{eq:avg-Hj-DTC} holds because uniformly selecting an element from $\cI$ is equivalent to uniformly selecting a set $\cS \subseteq \cI$ with size $\ell$ and then selecting an element $j \in \cI \setminus \cS$.

Substituting \eqref{eq:avg-Hi-DTC}--\eqref{eq:avg-Hj-DTC} into \eqref{eq:bl-DTC-1}, we can continue to rewrite $b_\ell$ as
\begin{align}
    b_{\ell}&= \frac{\ell}{L'} \sum_{i \in \cI} H(X_i \mid x_{\cI^\mathsf{c}}) - \frac{1 + (L-L'+\ell+1) f_\dtc(K-1, L'-\ell)}{\binom{L'}{\ell}} \sum_{\cS \subseteq \cI \,:\, |\cS|=\ell} H(X_{\cS} \mid x_{\cI^\mathsf{c}}) \notag\\
    &\quad + \frac{(\ell + 1)}{\binom{L'}{\ell}}\frac{L-L'+\ell}{L'-\ell} f_\dtc(K-1, L'-\ell) \sum_{\cT \subseteq \cI \,:\, |\cT|=\ell+1} H(X_{\cT} \mid x_{\cI^\mathsf{c}}) + f_\dtc(K-1, L'-\ell) H(X_{\cI} \mid x_{\cI^\mathsf{c}}) \notag \\
    & \quad - \frac{L}{L'}f_\dtc(K-1, L'-\ell) \sum_{j \in \cI} H\big(X_j \mid X_{\cI \setminus \{j\}}, x_{\cI^\mathsf{c}}\big)  \notag\\
    &= \frac{\ell}{L'} \sum_{i \in \cI} H(X_i \mid x_{\cI^\mathsf{c}}) + f_\dtc(K-1, L'-\ell) H(X_{\cI} \mid x_{\cI^\mathsf{c}}) - \frac{L}{L'}f_\dtc(K-1, L'-\ell) \sum_{j \in \cI} H\big(X_j \mid X_{\cI \setminus \{j\}}, x_{\cI^\mathsf{c}}\big)  \notag \\
    &\quad - \underbrace{\frac{1 + (L-L'+\ell+1) f_\dtc(K-1, L'-\ell)}{\binom{L'}{\ell}}}_{\defnrev\zeta_\ell}\sum_{\cS \subseteq \cI \,:\, |\cS|=\ell} H(X_{\cS} \mid x_{\cI^\mathsf{c}}) \notag\\
    &\quad + \underbrace{\frac{(L-L'+\ell) f_\dtc(K-1, L'-\ell)}{\binom{L'}{\ell + 1}}}_{\defnrev\eta_\ell} \sum_{\cT \subseteq \cI \,:\, |\cT|=\ell+1} H(X_{\cT} \mid x_{\cI^\mathsf{c}}).  \label{eq:bl-DTC-expression}
\end{align}
where the last holds because
\begin{align*}
\frac{\ell + 1}{\binom{L'}{\ell}}\frac{1}{L'-\ell} = \frac{1}{\binom{L'}{\ell + 1}}.
\end{align*}
With \eqref{eq:bl-DTC-expression} in hand, we can express the weighted sum $ \sum_{\ell} w^\dtc_{\ell} b_{\ell} $ in the expected KL error bound in \eqref{eq:KL-decomp-DTC-2} as
\begin{align}
\sum_{\ell = 1}^{L'-K+1} w^\dtc_{\ell} b_{\ell}
& = \bigg(\frac{1}{L'} \sum_{\ell = 1}^{L'-K+1}  w^\dtc_{\ell} \ell\bigg) \sum_{i \in \cI} H(X_i \mid x_{\cI^\mathsf{c}})  + \bigg(\sum_{\ell = 1}^{L'-K+1} w^\dtc_{\ell} f_\dtc(K-1, L'-\ell) \bigg) H(X_{\cI} \mid x_{\cI^\mathsf{c}})  \notag \\
& \quad - \bigg( \frac{L}{L'}\sum_{\ell = 1}^{L'-K+1} w^\dtc_{\ell} f_\dtc(K-1, L'-\ell) \bigg) \sum_{j \in \cI} H\big(X_j \mid X_{\cI \setminus \{j\}}, x_{\cI^\mathsf{c}}\big)   \notag \\
& \quad  - \sum_{\ell = 1}^{L'-K+1} \bigg(w^\dtc_{\ell} \zeta_{\ell} \sum_{\cS \subseteq \cI \,:\, |\cS|=\ell} H(X_{\cS} \mid x_{\cI^\mathsf{c}}) \bigg) + \sum_{\ell = 1}^{L'-K+1} \bigg(w^\dtc_{\ell} \eta_{\ell} \sum_{\cT \subseteq \cI \,:\, |\cT|=\ell+1} H(X_{\cT} \mid x_{\cI^\mathsf{c}}) \bigg) \notag \\ 
& = \bigg(\frac{1}{L'} \sum_{\ell = 1}^{L'-K+1}  w^\dtc_{\ell} \ell\bigg) \sum_{i \in \cI} H(X_i \mid x_{\cI^\mathsf{c}})  + \bigg(\sum_{\ell = 1}^{L'-K+1} w^\dtc_{\ell} f_\dtc(K-1, L'-\ell) \bigg) H(X_{\cI} \mid x_{\cI^\mathsf{c}}) \notag \\
& \quad  - \bigg(\frac{L}{L'} \sum_{\ell = 1}^{L'-K+1} w^\dtc_{\ell} f_\dtc(K-1, L'-\ell) \bigg) \sum_{j \in \cI} H\big(X_j \mid X_{\cI \setminus \{j\}}, x_{\cI^\mathsf{c}}\big)  \notag \\
& \quad  - w^\dtc_{1} \zeta_{1} \sum_{\cS \subseteq \cI \,:\, |\cS|=1} H(X_{\cS} \mid x_{\cI^\mathsf{c}}) + w^\dtc_{L'-K+1} \eta_{L'-K+1} \sum_{\cT \subseteq \cI \,:\, |\cT|=L'-K+2} H(X_{\cT} \mid x_{\cI^\mathsf{c}}) \notag \\ 
& \quad  + \sum_{\ell = 1}^{L'-K} \bigg((w^\dtc_{\ell} \eta_{\ell} -w^\dtc_{\ell+1} \zeta_{\ell+1}) \sum_{\cS \subseteq \cI \,:\, |\cS|=\ell+1} H(X_{\cS} \mid x_{\cI^\mathsf{c}}) \bigg). \label{eq:weighted-sum-DTC-temp}
\end{align}
where the last step holds by rearranging the terms and separating the $\ell = 1$ and $\ell = L'-K+1$ terms from the summation.

To proceed, notice that our choice of the weight $w^\dtc_{\ell}$ (see \eqref{eq:wl-dtc-def}),
\begin{align*}
w_\ell^{\dtc} = \prod_{i = 1}^{\ell-1} \frac{(L-L'+i)\,f_\dtc(K-1, L'-i)}{1 + (L-L'+i+2)\,f_\dtc(K-1, L'-i-1)},
\end{align*}
yields the following recursive relationship:
\begin{align}
w^\dtc_{\ell + 1} = w^\dtc_{\ell} \frac{(L-L'+\ell)\,f_\dtc(K-1, L'-\ell)}{1 + (L-L'+\ell+2)\,f_\dtc(K-1, L'-\ell-1)}, \quad \forall\,\ell = 1, 2, \ldots, L'-K. \label{eq:weight-recursive-DTC-1}
\end{align}
In particular, combining this with the definitions of $\zeta_\ell$ and $\eta_\ell$ in \eqref{eq:bl-DTC-expression}, we obtain that for each $1\leq \ell \leq L'-K$,
\begin{align}
w^\dtc_{\ell} \eta_\ell = w^\dtc_{\ell + 1} \zeta_{\ell + 1}. \label{eq:weight-recursive-DTC}
\end{align}

Substituting \eqref{eq:weight-recursive-DTC} into the expression of $ \sum_{\ell = 1}^{L'-K+1} w^\dtc_{\ell} b_{\ell} $ in \eqref{eq:weighted-sum-DTC-temp}, we find that the terms $ \sum_{\cS \subseteq \cI \,:\, |\cS|=\ell+1} H(X_{\cS} \mid x_{\cI^\mathsf{c}}) $ for $1\leq \ell \leq L'-K+ 1$ telescope, yielding the following simplified expression:
\begin{align}
\sum_{\ell = 1}^{L'-K+1} w^\dtc_{\ell} b_{\ell}
& = \bigg(\frac{1}{L'} \sum_{\ell = 1}^{L'-K+1}  w^\dtc_{\ell} \ell -  \zeta_{1}\bigg) \sum_{i \in \cI} H(X_i \mid x_{\cI^\mathsf{c}}) + \bigg(\sum_{\ell = 1}^{L'-K} w^\dtc_{\ell} f_\dtc(K-1, L'-\ell) \bigg) H(X_{\cI} \mid x_{\cI^\mathsf{c}}) \notag \\
& \quad  - \bigg(\sum_{\ell = 1}^{L'-K} w^\dtc_{\ell} f_\dtc(K-1, L'-\ell) \bigg) \frac{L}{L'} \sum_{j \in \cI} H\big(X_j \mid X_{\cI \setminus \{j\}}, x_{\cI^\mathsf{c}}\big) , \label{eq:weighted-sum-DTC-2}
\end{align}
where we use the fact that (1) $w^\dtc_{1}$; and (2) $g\big(K-1, L'-(L'-K+1) \big) = f_\dtc(K-1, K-1) = 0$ and hence $\eta_{L'-K+1} \propto f_\dtc(K-1, K-1) = 0$.

\paragraph{Step 3: Compute entropy coefficients.}

With the expression \eqref{eq:weighted-sum-DTC-2} in hand, it remains to compute the entropy coefficients in the above expression.

\begin{itemize}
    \item 

Let us begin with the coefficient for $H(X_{\cI} \mid x_{\cI^\mathsf{c}})$, i.e., $\sum_{\ell = 1}^{L'-K} w^\dtc_{\ell} f_\dtc(K-1, L'-\ell)$. First, we note the following algebraic manipulation of $f_\dtc(K-1, L'-\ell)$:
\begin{align*}
f_\dtc(K-1, L'-\ell) & =
- \frac{(L-L'+\ell)f_\dtc(K-1, L'-\ell)}{1 + (L-L'+\ell+2)f_\dtc(K-1, L'-\ell-1)} \Big(1 + (L-L'+\ell+2)f_\dtc(K-1, L'-\ell-1) \Big) \\ 
& \quad + (L-L'+\ell+1)f_\dtc(K-1, L'-\ell).
\end{align*}
Using this, we can compute the coefficient as follows:
\begin{align}
&\sum_{\ell = 1}^{L'-K} w^\dtc_{\ell} f_\dtc(K-1, L'-\ell) \notag\\
&\quad = \sum_{\ell = 1}^{L'-K} w^\dtc_{\ell}\biggl( - \frac{(L-L'+\ell)f_\dtc(K-1, L'-\ell)}{1 + (L-L'+\ell+2)f_\dtc(K-1, L'-\ell-1)} + (L-L'+\ell+1)f_\dtc(K-1, L'-\ell) \notag\\
&\quad \qquad \qquad\qquad - \frac{(L-L'+\ell)f_\dtc(K-1, L'-\ell)}{1 + (L-L'+\ell+2)f_\dtc(K-1, L'-\ell-1)}(L-L'+\ell+2)f_\dtc(K-1, L'-\ell-1)\biggr) \notag\\
&\quad \numpf{i}{=} -\sum_{\ell = 1}^{L'-K} w^\dtc_{\ell + 1} + \sum_{\ell = 1}^{L'-K} \Big( w^\dtc_{\ell} (L-L'+\ell+1)f_\dtc(K-1, L'-\ell) \notag\\
& \hspace{12em}- w^\dtc_{\ell + 1} \big(L-L'+(\ell + 1)+1\big)g\big(K-1, L'-(\ell + 1)\big) \Big) \notag\\
&\quad = -\sum_{\ell = 1}^{L'-K} w^\dtc_{\ell + 1} + \Big(w^\dtc_1 (L-L'+2)f_\dtc(K-1, L'-1) - w^\dtc_{L'-K+1} (L-K+2)f_\dtc(K-1, K-1) \Big) \notag\\
& \quad \numpf{ii}{=} -\sum_{\ell = 1}^{L'-K+1} w^\dtc_{\ell} + 1 + (L-L'+2)f_\dtc(K-1,L'-1) \\
& \quad = -\Psi^\dtc(K,L') + 1 + (L-L'+2)f_\dtc(K-1,L'-1) \label{eq:coeff-DTC-all}
\end{align}
Here, (i) plugs in the definition of $w^\dtc_{\ell + 1}$ in \eqref{eq:wl-dtc-def} and rearranges the terms to form a telescoping sum; (ii) holds as $w^\dtc_1 = 1$ and $f_\dtc(K-1,K-1)=0$; and the last step arises from the definition of $\Psi^\dtc = \Psi^\dtc(K,L') \defn \sum_{\ell = 1}^{L'-K+1} w^\dtc_{\ell}$ in \eqref{eq:Psi-def}.

\item The coefficient of $-\sum_{j \in \cI} H\big(X_j \mid X_{\cI \setminus \{j\}}, x_{\cI^\mathsf{c}}\big) $ is then an immediate consequence of \eqref{eq:coeff-DTC-all}:
\begin{align}
\frac{L}{L'} \sum_{\ell = 1}^{L'-K} w^\dtc_{\ell}  f_\dtc(K-1, L'-\ell) &= \frac{L}{L'} \Bigl(-\Psi^\dtc + 1 + (L-L'+2)f_\dtc(K-1,L'-1)\Bigr).\label{eq:coeff-DTC-dual}
\end{align}

\item Finally, let us turn to the coefficient of $ \sum_{i \in \cI} H(X_i \mid x_{\cI^\mathsf{c}}) $ in \eqref{eq:weighted-sum-DTC-2}. Recalling the definition of $\zeta_1$ in \eqref{eq:bl-DTC-expression}, we can compute the coefficient as follows:
\begin{align*}
&\frac1{L'} \sum_{\ell = 1}^{L'-K+1}  w^\dtc_{\ell} \ell - \zeta_{1} \notag\\
&\quad = \frac{1}{L'} \sum_{\ell = 1}^{L'-K+1} w^\dtc_{\ell} \ell  - \frac{1+(L-L'+2)f_\dtc(K-1, L'-1)}{L'} \notag \\
&\quad = -\frac{L-L'}{L'} \sum_{\ell = 1}^{L'-K+1} w^\dtc_{\ell} + \frac{1}{L'} \sum_{\ell = 1}^{L'-K+1} w^\dtc_{\ell} (L-L'+\ell) - \frac{1+(L-L'+2)f_\dtc(K-1, L'-1)}{L'} \\
&\quad = -\frac{L-L'}{L'} \Psi^\dtc + \frac{1}{L'} \sum_{\ell = 1}^{L'-K+1} w^\dtc_{\ell} (L-L'+\ell) - \frac{1+(L-L'+2)f_\dtc(K-1, L'-1)}{L'}.
\end{align*}
Again, using the algebraic manipulation
\begin{align*}
(L-L'+\ell) &= (L-L'+\ell)\big(1 + (L-L'+\ell+1)f_\dtc(K-1, L'-\ell)\big) \\
& \quad - (L-L'+\ell)(L-L'+\ell+1)f_\dtc(K-1, L'-\ell),
\end{align*}
we can further rewrite the above expression as a telescoped sum:
\begin{align}
    &\frac1{L'} \sum_{\ell = 1}^{L'-K+1}  w^\dtc_{\ell} \ell -  \zeta_{1} \notag\\
&\quad = -\frac{L-L'}{L'} \Psi^\dtc +\frac{1}{L'}w^\dtc_{L'-K+1} (L-K+1) - \frac{1+(L-L'+2)f_\dtc(K-1, L'-1)}{L'} \notag\\
& \quad\quad + \frac{1}{L'} \sum_{\ell = 1}^{L'-K} \Big( w^\dtc_{\ell} (L-L'+\ell)\big(1 + (L-L'+\ell+1)f_\dtc(K-1, L'-\ell)\big) \notag \\
&\hspace{8em} - w^\dtc_{\ell} (L-L'+\ell)(L-L'+\ell+1)f_\dtc(K-1, L'-\ell) \Big) \notag \\
&\quad \stackrel{\text{(i)}}{=} -\frac{L-L'}{L'} \Psi^\dtc +\frac{1}{L'}w^\dtc_{L'-K+1} (L-K+1) - \frac{1+(L-L'+2)f_\dtc(K-1, L'-1)}{L'} \notag\\
& \quad\quad + \frac{1}{L'} \sum_{\ell = 1}^{L'-K} \Big( w^\dtc_{\ell} (L-L'+\ell)\big(1 + (L-L'+\ell+1)f_\dtc(K-1, L'-\ell)\big) \notag \\
&\hspace{8em} - w^\dtc_{\ell + 1} \big(L-L'+(\ell + 1)\big)\big[1 + \big(L-L'+(\ell + 1)+1\big)g\big(K-1, L'-(\ell + 1)\big)\big] \Big) \notag \\
&\quad \stackrel{\text{(ii)}}{=} -\frac{L-L'}{L'} \Psi^\dtc +\frac{1}{L'}w^\dtc_{L'-K+1} (L-K+1) - \frac{1+(L-L'+2)f_\dtc(K-1, L'-1)}{L'} \notag\\
& \quad \quad + \frac{1}{L'} \Big( w^\dtc_1 (L-L'+1)\big(1 + (L-L'+2)f_\dtc(K-1, L'-1)\big) \notag \\
&\hspace{5em} - w^\dtc_{L'-K+1} (L-K+1)\big(1 + (L-K+2)f_\dtc(K-1, K-1)\big) \Big) \notag \\
&\quad \stackrel{\text{(iii)}}{=} -\frac{L-L'}{L'} \Psi^\dtc + \frac{L-L'+1}{L'} \big(1 + (L-L'+2)f_\dtc(K-1, L'-1)\big) \notag\\ 
& \quad \quad- \frac{1 + (L-L'+2)f_\dtc(K-1, L'-1)}{L'}\notag \\
&\quad = \frac{L-L'}{L'}\Bigl(-\Psi^\dtc + 1 + (L-L'+2)f_\dtc(K-1,L'-1)\Bigr). \label{eq:coeff-DTC-single}
\end{align}
Here, (i) arises from \eqref{eq:weight-recursive-DTC-1} that
$$ w^\dtc_{\ell} (L-L'+\ell)f_\dtc(K-1, L'-\ell) = w^\dtc_{\ell + 1} \big[1 + \big(L-L'+(\ell + 1)+1\big)g\big(K-1, L'-(\ell + 1)\big)\big]; $$
(ii) evaluates the telescoping sum; (iii) uses $w^\dtc_1 = 1$ and $f_\dtc(K-1,K-1) = 0$.

\end{itemize}

\paragraph{Step 4: Putting everything together.}
Combining \eqref{eq:coeff-DTC-all}--\eqref{eq:coeff-DTC-dual} with \eqref{eq:weighted-sum-DTC-2}, we find that
\begin{align}
\sum_{\ell = 1}^{L'-K+1} w^\dtc_{\ell} b_{\ell}
& = \Bigl(-\Psi^\dtc + 1 + (L-L'+2)f_\dtc(K-1, L'-1)\Bigr) \notag\\
& \quad \cdot\bigg(\frac{L-L'}{L'}\sum_{i \in \cI} H(X_i \mid x_{\cI^\mathsf{c}}) + H(X_{\cI} \mid x_{\cI^\mathsf{c}}) - \frac{L}{L'} \sum_{j \in \cI} H(X_j \mid X_{\cI \setminus \{j\}}, x_{\cI^\mathsf{c}})\bigg) \notag\\
& = \Bigl(-\Psi^\dtc + 1 + (L-L'+2)f_\dtc(K-1, L'-1)\Bigr) \biggl( \frac{L-L'}{L'}\TC\big(X_{\cI} \mid x_{\cI^\mathsf{c}} \big) + \frac{L}{L'}\DTC\big(X_{\cI} \mid x_{\cI^\mathsf{c}} \big)\biggr). \label{eq:weighted-sum-DTC}
\end{align}
In words, the weighted sum is a linear combination of TC and DTC of the tokens from the index set $ \cI $ conditioned on $ x_{\cI^\mathsf{c}} $: 

Plugging \eqref{eq:weighted-sum-DTC} into the expected KL error bound in \eqref{eq:KL-decomp-DTC-2}, we conclude that
\begin{align}
    &\bE_{S \sim \pi_{\dtc}(K, \cI)} \Big[\KL\big(p_{X_{\cI}}(\cdot \mid x_{\cI^\mathsf{c}}) \,\|\, p_{\ol Y_{\cI}^S}(\cdot \mid x_{\cI^\mathsf{c}}) \big)\Big] 
    \leq \frac{1}{\Psi^\dtc}\sum_{\ell = 1}^{L'-K+1} w^\dtc_{\ell} b_{\ell} \notag\\
     &\qquad = \biggl(-1 + \frac{1 + (L-L'+2)f_\dtc(K-1, L'-1)}{\Psi^\dtc(K, L')}\biggr) \bigg( \frac{L-L'}{L'}\TC\big(X_{\cI} \mid x_{\cI^\mathsf{c}} \big) + \frac{L}{L'}\DTC\big(X_{\cI} \mid x_{\cI^\mathsf{c}} \big)\bigg) \notag\\
     &\qquad = f_\dtc(K, L') \bigg( \frac{L-L'}{L'}\TC\big(X_{\cI} \mid x_{\cI^\mathsf{c}} \big) + \frac{L}{L'}\DTC\big(X_{\cI} \mid x_{\cI^\mathsf{c}} \big)\bigg), \label{eq:KL-bound-final-DTC}
\end{align}
where the last equality follows from the definition of $ f_\dtc(K, L') $ in \eqref{eq:g-def}.
This completes the inductive step and hence the proof of Lemma \ref{lem:dtc-g}.

%% file: proof-DTC-g.tex

\subsection{Proof of Lemma \ref{lem:g-bound}}
\label{sec:proof-g-bound}

To see why \eqref{eq:dtc-g-final} is an immediate consequence of \eqref{eq:dtc-g}, notice that $\min\{ L - K, \frac{L}{K}H_{L-1} \} < L$.
Since $x \mapsto x/(L-x)$ is increasing on $[0,L)$, we arrive at the desired bound:
\begin{align*}
\frac{\min\{L-K, \frac{L}{K}H_{L-1} \}}{L - \min\{L-K, \frac{L}{K}H_{L-1} \}} \leq \frac{\frac{L}{K}H_{L-1}}{L - \frac{L}{K}H_{L-1}} = \frac{ H_{L-1}}{K - H_{L-1}}.
\end{align*}

Therefore, the remainder of this section is devoted to proving \eqref{eq:dtc-g}.
In light of the definition of $ f_\dtc(K, L) $ in \eqref{eq:g-def}, it suffices to show that for any $K \geq 2$ and $K \leq L' \leq L$,
\begin{equation}
    \frac{\Psi_\dtc(K, L')}{1 + (L-L'+2)f_\dtc(K-1, L'-1)} \geq \frac{L-\min\{L'-K, \frac{L}{K}(H_{L-1} - H_{L-L'} )\}}{L} > 0, \label{eq:dtc-error-bound-psi}
\end{equation}
which immediately implies that
\begin{align*}
    f_\dtc(K, L') &= -1 + \frac{1 + (L-L'+2)f_\dtc(K-1, L'-1)}{\Psi_\dtc(K, L')} \\
    &\leq -1 + \frac{L}{L-\min\{L'-K, \frac{L}{K}(H_{L-1} - H_{L-L'} )\}}= \frac{\min\{L'-K, \frac{L}{K}(H_{L-1} - H_{L-L'} )\}}{L - \min\{L'-K, \frac{L}{K}(H_{L-1} - H_{L-L'} )\}}.
\end{align*}

Towards this, we introduce the shorthand notation
\begin{align}\label{eq:p-def}
    p(K, L') \defn \min \Big\{L'-K, \frac{L}{K}(H_{L-1} - H_{L-L'} ) \Big\}
\end{align}
for simplicity and presentation, and recall the definition of $ w_l^{\dtc} $ in \eqref{eq:wl-dtc-def} and $\Psi_\dtc(K, L)$ in \eqref{eq:Psi-def}, we have
\begin{align*}
\Psi_\dtc(K, L') &= \sum_{\ell = 1}^{L'-K+1} \prod_{i=1}^{\ell - 1} \frac{(L-L'+i)f_\dtc(K-1, L'-i)}{1 + (L-L'+i+2)f_\dtc(K-1, L'-i-1)}.
\end{align*}
In what follows, we will prove $$ \frac{\Psi_\dtc(K, L')}{1 + (L-L'+2)f_\dtc(K-1, L'-1)} \geq 1 - \frac{p(K, L')}{L} $$  by strong induction on the pair $(K, L')$, or equivalently, the sum $K + L'$.

\subsubsection{Base case 1: $K = 2$}
For $K = 2$, we have $$f_\dtc(K-1, L') = f_\dtc(1, L') = \frac{L'-1}{L-L'+1}$$ for all $L' \geq 1$. Thus, each term in the product defining $\Psi_\dtc(2, L')$ simplifies as
\[
    \frac{(L-L'+i)f_\dtc(K-1, L'-i)}{1 + (L-L'+i+2)f_\dtc(K-1, L'-i-1)} = \frac{(L-L'+i)\frac{L'-i-1}{L-L'+i+1}}{1 + (L-L'+i+2)\frac{L'-i-2}{L-L'+i+2}} = \frac{L-L'+i}{L-L'+i+1}.
\]
and the product telescopes as
\[
    \prod_{i=1}^{\ell - 1} \frac{(L-L'+i)f_\dtc(K-1, L'-i)}{1 + (L-L'+i+2)f_\dtc(K-1, L'-i-1)} = \prod_{i=1}^{\ell - 1} \frac{L-L'+i}{L-L'+i+1} = \frac{L-L'+1}{L-L'+\ell}.
\]

Combining the expressions of $\Psi_\dtc(2, L')$ and $f_\dtc(1, L'-1)$, we can bound
\begin{align*}
\frac{\Psi_\dtc(2, L')}{1 + (L-L'+2)f_\dtc(1, L'-1)} &= \frac{L-L'+1}{L'-1} \sum_{\ell = 1}^{L'-1} \frac{1}{L-L'+\ell} \\ 
& \numpf{i}{\geq} \frac{L-L'+1}{L'-1} \frac{(L'-1)^2}{\sum_{\ell = 1}^{L'-1} (L-L'+\ell)} \\
&= \frac{2(L-L'+1)}{2L - L'} = 1 - \frac{L'-2}{2L-L'} \\
&\geq 1 - \frac{L'-2}{L}, 
\end{align*}
where (i) arises from 
\begin{align}\label{eq:cs-1}
\sum_{\ell = 1}^{L'-1} \frac{1}{L-L'+\ell} \geq \frac{(L'-1)^2}{\sum_{\ell = 1}^{L'-1} (L-L'+\ell)}.
\end{align}
This is due to the Cauchy-Schwarz inequality: for any positive numbers $a_1, a_2, \ldots, a_n$, $\sum_{i=1}^n \frac{1}{a_i} \geq \frac{n^2}{\sum_{i=1}^n a_i}$.

Meanwhile, one can also bound
\begin{align*}
\frac{\Psi_\dtc(2, L')}{1 + (L-L'+2)f_\dtc(1, L'-1)} &= \frac{L-L'+1}{L'-1} \sum_{\ell = 1}^{L'-1} \frac{1}{L-L'+\ell} \\
&= \frac{L-L'+1}{L'-1} (H_{L-1} - H_{L-L'} ) \\
&= \Big(1 - \frac{H_{L-1} - H_{L-L'} }{2}\Big) + \Big(\frac{L-L'+1}{L'-1} + \frac{1}{2}\Big)(H_{L-1} - H_{L-L'} ) - 1 \\
&\geq 1 - \frac{\frac{L}{2}(H_{L-1} - H_{L-L'} )}{L} + \Big(\frac{L-L'+1}{L'-1} + \frac{1}{2}\Big)\frac{(L'-1)^2}{\sum_{\ell = 1}^{L'-1}(L-L'+\ell)} - 1 \\
&= 1 - \frac{\frac{L}{2}(H_{L-1} - H_{L-L'} )}{L} + \frac{2L-L'+1}{2L-L'} - 1 \\
&\geq 1 - \frac{\frac{L}{2}(H_{L-1} - H_{L-L'} )}{L},
\end{align*}
where the first inequality applies \eqref{eq:cs-1} again.
Combining these two bounds with the definition of $p(K, L')$ in \eqref{eq:p-def}, we conclude that
\[
    \frac{\Psi_\dtc(2, L')}{1 + (L-L'+2)f_\dtc(1, L'-1)} \geq \max \bigg\{1 - \frac{L'-2}{L}, 1 - \frac{\frac{L}{2}(H_{L-1} - H_{L-L'} )}{L}\bigg\} = 1 - \frac{p(2, L')}{L},
\] 
 confirming the base case $K = 2$.

\subsubsection{Base case 2: $L' = K > 2$}
Next, let us consider the ``diagonal boundary'' of the region $L' \geq K$.

In this case, it is not hard to see that $w_1^\dtc(K,K) =\Psi_\dtc(K, K) = 1$ and $p(K, K) = 0$ for any $K\geq 2$.
In particular, by the definition of $f_\dtc$ in \eqref{eq:g-def} and the base case $f_\dtc(1, 1) = 0$, we can use induction on $K$ to show that
$f_\dtc(K-1, K-1) = 0$. Hence, we find that
\[
    \frac{\Psi_\dtc(K, K)}{1 + (L-L'+2)f_\dtc(K-1, K-1)} 
     = 1 
    = 1 - \frac{p(K, K)}{L}.
\]
This establishes the base case $L' = K$.

\subsubsection{Inductive step: $L' > K > 2$}

Now, let us assume \eqref{eq:dtc-error-bound-psi} holds for all pairs $(K', L'')$ with $K' + L'' < K + L'$. We prove it for the pair $(K, L')$, i.e.,
\begin{align}\label{eq:g-bound-inductive-goal}
    \Psi_\dtc(K, L') \geq \big(1 + (L - L' + 2)f_\dtc(K-1, L'-1)\big) \bigg(1 - \frac{p(K, L')}{L} \bigg).
\end{align}

Towards this, we first present the following identity, which relates relates $\Psi_\dtc(K, L')$ to $\Psi_\dtc(K, L'-1)$:
\begin{align}
    \Psi_\dtc(K, L') &= 1+ \frac{(L-L'+1)f_\dtc(K - 1, L' - 1)}{1 + (L-L'+3)f_\dtc(K - 1, L' - 2)} \Psi_\dtc(K, L'-1). \label{eq:R-Psi}
\end{align}
The proof is deferred to the end of this section.

Assuming \eqref{eq:R-Psi} for the moment, we proceed with the inductive step.
Applying the induction hypothesis to $(K, L'-1)$, one knows that
\[
    \frac{\Psi_\dtc(K, L'-1)}{1 + (L-L'+3)f_\dtc(K - 1, L' - 2)} \geq 1 - \frac{p(K, L'-1)}{L}.
\]
Plugging this bound into \eqref{eq:R-Psi}, one can control $\Psi_\dtc(K, L')$ as
\begin{align*}
    \Psi_\dtc(K, L')  &\geq 1+ \frac{(L-L'+1)f_\dtc(K-1, L'-1)}{1 + (L-L'+3)f_\dtc(K-1, L'-2)} \bigl(1 + (L-L'+3)f_\dtc(K - 1, L' - 2)\bigr) \bigg(1 - \frac{p(K, L'-1)}{L}\bigg) \\
    & = 1 + (L-L'+1)f_\dtc(K-1, L'-1)\bigg(1 - \frac{p(K, L'-1)}{L}\bigg).
\end{align*}

Hence, to establish \eqref{eq:g-bound-inductive-goal}, it suffices to show that
\[
    1 + (L-L'+1)f_\dtc(K-1, L'-1)\bigg(1 - \frac{p(K, L'-1)}{L}\bigg) \geq \big(1 + (L - L' + 2)f_\dtc(K-1, L'-1)\big) \bigg(1 - \frac{p(K, L')}{L} \bigg).
\]
By straightforward algebraic manipulation, this is equivalent to showing that
\begin{align*}
    &(L-L'+2) \big(L - p(K, L')\big) - (L-L'+1) \big(L - p(K, L'-1)\big)  \leq \frac{p(K, L')}{f_\dtc(K-1, L'-1)} .
\end{align*}
Moreover, by the induction hypothesis applied to $(K-1, L'-1)$, we have $$f_\dtc(K-1, L'-1) \leq \frac{p(K-1, L'-1)}{L - p(K-1, L'-1)}.$$ Hence, it suffices to show that
\begin{align*}
    &\Big((L-L'+2)\big(L - p(K, L')\big) - (L-L'+1)(L - p(K, L'-1) \big)\Big) \frac{p(K-1, L'-1)}{L - p(K-1, L'-1)} \leq p(K, L'),
\end{align*}
or equivalently,
\begin{align}
    \big( L - (L-L'+2)\,p(K, L') + (L-L'+1)\,p(K, L'-1) \big) \frac{p(K-1, L'-1)}{L - p(K-1, L'-1)} \leq p(K, L').
    \label{eq:g-bound-key}
\end{align}

In what follows, we prove \eqref{eq:g-bound-key} by cases depending on the value of $p(K, L')$.

\paragraph{Case 1: $p(K, L') = L'-K$.}
By the definition of $p(K, L')$ in \eqref{eq:p-def}, we have $p(K, L'-1) \leq L'-K-1$ and $p(K-1, L'-1) \leq L'-K$. Hence, we can bound
\begin{align*}
    L - (L-L'+2)\,p(K,L') + (L-L'+1)\,p(K, L'-1) &\leq L - (L-L'+2)(L'-K) + (L-L'+1)(L'-K-1) \\
    & = K - 1,
\end{align*}
and $$\frac{p(K-1, L'-1)}{L-p(K-1, L'-1)} \leq \frac{L'-K}{L-L'+K}.$$
Combining these two bounds, the left-hand side of \eqref{eq:g-bound-key} is at most
\[
    \frac{(K-1)(L'-K)}{L-L'+K} \leq L'-K = p(K, L'),
\]
where the inequality holds because $L' \leq L$. Hence, \eqref{eq:g-bound-key} holds in this case.

\paragraph{Case 2: $p(K, L') = \frac{L}{K}(H_{L-1} - H_{L-L'})$.}
Since $p(K, L'-1) \leq \frac{L}{K}(H_{L-1} - H_{L-L'+1})$ by definition, we can derive
\begin{align}
    &L - (L-L'+2)\,p(K, L') + (L-L'+1)\,p(K, L'-1) \notag \\
    &\qquad \leq L - (L-L'+2)\frac{L}{K}(H_{L-1} - H_{L-L'}) + (L-L'+1)\frac{L}{K}(H_{L-1} - H_{L-L'+1}) \notag \\
    &\qquad = L - \frac{L}{K}\Big((L-L'+2)(H_{L-1} - H_{L-L'}) - (L-L'+1)\big((H_{L-1} - H_{L-L'}) - (L-L'+1)^{-1}\big)\Big) \notag \\
    &\qquad = L - \frac{L}{K}\big(H_{L-1} - H_{L-L'} + 1\big) = \frac{L}{K}\big(K - 1 - (H_{L-1} - H_{L-L'})\big). \label{eq:g-bound-key-lhs}
\end{align}
where the first equality uses the identity $H_{L-1} - H_{L-L'+1} = (H_{L-1} - H_{L-L'}) - (L-L'+1)^{-1}$.

If $K - 1 - (H_{L-1} - H_{L-L'})\leq 0$, then the left-hand side of \eqref{eq:g-bound-key} is non-positive and \eqref{eq:g-bound-key} holds trivially. Otherwise, let us consider the following two sub-cases based on the value of $p(K-1,L'-1) = \min\big\{L'-K,\; \frac{L}{K-1}(H_{L-1} - H_{L-L'+1})\big\}$.
\begin{itemize}
    \item 
\textbf{Sub-case 2a: $p(K-1, L'-1) = \frac{L}{K-1}(H_{L-1} - H_{L-L'+1})$.}
Since $p(K-1, L'-1) \leq L'-K < L$ by definition, the condition implies that $H_{L-1} - H_{L-L'+1} < K - 1$. Hence, we know that
\[
    \frac{p(K-1, L'-1)}{L - p(K-1, L'-1)} = \frac{H_{L-1} - H_{L-L'+1}}{K - 1 - (H_{L-1} - H_{L-L'+1})} > 0.
\]
Together with \eqref{eq:g-bound-key-lhs}, this yields that the left-hand side of \eqref{eq:g-bound-key} is at most
\[
    \frac{L\big(K - 1 - (H_{L-1} - H_{L-L'})\big)\,(H_{L-1} - H_{L-L'+1})}{K\big(K - 1 - (H_{L-1} - H_{L-L'+1})\big)}.
\]
This is upper bounded by $p(K, L') = \frac{L}{K}(H_{L-1} - H_{L-L'})$ if and only if
\[
    \big(K - 1 - (H_{L-1} - H_{L-L'})\big)\,(H_{L-1} - H_{L-L'+1}) \leq (H_{L-1} - H_{L-L'})\,\big(K - 1 - (H_{L-1} - H_{L-L'+1})\big),
\]
or equivalently,
\[
    (K-1)(H_{L-1} - H_{L-L'+1}) \leq (K-1)(H_{L-1} - H_{L-L'}).
\]
This holds since $H_{n}$ is increasing in $n$. Hence, \eqref{eq:g-bound-key} holds in Sub-case 2a.

\item\textbf{Sub-case 2b: $p(K-1, L'-1) = L'-K$.}
In this case, one has $$\frac{p(K-1, L'-1)}{L-p(K-1, L'-1)} = \frac{L'-K}{L-L'+K}.$$ Combining this with \eqref{eq:g-bound-key-lhs}, we can control the left-hand side of \eqref{eq:g-bound-key} by
\[
    \frac{L\big(K -1 - (H_{L-1} - H_{L-L'})\big)(L'-K)}{K(L-L'+K)}.
\]
This is upper bounded by $p(K, L') = \frac{L}{K}(H_{L-1} - H_{L-L'})$ if and only if
\[
    \big(K -1 - (H_{L-1} - H_{L-L'}) \big)(L'-K) \leq (H_{L-1} - H_{L-L'})(L-L'+K),
\]
or equivalently,
$$(K-1)(L'-K) \leq (H_{L-1} - H_{L-L'})L.$$ 
Since the condition $p(K-1, L'-1) = L'-K$ means that $\frac{L}{K-1}(H_{L-1} - H_{L-L'+1}) \geq L'-K$, we obtain
\[
    L(H_{L-1} - H_{L-L'}) = L(H_{L-1} - H_{L-L'+1}) + \frac{L}{L-L'+1} > (K-1)(L'-K),
\]
which verifies the desired inequality.
Thus \eqref{eq:g-bound-key} also holds in Sub-case 2b.

\item Combining these two sub-cases establishes \eqref{eq:g-bound-key} in Case 2.
\end{itemize}

Finally, combining Cases 1 and 2 concludes that \eqref{eq:g-bound-key} holds for $(K, L')$, thereby finishing the inductive step. The proof of Lemma~\ref{lem:g-bound} is now complete.

\paragraph{Proof of the identity \eqref{eq:R-Psi}.}
It remains to prove the identity \eqref{eq:R-Psi}.
Recall the definition of $\Psi_\dtc(K, L')$, we can write
\begin{align*}
\Psi_\dtc(K, L') - 1 &= \sum_{\ell=1}^{L'-K+1} \prod_{i=1}^{\ell-1} \frac{(L-L'+i)f_\dtc(K-1, L' - i)}{1 + (L-L'+i+2)f_\dtc(K-1, L' - i - 1)} - 1 \\ 
& =   \sum_{\ell=2}^{L'-K+1} \prod_{i=1}^{\ell-1} \frac{(L-L'+i)f_\dtc(K-1, L' - i)}{1 + (L-L'+i+2)f_\dtc(K-1, L' - i - 1)},
\end{align*} 
where the last equality follows from the fact that the $\ell = 1$ term in the sum is $1$.

We can further derive
\begin{align*}
    \Psi_\dtc(K, L') - 1 
    &= \frac{(L-L'+1)f_\dtc(K-1, L'-1)}{1 + (L-L'+3)f_\dtc(K-1, L'-2)} \sum_{\ell=2}^{L'-K+1} \prod_{i=2}^{\ell-1} \frac{(L-L'+i)f_\dtc(K-1, L' - i)}{1 + (L-L'+i+2)f_\dtc(K-1, L' - i - 1)} \\ 
    &= \frac{(L-L'+1)f_\dtc(K-1, L'-1)}{1 + (L-L'+3)f_\dtc(K-1, L'-2)} \sum_{\ell=1}^{L'-K} \prod_{i=1}^{\ell-1} \frac{(L-L'+1+i)f_\dtc(K-1, L' - 1 - i)}{1 + (L-L'+1+i+2)f_\dtc(K - 1, L' - 1 - i - 1)} \\
    &= \frac{(L-L'+1)f_\dtc(K-1, L'-1)}{1 + (L-L'+3)f_\dtc(K-1, L'-2)} \Psi_\dtc(K, L'-1).
\end{align*}
where the first line extracts the $i = 1$ factor in the product and the second line holds by re-indexing indices.
This finishes the proof of \eqref{eq:R-Psi}.

%% file: proof-aux.tex
\section{Proof of auxiliary lemmas}
\subsection{Proof of \eqref{eq:train-sample-decouple}}
\label{sec:proof-train-sample-decouple}
We follow the argument in \citep{li2025breaking} to decompose the prediction error from the sampling error. 

For any fixed unmasking sets $ S = (S^{(1)}, S^{(2)}, \ldots, S^{(K)} ) $, we have
\begin{align*}
    &\KL\big(p_{X^{(0)}}(\cdot) \,\|\, p_{Y_{\out}^S}(\cdot)\big) - \KL \big(p_{X^{(0)}}(\cdot) \,\|\, p_{\ol Y^S_{\out}}(\cdot)\big) \\
    &\quad = \int_{\mathbb{X}^L} p_{X^{(0)}}(x) \log \frac{p_{\ol Y^S_{\out}}(x)}{p_{Y_{\out}^S}(x)} \diff x \\
    &\quad \stackrel{\text{(i)}}{=} \sum_{k=1}^K \int_{\mathbb{X}^L} p_{X^{(0)}}(x) \log \frac{p^\star(x_{S^{(k)} }  \mid X_{U^{(k-1)} } = x_{U^{(k-1)} }  )}{\widehat{p}(x_{S^{(k)} }  \mid X_{U^{(k-1)} } = x_{U^{(k-1)} }  )} \diff x \\
    &\quad \stackrel{\text{(ii)}}{=} \sum_{k=1}^K \int_{\mathbb{X}^L} p_{X^{(0)}}(x) \sum_{i \in S^{(k)} } \log \frac{p_i^\star(x_i \mid X_{U^{(k-1)} } = x_{U^{(k-1)} } )}{\widehat{p}_i(x_i \mid X_{U^{(k-1)} } = x_{U^{(k-1)} } )} \diff x \\
    &\quad \stackrel{\text{(iii)}}{=} \mathbb{E}_{\tau, X^{(0)}} \Bigg[ \frac{L}{|S^{(\tau)}| } \sum_{i \in S^{(\tau)} } \log \frac{p_i^\star(X^{(0)}_i \mid X^{(\tau)} )}{\widehat{p}_i(X^{(0)}_i\mid X^{(\tau)} )} \, \Big| \, S \Bigg],
\end{align*}
where (i) follows from the chain rule and the definition of $ p_{Y_{\out}^S} $ and $ p_{\ol Y^S_{\out}} $;
(ii) uses the fact that $ p^{\star}  $ and $ \hat{p} $ are product distributions; 
(iii) holds because of the definition $ \bP\{\tau = k \mid S\} = \frac{1}{L}|S^{(\tau)} | $.
Taking the expectation over $ S \sim \pi $, we obtain \eqref{eq:train-sample-decouple}:
\begin{align*}
    &\mathbb{E}_{S \sim \pi} \Big[ \KL\big(p_{X^{(0)}}(\cdot) \,\|\, p_{Y_{\out}^S}(\cdot)\big) - \KL\big(p_{X^{(0)}}(\cdot) \,\|\, p_{\ol Y^S_{\out}}(\cdot)\big) \Big] \\
    &\quad = \mathbb{E}_{\tau, X^{(0)}, S} \Bigg[ \frac{L}{|M^{(\tau)} |} \sum_{i \in M^{(\tau)} } \log \frac{p_i^\star(X^{(0)}_i \mid X^{(\tau)} )}{\widehat{p}_i(X^{(0)}_i\mid X^{(\tau)} )} \Bigg] = \veps_{\train}(\pi).
\end{align*}

\subsection{Proof of Lemma \ref{lem:single_batch_error}}
\label{sec:proof-single_batch_error}

Note that the sampled distribution of the first batch is the product of conditional marginals:
\[
    \left(\prod_{i \in S} p_{X_i}(\cdot \mid x_{\cI^\mathsf{c}})\right)(x_S) = \prod_{i \in S} p_{X_i}(x_i \mid x_{\cI^\mathsf{c}}).
\]
Hence, the KL divergence can be expressed as
\begin{align*}
    \KL\big(p_{X_S}(\cdot \mid x_{\cI^\mathsf{c}}) \| \prod_{i \in S} p_{X_i}(\cdot \mid x_{\cI^\mathsf{c}})\big) &= \sum_{x_S} p_{X_S}(x_S \mid x_{\cI^\mathsf{c}}) \log \frac{p_{X_S}(x_S \mid x_{\cI^\mathsf{c}})}{\prod_{i \in S} p_{X_i}(x_i \mid x_{\cI^\mathsf{c}})} \\
    &= \sum_{i \in S} H(X_i \mid x_{\cI^\mathsf{c}}) - H(X_S \mid x_{\cI^\mathsf{c}}) \\
    &= \TC(X_S \mid x_{\cI^\mathsf{c}}).
\end{align*}

\subsection{Relationship between TC and DTC}
\label{sec:proof-dtc-tc-relation}
\begin{lemma}
\label{lem:dtc-tc-relation}
For any random vector $X = (X_1, X_2, \ldots, X_n)$, we have
\[
    \TC(X) \leq (n-1)\DTC(X).
\]
\end{lemma}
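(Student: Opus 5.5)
We need to prove $\TC(X)\le (n-1)\DTC(X)$. The plan is to express both quantities through conditional mutual informations along a chain-rule ordering and compare them term by term.

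Fix the ordering $1,\dots,n$. By the chain rule,
\[
\TC(X)=\sum_{i=1}^n H(X_i)-\sum_{i=1}^n H(X_i\mid X_{<i})=\sum_{i=2}^n I(X_i;X_{<i}),
\]
and similarly
\[
\DTC(X)=\sum_{i=1}^n\big(H(X_i\mid X_{<i})-H(X_i\mid X_{-i})\big)=\sum_{i=1}^{n-1} I(X_i;X_{>i}\mid X_{<i}).
\]
The $i=n$ term of the DTC sum vanishes because $X_{>n}$ is empty. So TC is a sum of $n-1$ terms, each of the form $I(X_i;X_{<i})$, and DTC is a sum of nonnegative conditional mutual informations. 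It therefore suffices to show that each term of TC is at most $\DTC(X)$, that is,
\[
I(X_i;X_{<i})\le \DTC(X)\qquad\text{for every } i .
\]

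To get this, I would use the DTC expansion with respect to a different ordering, one that places $X_i$ last. DTC is symmetric, so its value does not depend on the ordering. Order the coordinates as $X_{<i}$ first, then $X_{>i}$, then $X_i$, and regroup the chain-rule terms. A cleaner route is the identity
\[
\DTC(X)=H(X)-\sum_j H(X_j\mid X_{-j}),
\]
which I would combine with a block decomposition. Split the variables into $A=X_{<i}$, $B=X_i$, and $C=X_{>i}$, and apply the inequality
\[
\DTC(X)\ \ge\ I(X_B;X_A\mid X_C)+\dots
\]
Rather than chase this general form, the concrete plan is as follows. Expand DTC with the ordering $(X_i, X_{<i}, X_{>i})$. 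The first term is $I(X_i;X_{-i})$, and all the other terms are nonnegative, so
\[
\DTC(X)\ge H(X_i)-H(X_i\mid X_{-i})=I(X_i;X_{-i})\ge I(X_i;X_{<i}),
\]
where the last step is monotonicity of mutual information in the conditioning set. Summing this bound over $i=2,\dots,n$ gives $\TC(X)\le (n-1)\DTC(X)$.

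The only point needing care is checking that the first term of the DTC chain expansion under the new ordering is indeed $H(X_i)-H(X_i\mid X_{-i})$. In the expansion the $j$-th term is $H(X_{\sigma_j}\mid X_{\sigma_{<j}})-H(X_{\sigma_j}\mid X_{-\sigma_j})$; for $j=1$ the first conditioning set is empty, so the term is exactly $H(X_i)-H(X_i\mid X_{-i})$. The nonnegativity of the remaining terms follows because conditioning reduces entropy: each $X_{\sigma_{<j}}$ is a subset of $X_{-\sigma_j}$. I do not expect a real obstacle; the argument is routine once the ordering trick is chosen. The same argument applies verbatim to the conditional distributions given $x_{\cI^\setc}$, which is how the lemma is used in the base case of Lemma~\ref{lem:dtc-g}.
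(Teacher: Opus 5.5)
Your proof is correct and is essentially the paper's argument: the paper sums the chain rule over the $n$ cyclic orderings and lower-bounds every non-initial term by $H(X_k\mid X_{-k})$, which for each ordering is exactly your inequality $\DTC(X)\ge I(X_i;X_{-i})$ with $X_i$ placed first. The paper uses all $n$ copies of this inequality and rearranges, whereas you use $n-1$ copies together with $\TC(X)=\sum_{i\ge 2}I(X_i;X_{<i})$. Two small fixes: the step $I(X_i;X_{<i})\le I(X_i;X_{-i})$ is monotonicity in the second argument (via $I(X_i;X_{-i})=I(X_i;X_{<i})+I(X_i;X_{>i}\mid X_{<i})$), not in the conditioning set, and the abandoned ``block decomposition'' sentence should be deleted.
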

\begin{proof}
Consider the $n$ cyclic permutations $\sigma_0, \sigma_1, \ldots, \sigma_{n-1}$ of $[n]$, where $\sigma_j(k) = ((k + j - 1) \bmod n) + 1$. Applying the chain rule of entropy to each cyclic permutation $\sigma_j$ gives
\begin{align}
    H(X_{[n]}) = \sum_{k=1}^n H\big(X_{\sigma_j(k)} \mid X_{\sigma_j(1)}, \ldots, X_{\sigma_j(k-1)}\big). \label{eq:chain-rule-cyclic}
\end{align}
Summing \eqref{eq:chain-rule-cyclic} over all $n$ cyclic permutations yields
\begin{align}
    n \, H(X_{[n]}) = \sum_{j=0}^{n-1} \sum_{k=1}^n H\big(X_{\sigma_j(k)} \mid X_{\sigma_j(1)}, \ldots, X_{\sigma_j(k-1)}\big). \label{eq:sum-cyclic}
\end{align}
Now we examine the contribution of each $X_i$ to the right-hand side. Across the $n$ cyclic permutations, $X_i$ appears exactly once in each position $k = 1, \ldots, n$. In particular:
\begin{itemize}
    \item When $k = 1$: $X_i$ is conditioned on nothing, contributing $H(X_i)$.
    \item When $k = n$: $X_i$ is conditioned on all others, contributing $H(X_i \mid X_{[n] \setminus \{i\}})$.
    \item For $k = 2, \ldots, n-1$: $X_i$ is conditioned on some strict subset of $[n] \setminus \{i\}$.
\end{itemize}
Since conditioning reduces entropy, i.e., $H(X_i \mid X_T) \geq H(X_i \mid X_{[n] \setminus \{i\}})$ for any $T \subseteq [n] \setminus \{i\}$, the $n - 2$ middle terms are each lower bounded by $H(X_i \mid X_{[n] \setminus \{i\}})$. Therefore, for each $X_i$, its total contribution satisfies
\begin{align}
    \sum_{k=1}^n H(X_{\sigma_j(k)} \mid X_{\sigma_j(1)}, \ldots, X_{\sigma_j(k-1)}) \geq H(X_{\sigma_j(1)} ) + (n-1) H(X_{\sigma_j(1)}  \mid X_{[n] \setminus \{i\}}). \label{eq:per-variable-lb}
\end{align}
Summing \eqref{eq:per-variable-lb} over all $i \in [n]$ and substituting into \eqref{eq:sum-cyclic}, we obtain
\begin{align}
    n \, H(X_{[n]}) \geq \sum_{i=1}^n H(X_i) + (n-1) \sum_{i=1}^n H(X_i \mid X_{[n] \setminus \{i\}}). \label{eq:nH-lb}
\end{align}
Rearranging \eqref{eq:nH-lb} gives
\begin{align*}
    \sum_{i=1}^n H(X_i) - H(X_{[n]}) \leq (n-1) \bigg(H(X_{[n]}) - \sum_{i=1}^n H(X_i \mid X_{[n] \setminus \{i\}}) \bigg).
\end{align*}
This is equivalent to the desired inequality:
\[
    \TC(X) \leq (n-1)\,\DTC(X).
\]
\end{proof}